%% file: bare_jrnl.tex
\documentclass[journal,onecolumn]{IEEEtran}
\usepackage{amsmath}
\usepackage{amssymb}
\usepackage{nicefrac}
\usepackage{color}
\definecolor{ao(english)}{rgb}{0.0, 0.5, 0.0}

\usepackage{rotating}
\usepackage{comment}

\usepackage{caption}
\usepackage{subcaption}

\usepackage{url}
\usepackage{tabularx}
\usepackage{multirow}
\usepackage[skip=1ex]{caption}

\newtheorem{theorem}{Theorem}[section]

\usepackage[utf8]{inputenc}
\usepackage{babel}

\begin{document}
%
\title{DeepFlow: Two Convergent Differentiable Approaches for Pairwise Conditional Random Fields}
%
%
%

\author{John~S.H.~Baxter*
\thanks{\hspace{-2.5mm}(*) Laboratoire Traitement du Signal et de l'Image (INSERM UMR 1099), Universit\'e de Rennes, Rennes, France}
\thanks{\hspace{-2.5mm}J.S.H. Baxter is the corresponding author. E-mail: john.baxter@univ-rennes.fr.}
}

%
%

\markboth{ }%
{Baxter \& Jannin: DeepFlow Supplementary Material}
%



\maketitle

%
\IEEEpeerreviewmaketitle
\input{1_intro}
\input{2_theory}
\input{3_exp_marg}
\input{4_exp_learning}
\input{5_discussion}

\bibliographystyle{IEEEtran}
\bibliography{deepflow.bib}

\newpage
\begin{appendices}
\input{A1_plbp_1}
\input{A2_plbp_2}

\input{A3_mfi_1}
\input{A4_mfi_2}

\end{appendices}

\end{document}

%% file: 1_intro.tex
\begin{abstract}
Basic probabilistic models form the basis of modern deep learning largely by defining the final activation layer of a network and allowing the networks outputs to be interpreted through a probabilistic lens. By increasing the complexity of these final activation layers to include features such as spatial regularisation, allow for these models to improve in performance without requiring significantly more data. The challenge however is to ensure that these more complex probabilistic graphical models adapt well to being used in tandem with deep learning, that is, ensuring they converge and have meaningful derivatives. The goal of this paper is to show two algorithms with provable convergence properties to allow this type of regularisation for image and signal classification models. Through both rigorous mathematical analysis and in silico experiments, we can show the differing performance between these solution types.
\end{abstract}

\section{Introduction}
\IEEEPARstart{P}{robabilistic} models have a long history in both physics and artificial intelligence and form the basis of our understanding of current machine learning \cite{koller2009probabilistic}.
At its most general level, a probabilistic model can be described by the following equation:
\begin{equation*}
    P[u|\psi] = \frac{e^{E_{[\psi,u]}}}{Z_{[\psi]}}
\end{equation*}
where $\psi$ are the model's parameters and $u$ is a possible configuration, i.e. a possible way of assigning values to variables. The formulation involves an energy equation $E_{[\psi,u]}$ which expresses the log-likelihood of a particular configuration and $Z_{[\psi]}$ is the \textit{partition function} which normalises the equation to be a valid probability distribution by summing over all valid configurations:
\begin{equation*}
    Z[\psi] = \sum_{u\in\mathbb{U}}e^{E_{[\psi,u]}}
\end{equation*}
The benefit of probabilistic graphical models (PGMs) is that they simplify the potentially complex energy term, $E_{[\psi,u]}$, as the sum of a large number of simpler terms each corresponding to cliques in a particular graph whose vertices correspond to particular variables. Given the graph and the parameters (often expressed as weights in this summation), there are three particularly relevant problems: (i) estimation of the exact probability of a given configuration which relies on an estimation of the partition function $Z_[\psi]$, (ii) estimation the marginal probability distribution of a particular configuration within a single clique, and (iii) estimation of the maximum probability configuration. Even with the simplification of the energy function, these problems can be difficult or intractable, leading to an interesting in particular families of PGMS

One particularly interesting subset are called \textit{conditional random fields} (CRFs) in which the cliques of interest are limited to two vertices, thus only having energy terms and parameters associated with vertices and edges. These models proved to be very influential in computer vision and medical image processing due to the proliferation of a group of techniques known as \textit{max-flow/min-cut} or \textit{graph-cuts} which could efficiently estimate the maximum probability configuration of CRFs where the pairwise terms had a regularising influence \cite{chen2018survey}. At the time, researchers found that even relatively simple \textit{bias terms} (the parameters associated with individual vertices) and \textit{smoothness} or \textit{regularisation terms} (the parameters associated with individual edges) could produce high-performance results for image segmentation \cite{chen2018survey} and stereoscopic depth estimation \cite{bleyer2013stereo} problems.

Since the proliferation of convolutional neural networks (CNNs), especially those specifically designed for image-to-image problems such as U-Nets \cite{ronneberger2015u}, the use of CRFs has largely fallen to the wayside as these models tend to produce much more highly accurate bias terms to a point where regularisation terms are largely seen as unnecessary. From a technical point of view, relying on models constructed only of bias terms has distinct advantages, notably that the three aforementioned problems above are trivially computable. 

Despite the movement towards more complex data terms and away from regularisation, there are several potential reasons why we may want to integrate CRF back into our deep learning frameworks. Many frameworks have found that post-processing with graph-cuts allows for improved application-time predictions from deep learning based segmentation models \cite{}. There has also been raised interest in end-to-end learning, integrating these post-processing techniques into training-time using differentiable estimators. For example, Xie \textit{et al.} \cite{xie2025gc} use a binary-graph-cut-specific loss that allows for them to use the non-differentiable graph-cuts operator (for finding maximum probability configurations) which side-stepping the issue of its (mostly) zero gradients which would prevent model learning. Another approach has been to estimate the marginal probability distribution for each label (a fundamentally differentiable problem) using a series mean-field estimation operators which can be automatically differentiated \cite{zheng2015conditional,arnab2016higher,xu2020fast,guan2024neural}. The benefit of such an approach is that the marginal distribution provides a reasonable equivalent to traditional output activation functions for segmentation (sigmoid for binary segmentation and softmax for multi-region segmentation) and can be used with any existing loss function. However, they do require the number of iterations to be set ahead of time and no in-depth investigation (to the best of our knowledge) has been performed to ensure that increasing the number of said iterations does not lead to instability due to a lack of convergence. In addition, iterative mean-field estimation is not the only way to estimate marginal probability distributions and relies on approximations that can lead to lower performance compared to other algorithms, such as loopy belief propagation which has long been seen as the \textit{de facto} algorithm for marginal probability estimation since its inception in the late 1980's \cite{pearl1987distributed}. These lead to two distinct areas of investigation for improving the use of simple regularisation terms and CRFs alongside deep learning architectures.

\subsection{Contributions}

The paper presents two iterative marginal probability estimation frameworks: one based on parallel loopy belief propagation and the other on iterative mean-field estimation. These two algorithms have been chosen as they can be implemented in a highly parallelisable form in which all information is updated for the entire graph simultaneously, rather than in a particular algorithmically defined order. These parallel algorithms are particularly useful for modern GPGPU computing as they are inherently parallel.

This article will start with a theoretical section giving specific, easily verifiable constraints on the regularisation that guarantee convergence and differentiability. It will then show the relative performance of these two approaches for marginal probability estimation on synthetic Ising and Potts models. The last section concerns the application of these regularisation frameworks in larger artificial neural network models under various loss functions.

Code for all the solvers as well as the experiments is provided open-source at \url{https://github.com/JSHBaxter/DeepFlow/}. This includes the different versions of the marginal probability estimators (mean-field iteration and loopy belief propagation), the max-flow-based \textit{maximum a posteriori} configuration solver, and the corresponding negative log-likelihood functions. For models, it includes the binary model (which degenerates to sigmoid if no interaction terms are used), the Potts model (which degenerates to softmax) and the hierarchical version of the Potts model which allows for more complex regularisation structures. The code also includes the files for all three synthetic experiments.

%% file: 2_theory.tex
\section{Convergent and differentiable marginal probability estimation}
Although the paper presents three classes of model, a binary model, a Potts model, and a hierarchical model, we shall generally focus on the latter, being the most general. We  over nodes $x$ and edges $e$ subject to:
\begin{equation*}
    u_L(x) \in \{0,1\} \mbox{, } u_P(x) = \mbox{\hspace{-3mm}}\sum_{L\in P.C^*\cap\mathcal{L}}\mbox{\hspace{-3mm}}u_L(x) \text{, and } u_S(x)=1
\end{equation*}
and the energy equation formulated as:
\begin{equation*}
    E_{[\psi,u]} \hspace{-0.5mm}=\hspace{-0.5mm} \sum_L \hspace{-1mm}\left(\sum_x \hspace{-0.6mm}\psi_L(x) u_L(x) - \hspace{-3mm}\sum_{e=(x,y)}\hspace{-2.6mm} \psi_L(e)|u_L(x)-u_L(y)| \right) \label{eq:model}
\end{equation*}
noting the sign of the regularisation term.
Each parameter term $\psi$ is non-negative, although said condition is only a limitation for the interaction terms $\psi_L(e)$. An example of a hierarchy is given below. These hierarchies are composed of three types of labels: `leaf'-labels (the set of which we call $\mathcal{L}$) together form a partition of the image and are all mutually disjoint; `branch'-labels are composed of other labels, grouping them together based on their spatial organisation, i.e. if the branch label is a geometrically simpler object; and the `source'-label, $S$, represents the entire image (so $u_S(x) = 1$ for all $x$) and thus does not contribute to the interaction terms in Eq. \ref{eq:model}. With the exception of the source label, each other label, $L$ has a single parent label, denoted as $L.P$, which allows the labels to be organised as a directed acyclic graph such as a directed tree, with $S$ as the root. An example of which is given in Figure \ref{fig:my_label}. We use the Kleene star $L.P^*$ to denote the set $\{L,L.P,L.P.P, \ldots, S\}$, i.e. the inclusive set of all ancestors, and well as a similar set $L.C^* = \{L' | L \in L'.P^*\}$ i.e. the inclusive set of all descendants. Certain families of configurations have simplified forms, such as a Potts model which has no branch-labels, meaning that the only interaction terms are associated with the leaf-labels. This makes it analogous to a spatially regularised soft-max function.

\begin{figure}[ht]
    \centering
    \includegraphics[width=0.47575\textwidth]{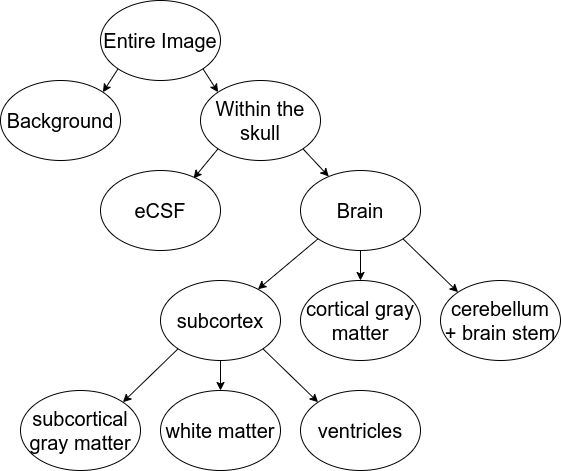}
    \caption{Example hierarchy used in the brain tissue segmentation experiment. The leaf-labels include the background, external cerebral spinal fluid, cortical gray matter, subcortical gray matter, white matter, ventricles, and a combined label for the cerebellum and brain stem. The additional branch-labels are the within-skull label, the brain, and the sub-cortex.}
    \label{fig:my_label}
\end{figure}

As stated in the previous section, the most common method for addressing these spatially regularised activation functions has been via iterative mean-field estimation \cite{zheng2015conditional,arnab2016higher,xu2020fast,guan2024neural} which estimates a marginal by taking a current estimate and updating it to better reflect the various terms involved in the model. The benefit of such updates for MFI is that they are mathematically simple and thus can easily allow for automatic differentiation. Loopy belief propagation involves more complex update steps but also can be automatically differentiated. However, the issue with automatic differentiation is that a certain amount of state must be preserved of all iterations. This means that, in training time, the models experience linear memory growth with respect to the number of iterations. Most methods address this by using a relatively small number of iterations while privileging models that have denser connectivity and thus more long-range regularisation, such as fully-connected CRFs \cite{zheng2015conditional} but these require parameterised models of the interaction terms and thus lack the flexibility of sparser, more local regularisation terms. That being said, one of our goals is to provide a rigorous framework for any pairwise CRF with categorical labels and regularising interaction terms.

Regardless of type or degree of connectivity, these algorithms are fundamentally iterative, they can be interpreted as trying to find the fixed point of some function, $f$ that reflects a constraint fulfilled only by (an approximation of) the marginal distribution. The iterative nature of these functions immediately raises questions such as:
\begin{itemize}
    \item Does the forward pass converge?
    \item Does the derivative of an infinite forward pass exist? and
    \item Can we approximate the true derivative using finite memory with respect to the number of iterations?
\end{itemize}
The last point is of particular technical importance as we will want to limit ourselves to algorithms that only require information about the input to the model (i.e. $\psi$) and the output of the model (i.e. the converged marginal estimates) without saving any information about the intermediate steps. Answering these questions in the affirmative leads to several important properties, notably that one can perform gradient backpropagation over them regardless of what comes before or after them in a network. This is particularly beneficial specifically for medical image segmentation in which multiple different losses have been proposed and are often used in combination \cite{ma2021loss}. This also means they can be used in models with adversarial losses \cite{pan2020loss} where there are some other components (i.e. a discriminator) between the marginal estimates and the specific loss functions.

This is the key difference between the algorithms presented in this paper and similar previous loss-agnostic works. Previous works such as Zheng \textit{et al.}'s \cite{zheng2015conditional} interpretation of finding the MAP probability via an RNN-like sequence of operators, Xu \textit{et al.}'s \cite{xu2020fast} differentiable message passing, or Guan \textit{et al.}'s \cite{guan2024neural} neural MRF all accumulate information during each iteration of the forward pass necessary to calculate an exact derivative during the backwards pass. This means that their methods are largely limited to relatively small numbers of iterations (five in the case of Zheng \textit{et al.} \cite{zheng2015conditional}, ten for Kuck \cite{kuck2020belief}, twenty for Satorras \& Welling \cite{satorras2021neural}, fifty for Xu \textit{et al.} \cite{xu2020fast} and ``a fixed number'' for Guan \textit{et al.} \cite{guan2024neural}) which is problematic for models that require a large number of iterations. In these cases, either the number of iterations must be increased which could cause issues with excessive memory consumption, or the results would be inaccurate and not take into account longer-range effects. This limits the application of these models to problems such as volumetric medical image processing in which both range and memory consumption are of large concern. To give an example, a standard 256-by-256-by-256 volumetric MR image would require at least 767 iterations to allow for information from one edge of the volume to reach the other edge, and saving this number of fully volumetric images is well outside the capability of modern GPGPU. (If we assume that only a single 16-bit float is saved for each voxel for computing the backwards pass, this would require over 191 Gb per volume, just to store the required information for computing the derivative for the backwards pass. And this is not considered to be a particularly highly-resolved structural MR image.)

Determining the answer to our three questions above encourages us to consider any initialization, especially as the random weights at earlier training stages. For this most general case, we rely on the following theorem to define constraints on parameterizing these models.

\vspace{5mm}

\begin{theorem}\label{the:main_conv}
Consider a function $f(q,\psi): Q \times \Psi \to Q$ where $Q$ and $\Psi$ are subspaces of $\mathbb{R}^{N_q}$ and $\mathbb{R}^{N_\psi}$ which is $C^1$ smooth with respect to $q$. Let $D_y f$ be the derivative of $f$ with respect to the arguments $y$ (either $q$ or $\psi$). If there exists a bound $b < 1$ such that the spectral norm $\rho(D_q f) \leq b$ then iterating $f(q,\psi)$ (while keeping $\psi$ constant) converges linearly to a unique fixed point.
\end{theorem}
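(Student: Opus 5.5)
The plan is to reduce the statement to the Banach fixed-point theorem. Fix $\psi \in \Psi$ and write $g(q) = f(q,\psi)$, a map $Q \to Q$. The hypothesis gives $\|D_q f(q,\psi)\|_2 \le b < 1$ for all $q \in Q$, and I read the statement's $\rho(D_q f)$ as this spectral norm. For the mean value argument I will assume $Q$ is convex, which holds for the products of simplices or boxes of marginal estimates used later. If $Q$ were not convex, I would instead apply the argument along each segment of $\mathbb{R}^{N_q}$ on which $f$ is defined.

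\textbf{Step 1: $g$ is a contraction.} Take $q, q' \in Q$ and set $\gamma(t) = q' + t(q-q')$ for $t \in [0,1]$, which stays in $Q$ by convexity. Since $g$ is $C^1$, the fundamental theorem of calculus gives
\[
g(q)-g(q') = \int_0^1 D_q f(\gamma(t),\psi)\,(q-q')\,dt .
\]
Taking Euclidean norms and using $\|D_q f\|_2 \le b$ yields $\|g(q)-g(q')\|_2 \le b\,\|q-q'\|_2$. This is the integral form of the mean value inequality, which is the form valid for vector-valued maps.

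\textbf{Step 2: existence and uniqueness.} From any $q_0 \in Q$, define $q_{k+1} = g(q_k)$. By induction, $\|q_{k+1}-q_k\| \le b^k \|q_1-q_0\|$. The triangle inequality then gives
\[
\|q_m - q_k\| \le \frac{b^k}{1-b}\|q_1-q_0\| \quad \text{for } m > k,
\]
so the sequence is Cauchy and converges in $\mathbb{R}^{N_q}$ to some $q^*$. Its limit must lie in $Q$, so I need $Q$ to be closed. The natural spaces here (closed simplices, or all of $\mathbb{R}^{N_q}$ for log-domain messages) are closed, and I would state this as a standing assumption. Continuity of $g$ then gives $g(q^*) = q^*$. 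Uniqueness follows because two fixed points would satisfy $\|q^*-\tilde q\| \le b\,\|q^*-\tilde q\|$, which forces $q^* = \tilde q$.

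\textbf{Step 3: linear rate.} Since $\|q_{k+1}-q^*\| = \|g(q_k)-g(q^*)\| \le b\,\|q_k-q^*\|$, the iterates converge linearly with ratio $b$. The a priori bound $\|q_k-q^*\| \le \frac{b^k}{1-b}\|q_1-q_0\|$ also follows directly from Step 2.

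\textbf{Main obstacle.} The analysis itself is routine. The difficulty is in the hypotheses the statement leaves implicit: that $Q$ is closed and convex (or at least that segments between points of $Q$ stay where $f$ is $C^1$), and that ``spectral norm'' means the operator $2$-norm rather than the spectral radius. A bound on the spectral radius alone would not yield a contraction in a fixed norm. I would make these interpretations explicit at the start of the proof. For the spectral-radius reading, the only fallback is a weaker, local result obtained by passing to an adapted norm near a fixed point.
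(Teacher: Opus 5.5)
Your proof is correct under your reading, and its skeleton is the paper's: the straight-line integral form of the mean value inequality gives a contraction, and Banach's theorem does the rest. You re-derive Banach, which makes explicit the convexity and completeness of $Q$ that the paper uses without stating; both are automatic if ``subspaces'' means linear subspaces.

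The real difference is where the contracting norm comes from. The paper reads $\rho$ as the spectral radius and uses $\rho(A)=\inf\|A\|$ over induced norms to pick a norm with $|D_qf|\le b'<1$. That norm is tailored to a single matrix, whereas the line integral needs one norm that bounds $D_qf$ at every point of the segment simultaneously. Your objection to the spectral-radius reading is well founded. The matrices $A_1=\left(\begin{smallmatrix}0&2\\0&0\end{smallmatrix}\right)$ and $A_2=A_1^{T}$ both have spectral radius $0$, yet $A_1A_2$ has spectral radius $4$, so no induced norm makes both contractive. A pointwise spectral-radius bound therefore does not yield a uniform norm bound, and by working with a fixed operator norm you avoid the weak step in the paper's argument.

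One adjustment is worth making. Step 1 never uses the Euclidean structure, so state it for an arbitrary fixed vector norm and its induced matrix norm. That is the version the paper actually relies on later: its LBP and MFI constraints bound the $p_1$ and $p_\infty$ induced norms rather than the spectral norm. A $2$-norm-only statement would not cover them, since a sub-unit $p_\infty$ induced norm does not imply a sub-unit spectral norm.
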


\vspace{5mm}

\noindent \underline{Proof:} 
For the first point, we will show that the function is contractive with respect to $q$. Under the Banach fixed-point theorem, this means that iterating $f$ is convergent.
Since $\rho(A) = \underset{|\cdot|}{\operatorname{inf}}|A|$ where $|\cdot|$ are consistent matrix norms then $\rho(D_qf) \leq b$ implies the existence of a vector norm that induces a matrix norm where $|D_qf|\leq b'<1$. Using the fundamental theorem of calculus for line integrals for each component of $f$ simultaneously, we get:
\begin{equation*}
\begin{aligned}
\left|f(q_0,\psi)-f(q_1,\psi)\right|
=& \left| \int_C (D_q f) dq \right| \mbox{ for any curve } C \mbox{ connecting } q_0 \mbox{ and } q_1\\
=& \left| \int_0^1 (D_q f) \frac{d\left(tq_0+(1-t)q_1\right)}{dt} dt \right| \mbox{ using } C \mbox{ as a straight line}\\
\leq& \int_0^1 \left| (D_q f) \left(q_0-q_1\right) \right| dt \mbox{ due to subaddititivity}\\
\leq& \int_0^1 \left|D_qf\right|  \left|q_0-q_1\right| dt \mbox{ due to the norm being consistent} \\
\leq& b' \left|q_j^0-q_j^1\right| \\
\end{aligned}
\end{equation*}
Thus showing that $f$ is contractive with respect to the used vector norm. Therefore, iterating $f$ converges to a unique fixed value linearly with respect again to said vector norm.
\hfill$\blacksquare$

\vspace{5mm}

A corollary of this is that if the spectral norm $\rho(D_q f)\leq b < 1$, then the theorem still applies. Note that this is a reformulation of the proofs given in \cite{mooij2012sufficient} Section 3.1 and 3.2 although not specific to pairwise binary models and LBP. Later we can rederive slightly stronger bounds for these models as well as explicit bounds for others.

\vspace{5mm}

\begin{theorem}\label{the:main_diff}
Consider a function $f(q,\psi): Q \times \Psi \to Q$ where $X$ and $\Psi$ are subspaces of $\mathbb{R}^{N_q}$ and $\mathbb{R}^{N_\psi}$ which is $C^1$ smooth with respect to $q$ and $\psi$ with bounded derivatives. If iterating $f$ converges linearly to $q_{[\infty]}$ (with respect to some vector norm, $M$) for all $q$ in an open set that includes $q_\infty$, then the derivative of the fixed-point with respect to $\psi$ exists and said derivative can be computed using only $\psi$ and $q_{[\infty]}$.
\end{theorem}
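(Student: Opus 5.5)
The plan is to invoke the implicit function theorem at the fixed point and then to show that the resulting expression is also the limit of the derivatives of the finite iterates.

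\emph{Step 1: invertibility of $I - D_q f$ at the fixed point.} Write $g(q,\psi) = q - f(q,\psi)$, so that the fixed point $q_{[\infty]}(\psi)$ satisfies $g(q_{[\infty]},\psi)=0$. Since $f$ is $C^1$ in both arguments, so is $g$. I need $D_q g = I - D_q f$ to be invertible at $(q_{[\infty]},\psi)$. Linear convergence in the norm $M$ on an open neighbourhood of $q_{[\infty]}$ means that the iterates satisfy $|f^{k}(q)-q_{[\infty]}|_M \le C r^k |q-q_{[\infty]}|_M$ with $r<1$. Differentiating along directions $q = q_{[\infty]} + \epsilon v$ and letting $\epsilon\to 0$ gives $|(D_q f)^k v|_M \le C r^k |v|_M$, using the chain rule at the fixed point. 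Hence $\rho(D_q f(q_{[\infty]},\psi)) \le r < 1$. This step could also be taken directly from the hypothesis of Theorem~\ref{the:main_conv}, if one assumes the setting of that theorem. Either way, $1$ is not an eigenvalue of $D_q f$, and the Neumann series $\sum_k (D_q f)^k$ converges to $(I-D_q f)^{-1}$.

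\emph{Step 2: implicit function theorem.} With $D_q g$ invertible, the implicit function theorem gives a $C^1$ map $\psi \mapsto q_{[\infty]}(\psi)$ in a neighbourhood of $\psi$. I then need to identify it with the limit of the iteration. Local uniqueness of the zero of $g$, combined with the fact that iterating from the open set converges to a fixed point, does this. Continuity of the fixed point in $\psi$ follows from the uniform contraction: bounded derivatives keep $\rho$ below $1$ for nearby $\psi$. Differentiating $q_{[\infty]} = f(q_{[\infty]},\psi)$ then gives
\begin{equation*}
D_\psi q_{[\infty]} = \left(I - D_q f(q_{[\infty]},\psi)\right)^{-1} D_\psi f(q_{[\infty]},\psi).
\end{equation*}
Both factors are evaluated only at $(q_{[\infty]},\psi)$, which is the claimed memory-free property. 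For backpropagation, the vector--Jacobian product $w^\top D_\psi q_{[\infty]}$ is obtained as follows. First solve $z = w + (D_q f)^\top z$ by fixed-point iteration; this converges because $\rho((D_q f)^\top)=\rho(D_q f)<1$. Then form $z^\top D_\psi f$.

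\emph{Step 3 (optional consistency).} I would show that the derivatives of the finite iterates $q_{[k]}$ converge to this expression. Unrolling gives
\begin{equation*}
D_\psi q_{[k+1]} = D_q f(q_{[k]},\psi)\, D_\psi q_{[k]} + D_\psi f(q_{[k]},\psi).
\end{equation*}
As $q_{[k]}\to q_{[\infty]}$, continuity of the derivatives makes the coefficients converge. Eventually $\rho<1$ holds uniformly, so this perturbed linear recursion converges to the same limit. This justifies calling it the derivative of the infinite forward pass.

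\emph{Main obstacle.} I expect the main obstacle to be Step 1: extracting $\rho(D_q f)<1$ at the fixed point from linear convergence in an arbitrary norm. The concern is that linear convergence alone might allow $\rho = 1$ with degenerate, sublinear directions. My first attempt will be the directional-derivative limit above, using uniformity of the linear rate over the open set. If that proves too delicate, I will fall back on assuming the hypothesis of Theorem~\ref{the:main_conv}, as the paper's intended use suggests.
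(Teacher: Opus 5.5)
Your proposal is correct, and your Step 1 is essentially the paper's own key step. The paper also bounds $B_{[\infty]}=D_qf|_{q=q_{[\infty]}}$ by taking the directional-derivative limit of the linear-convergence estimate. It reads ``linear convergence'' as the one-step bound $|f(q,\psi)-q_{[\infty]}|\le b\,|q-q_{[\infty]}|$, so your $Cr^k$ version with Gelfand's formula is slightly more general.

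After that the routes differ. The paper never uses the implicit function theorem. It defines $C=(I-B_{[\infty]})^{-1}A_{[\infty]}$, checks $C=A_{[\infty]}+B_{[\infty]}C$, and argues that the unrolled derivatives $D_\psi q_{[n,q_0]}$ converge to $C$, which it takes to be the derivative. What you treat as an optional Step 3 is therefore the paper's main argument.

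\begin{itemize}
\item \emph{What your route gives.} It proves that $\psi\mapsto q_{[\infty]}(\psi)$ is actually differentiable. Via uniform contraction, it also identifies the iteration limit at nearby $\psi'$ with the implicit-function branch. Convergence of the iterates' derivatives at a single $\psi$ does not give differentiability of the limit map without some uniformity in $\psi$.
\item \emph{What the paper's route gives.} It speaks directly to whether long unrolled backpropagation approaches the implicit gradient. As written, however, its error recursion drops the term $B_{[\infty]}\epsilon_{C_{[n-1]}}$ from the expansion of $(B_{[\infty]}+\epsilon_{B_{[n]}})(C+\epsilon_{C_{[n-1]}})$. The closing ratio-test claim therefore does not follow. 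A correct version is what your Step 3 does, and it must use the contraction of $B_{[\infty]}$.
\end{itemize}

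In two places you should replace spectral-radius language with a norm bound.

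\begin{itemize}
\item \emph{Step 2.} Fix a norm with $\|B_{[\infty]}\|<1$. Use joint continuity of $D_qf$, not merely boundedness, to get $\|D_qf(q,\psi')\|\le\theta<1$ on a neighbourhood of $(q_{[\infty]},\psi)$. Then check that $f(\cdot,\psi')$ maps a small ball about $q_{[\infty]}$ into itself.
\item \emph{Step 3.} Likewise use $\|B_{[k]}\|\le\theta'<1$ for large $k$ in that same norm, with the forcing term $\epsilon_{A_{[k]}}+\epsilon_{B_{[k]}}C\to 0$. Knowing $\rho(B_{[k]})<1$ for each $k$ separately does not control the products $B_{[k]}B_{[k-1]}\cdots$.
\end{itemize}
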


\vspace{5mm}

We will construct the final derivative and show that the sequence of derivatives with respect to $\psi$ for a growing number of iterations converges towards it. For the sake of notation, let us define the iterative process as $q_{[n+1,q_0]} = f(q_{[n,q_o]},\psi)$ with $q_0$ as any arbitrary starting point in the region of convergence around $q_{[\infty]}$, i.e. $q_{[0,q_0]}=q_0$. We can also define matrices $A_{[n]} = \left.D_\psi f \right|_{q=q_{[n,q_0]}}$, $B_{[n]} = \left.D_q f\right|_{q=q_{[n,q_0]}}$. Because $q_{[\infty]}$ must be in the interior of this open set, the function $f$ is $C^1$ smooth (i.e. it has continuous first derivatives), and the sequence $q_{[n,q_0]}$ is convergent, the sequences $A_{[n]}$ and $B_{[n]}$ must also be convergent towards those from the analysis of $f$ at $q_{[\infty]}$. That is, $\lim_{n\to\infty} A_{[n]} = \left.D_\psi f\right|_{q=q_{[\infty]}}$ and $\lim_{n\to\infty} B_{[n]} = \left.D_q f\right|_{q=q_{[\infty]}}$, giving us an unambiguous definition of $A_{[\infty]}$ and $B_{[\infty]}$.

We can now use the linear convergence property to bound the matrix $B_{[\infty]}$. Consider a point $q_{[\infty]}+\epsilon v$ with unit direction vector $v$ and $\epsilon > 0$ that is sufficiently small such that $q_{[\infty]}+\epsilon v$ is in the region of convergence around $q_{[\infty]}$. We can use this to define the norm of the directional derivative of $f$ with respect to norm $M$:
$$
    |B_{[\infty]}v|
    = \lim_{\epsilon\to 0} \frac{|f(q_{[\infty]}+\epsilon v,\psi)-f(q_{[\infty]},\psi)|}{|\epsilon|} \leq \lim_{\epsilon\to 0} \frac{\beta |\epsilon v|}{|\epsilon|} = b |v| = b < 1
$$
given the convergence parameter $b < 1$ with respect to norm $M$ used for defining the $|\cdot|$ operator. This means that there is an induced norm of $B_{[\infty]}$ strictly less than one; thus the spectral radius of $B_{[\infty]}$ must also be strictly less than 1 and the eigenvalues of $B_{[\infty]}$ must have magnitudes less than 1. Therefore, we can define $C = (I-B_{[\infty]})^{-1}A_{[\infty]}$ as our constructed derivative noting that $I-B_{[\infty]}$ is invertible since $B_{[\infty]}$'s eigenvalues are all sub-unit. This matrix, $C$, also has the property that:
\begin{equation*}
\begin{aligned}
	A_{[\infty]}+B_{[\infty]}C 
	=& A_{[\infty]}+B_{[\infty]}(I-B_{[\infty]})^{-1}A_{[\infty]} \\
	=& \left(I+B_{[\infty]}(I-B_{[\infty]})^{-1}\right)(I-B_{[\infty]})(I-B_{[\infty]})^{-1}A_{[\infty]}\\
	=& (I-B_{[\infty]})^{-1}A_{[\infty]} = C
\end{aligned}
\end{equation*}
We must now show that the sequence of actual derivatives approaches $C$. For a finite number of iterations, we can express these derivatives with respect to $\psi$ recursively as:
\begin{equation*}
D_\psi q_{[n,q_0]} = A_{[n]} + B_{[n]}D_\psi q_{[n-1,q_0]} \mbox{ terminating the recursion with } D_\psi q_{[0,q_0]}=0
\end{equation*}
Define $\epsilon_{A_{[n]}}$, $\epsilon_{B_{[n]}}$, and $\epsilon_{C_{[n]}}$ as $\epsilon_{A_{[n]}} = A_{[n]}-A_{[\infty]}$, $\epsilon_{B_{[n]}}=B_{[n]}-B_{[\infty]}$, and $\epsilon_{C_{[n]}} = D_\psi q_{[n,q_0]} - C$, respectively. Note:
\begin{equation*}
\begin{aligned}
	C + \epsilon_{C_{[n]}}
	&= A_{[\infty]}+\epsilon_{A_{[n]}} + \left(B_{[\infty]}+\epsilon_{B_{[n]}}\right)\left(C+\epsilon_{C_{[n-1]}} \right) \\
	&= A_{[\infty]} + B_{[\infty]} C + \epsilon_{B_{[n]}} C + \epsilon_{A_{[n]}} + \epsilon_{B_{[n]}}\epsilon_{C_{[n-1]}}\\
	\epsilon_{C_{[n]}}
	&= \epsilon_{B_{[n]}} C + \epsilon_{A_{[n]}} + \epsilon_{B_{[n]}}\epsilon_{C_{[n-1]}}\\
\end{aligned}
\end{equation*}
Given the subadditivity/submultiplicity of matrix norms, the fact that $|\epsilon_{A_{[n]}}|$ and $|\epsilon_{B_{[n]}}|$ are bounded, and that $\left|\epsilon_{A_{[n]}}\right| \to 0$ and $\left|\epsilon_{B_{[n]}}\right| \to 0$ as  $n\to\infty$ due to $f$ being $C_1$-smooth:
\begin{equation*}
\begin{aligned}
	\left|\epsilon_{C_{[n]}} - \epsilon_{B_{[n]}}\epsilon_{C_{[n-1]}}\right|
	&= \left|\epsilon_{B_{[n]}} C + \epsilon_{A_{[n]}}\right| \\
	\left|\epsilon_{C_{[n]}} - \epsilon_{B_{[n]}}\epsilon_{C_{[n-1]}}\right|
	&\to 0 \\
\end{aligned}
\end{equation*}
Note that $\nicefrac{\left|\epsilon_{C_{[n]}}\right|}{\left|\epsilon_{C_{[n-1]}}\right|} < \left|\epsilon_{B_{[n]}}\right|$ and $\left|\epsilon_{B_{[n]}}\right| \to 0 < 1$. By the ratio test, $\left|\epsilon_{C_{[n]}}\right| \to 0$ and thus $D_\psi q_{[n,q_0]} \to C$ as $n\to\infty$.
\hfill$\blacksquare$

\vspace{5mm}

Theorem \ref{the:main_conv} as well as the constraints developed in \cite{mooij2012sufficient} based on spectral radii necessarily imply Theorem \ref{the:main_diff} which together give a test the convergence of the forward pass as well as a specific formula for the backwards pass. (For certain matrix nroms, Theorem \ref{the:main_conv} provides a purely local test which can be computed in linear time or even faster with parallel computing.) This formula is theoretically useful but practically difficult given that such a matrix could potentially have tens of millions of rows and columns which means that, even if it is sparse, inverting it would be next to impossible. In addition, we do not need the full matrix, but merely its product with a particular vector (the gradient of the final loss function with respect to its output) meaning that it is ultimately faster to approximate it as:
$$Cy = A_{[\infty]}y + A_{[\infty]}B_{[\infty]}y + A_{[\infty]}B_{[\infty]}^2y + A_{[\infty]}B_{[\infty]}^3y...$$ which is much easier especially given that $A_{[\infty]}$ and $B_{[\infty]}$ are likely sparse. There are multiple potential algorithms exist that could fall within this framework, two of which will be investigated in the following sections.

The particular problem with Theorem \ref{the:main_diff} is being able to tell if a point of convergence is surrounded by a region of linear convergence. One potential way of doing so would be to check that the spectral radius of $\left.D_qf\right|_{q=q_{[\infty]}}$ (or any induced matrix norm) is less than 1 which is already effectively a requirement to make the above series definitely converge. This would mean that the derivative could be calculated for the point-of-convergence but does not ensure that said point is unique.

\subsection{Parallel Loopy Belief Propagation}
Loopy belief propagation (LBP) is a form of message passing algorithm in which each node sends a message to each adjacent edge containing information about its beliefs about its own probability distribution and the edges reply with information about the node's neighbours, each allowing the other to intelligently update themselves.

Defining $p_{(L_i,L_j,(x,y))} = \prod_{P \in L_i.P^* \triangle L_j.P^*}e^{-\psi_P((x,y))}$ where $\triangle$ is the \textit{symmetric difference} or \textit{exclusive union} operator (i.e. $A\triangle B$ is the set of elements that appear in either $A$ or $B$ but not both), LBP for our pairwise conditional random field can be expressed as the following set of equations:
\begin{equation*}
\begin{aligned}
r(x\to(x,y),L) & \gets e^{\psi_L(x)} \prod_{z \in \mathcal{N}(x)/\{y\}} r((x,z)\to x,L) \mbox{ for messages from node } x \mbox{ to edge } (x,y)\\
r((x,y)\to x,L) & \gets \sum_{L'} p_{(L,L',(x,y))} r(y \to (x,y),L') \mbox{ for messages from edge } (x,y) \mbox{ to node } x \\
m(x,L) & \propto e^{\psi_L(x)} \prod_{y \in \mathcal{N}(x)} r((x,y)\to x,L)
\end{aligned}
\end{equation*}
In practice, these messages can become arbitrarily large requiring them to be normalised and making it difficult to ensure convergence. In order to address this, we can apply a normalising term to the messages coming from the nodes towards the edges at each iteration ensuring that they sum to 1. Also, in order to make the analysis a bit easier, the messages from the edges back to the nodes will be expressed on a logarithmic sclae. Overall, this gives us the equations:
\begin{equation}
\begin{aligned}
r_{[n,r_0]}(x\to(x,y),L) & = \underset{L}{\mbox{softmax}}\left\{\psi_L(x) + \sum_{z \in \mathcal{N}(x)/\{y\}} R_{[n-1,r_0]}((x,z)\to x,L)\right\}\\
R_{[n,r_0]}((x,y)\to x,L) & = \ln \sum_{L'} r_{[n,r_0]}(y \to (x,y),L')p_{(L,L',x,y)} 
\\
m_{[n,r_0]}(x\to(x,y),L) & = \underset{L}{\mbox{softmax}}\left\{\psi_L(x) + \sum_{y \in \mathcal{N}(x)} R_{[n,r_0]}((x,y)\to x,L)\right\}\\
\end{aligned}
\end{equation}

For these equations, it is natural to consider this algorithm to be an iterative one over the messages themselves rather than the marginal probability estimates. This is similar to the transformation used in \cite{mooij2012sufficient} (for binary models) except generalised to any number of labels as well as including the normalisation terms explicitly. (The latter is beneficial for ensuring that all messages do not simply converge towards 0 in theory, which would give unstable results in practice.)

As per Theorem \ref{the:main_conv}, we must show that the derivatives with respect to $\psi$ exist for a single iteration (which is trivial) and we must bound the derivatives with respect to the messages. Note that the derivatives are:
\begin{equation*}
\begin{aligned}
    \frac{\delta r_{[n,r_0]}(x\to(x,y),L_i)}{\delta R_{[n-1,r_0]}((x,z)\to x,L_j)} & = \left\{\begin{array}{ll}
         0 & \mbox{if } z \not\in \mathcal{N}(x)/\{y\} \\
         r_{[n,r_0]}(x\to(x,y),L_i)\left(1-r_{[n,r_0]}(x\to(x,y),L_i)\right) & \mbox{if } L1=L2 \mbox{ and }z \in \mathcal{N}(x)/\{y\} \\
         -r_{[n,r_0]}(x\to(x,y),L_i)r_{[n,r_0]}(x\to(x,y),L_j) & \mbox{if } L1 \neq L2 \mbox{ and }z \in \mathcal{N}(x)/\{y\}
    \end{array} \right.\\
    \frac{\delta R_{[n,r_0]}((x,y)\to x,L_i)}{\delta r_{[n,r_0]}(y\to(x,y),L_j)} 
    & = \frac{p_{(L_i,L_j,x,y)}}{\sum_{L_k}r_{[n,r_0]}(y\to(x,y),L_k)p_{(L_i,L_k,x,y)}}
\end{aligned}
\end{equation*}
Combining these together yield all the non-zero derivatives across an entire iteration with respect to $R$ as:
\begin{equation*}
\begin{aligned}
    \frac{\delta R_{[n,r_0]}( (x,y)\to x,L_i)}{\delta R_{[n-1,r_0]}((y,z) \to y,L_j)} 
    & =  \sum_{L_k} \frac{\delta R_{[n,r_0]}((x,y) \to x,L_i)}{\delta r_{[n,r_0]}(y \to (x,y),L_k)} \frac{\delta r_{[n,r_0]}(y\to (x,y),L_k)}{\delta R_{[n-1,r_0]}((y,z)\to y,L_j)} \\
    & = \sum_{L_k}
    \frac{p_{(L_i,L_k,x,y)}}{\sum_{L_l}r_{[n,r_0]}(y\to (x,y)),L_l)p(L_i,L_l,x,y)} \frac{\delta r_{[n,r_0]}(y\to (x,y),L_k)}{\delta R_{[n-1,r_0]}((y,z)\to y,L_j)} \\
    & = r_{[n,r_0]}(y\to (x,y),L_j)
    \left(\frac{p_{(L_i,L_j,x,y)}-\sum_{L_k}r_{[n,r_0]}(y\to (x,y),L_k)p_{(L_i,L_k,x,y)}}{\sum_{L_k}r_{[n,r_0]}(y\to (x,y),L_k)p_{(L_i,L_k,x,y)}} \right)
\end{aligned}
\end{equation*}

Consider the bounds places on $\Psi$ to that the $p_1$ induced matrix norm:
\begin{equation}\label{eq:lbp_e1}
\begin{aligned}
    &\sum_{z,L_j} \left|\frac{\delta R_{[n,r_0]}( (x,y)\to x,L_i)}{\delta R_{[n-1,r_0]}((y,z) \to y,L_j)}\right| \\
    = & \sum_{z \in \mathcal{N}(y)/\{x\},L_j} r_{[n,r_0]}(y\to (x,y),L_j)\left|\frac{p_{(L_i,L_j,x,y)}-\sum_{L_k}r_{[n,r_0]}(y\to (x,y)),L_k)p_{(L_i,L_k,x,y)}}{\sum_{L_k}r_{[n,r_0]}(y\to (x,y)),L_k)p_{(L_i,L_k,x,y)}} \right| \\
    \leq & \sum_{z \in \mathcal{N}(y)/\{x\}} \frac{2 (1-\sqrt{p_m})^2}{1-p_m}
    \\ & \mbox{\hspace{2.35cm}where } p_m = \underset{L_k}{\min}\; p_{(L_i,L_k,x,y)}\mbox{\hspace{0.5cm}(see Appendix \ref{app:lbp_e1} for derivation)} \\
= & 2(|\mathcal{N}(y)|-1)\frac{(1-\sqrt{p_m})^2}{1-p_m}
\end{aligned}
\end{equation}
Setting this to be strictly less than 1 for all pairs of $y$ and $L_i$ ensures the requirement of Theorem \ref{the:main_conv}.
\begin{equation*}
\begin{aligned}
    \sqrt{p_m} & > \frac{2|\mathcal{N}(y)|-3}{2|\mathcal{N}(y)|-1} \mbox{ or } \max_{L_i,L_j} \sum_{P \in L_i.P* \triangle L_j.P*} \psi_P((x,y)) < 2 \ln \left(\frac{2|\mathcal{N}(y)|-1}{2|\mathcal{N}(y)|-3}\right)
\end{aligned}
\end{equation*}
If this constraint holds for all edges, we have a bound which ensures convergence for the logarithmically scaled edge-to-node messages which means that the bounded node-to-edge messages must also converge as well. 

This is a weaker bound than those presented in equations (15) and (16) of \cite{mooij2012sufficient} as it explicitly prohibits the values of the messages $R$ from both trending towards positive or negative infinity at the same time which is not needed in their formulation which couples these messages together effectively only measuring their difference. However, our formulation does apply to non-binary problems where their formulation does not.

If we try to formulate an equivalent to Theorem \ref{the:main_conv} using the $p_\infty$ induced matrix norm:
\begin{equation}\label{eq:lbp_e2}
\begin{aligned}
    &\sum_{x,L_i} \left|\frac{\delta R_{[n,r_0]}( (x,y)\to x,L_i)}{\delta R_{[n-1,r_0]}((y,z) \to y,L_j)}\right| \\
    = & \sum_{x \in \mathcal{N}(y)/\{z\},L_i} r_{[n,r_0]}(y\to (x,y),L_j)\left|
    \frac{p(L_i,L_j,x,y) - \sum_{L_k} r_{[n,r_0]}(y\to (x,y),L_k) p_{(L_i,L_k,x,y)}}{\sum_{L_k}r_{[n,r_0]}(y\to (x,y),L_k)p_{(L_i,L_k,x,y)}} \right| \\
    = & \sum_{x \in \mathcal{N}(y)/\{z\}} r_{[n,r_0]}(y\to (x,y),L_j) \sum_{L_i} \left|
    \frac{p(L_i,L_j,x,y) - \sum_{L_k} r_{[n,r_0]}(y\to (x,y),L_k) p_{(L_i,L_k,x,y)}}{\sum_{L_k}r_{[n,r_0]}(y\to (x,y),L_k)p_{(L_i,L_k,x,y)}} \right|
\end{aligned}
\end{equation}

At this stage, bounding the terms becomes difficult without additional assumptions. However, we can bound these terms specifically for binary models (See Appendix \ref{app:lbp_e2}) which yields:
\begin{equation}
    \sum_{x,L_i} \left|\frac{\delta R_{[n,r_0]}( (x,y)\to x,L_i)}{\delta R_{[n-1,r_0]}((y,z) \to y,L_j)}\right|
    \leq \sum_{w \in \mathcal{N}(y)/\{z\}} \frac{1-p_{(L_1,L_2,w,y)}}{1+p_{(L_0,L_1,w,y)}} \mbox{ for the two labels $L_0$ and $L_ 1$} 
\end{equation}
which means that for binary models, this constraint can be written as:
\begin{equation}
\forall z, \forall y \in \mathcal{N}(z), \sum_{w\in \mathcal{N}(y)/\{z\} } \frac{1-p_{(L_0,L_1,w,y)}}{1+p_{(L_0,L_1,w,y)}} < 1
\label{eq:plbp_const_infty}
\end{equation}
which indicates that parallel LBP must converge for binary models structured as 1D chains or loops.

This constraint happens to be exactly equivalent to that given by Equation (15) of \cite{mooij2012sufficient} despite using a different norm and not having a pairing mechanism for messages, indicating that the logarithmic domain with normalisation built-in to the node-to-edge messages is equivalent their tanh-based domain. However, it is much more difficult to apply this formulation outside of binary models due to a lack of an analytic optimal configuration. As with the formulas from \cite{mooij2012sufficient}, the constraint based on the $p_\infty$ norm is less restrictive than the $p_1$-based constraint at least in the binary case.

\subsection{Mean-Field Iteration}\label{sec:meanfield}
As shown in the previous section, LBP requires additional buffers to hold messages passed between nodes and factors. This means that for each edge connecting two variables in our graph, at least four buffers must be saved, two for the node's messages towards the edge, and two for the reply from the edge back to the nodes. Unlike LBP, the idea behind mean-field iteration (MFI) is that instead of sending messages, each node should be made aware of the (estimated) mean values of its surroundings and use them to directly update itself accordingly. This in turn requires far less memory, although it may not be as accurate as LBP.

For the sake of simplicity, we will only consider $\psi_L(x)$ to be non-zero when $L \in \mathcal{L}$. This iteration is based on the calculation of the energy for the marginal probability distribution using the mean-field estimate, that is, that the values of $u$ can be replaced their marginal estimates, $m$, for the purpose of estimating other variables:
\begin{equation}
    \tilde{E}(x,L) = \psi_L(x) + \sum_{P \in L.P^*} \hspace{-1mm}\left( \sum_{e=(x,y)} \psi_P(e) \left(2m(y,P) - 1\right) \right)
    \label{eq:meanenergy}
\end{equation}
where $m$ follows the same constraints as $u$ except that it is continuous rather than discrete. That is, $m(x,L) \in [0,1]$, $m(x,P) =\sum_{L\in P.C}m(x,L)$, and $m(x,S)=1$. (A derivation can be found in Appendix \ref{app:mf_deriv}.) The marginal estimates can then be updated according to this energy:
\begin{equation}
    m(x,L) \gets \frac{\mbox{exp}\left(\tilde{E}(x,L)\right)}{\underset{L' \in \mathcal{L}}{\sum} \mbox{exp}\left(\tilde{E}(x,L')\right)}
    \label{eq:meanupdate}
\end{equation}

\subsubsection{Derivatives and constraints}\label{sec:constraint}
For the sake of notation, we will write each iteration of the forward pass as $m_{[n,m_0]}(x,L)$ where $m_0$ represents the initial estimate of the marginal probability before iteration and $n$ is the number of iterations performed. Similarly define $\tilde{E}_{[n,m_0]} = \tilde{E}_{[\psi,m_{[n,m_0]}]}$. This has the following recursive definition:
\begin{equation}
\begin{aligned}
    m_{[0,m_0]}(x,L) & = m_0(x,L) \\
    m_{[n+1,m_0]}(x,L) & = \frac{\mbox{exp}\left(\tilde{E}_{[n,m_0]}(x,L)\right)}{\underset{L' \in \mathcal{L}}{\sum} \mbox{exp}\left(\tilde{E}_{[n,m_0]}(x,L')\right)}
\end{aligned}\label{ref:forward}
\end{equation}

Here we have a choice between considering this as an iteration over the marginal probability estimates, $m$, or the energy of said estimates, $E$. Given that they are continuous functions of each other, if one converges the other must as well and if we have the derivative of one with respect to $\psi$ we can easily compute the derivative of the other with respect to $\psi$. We will continue viewing this as an iteration over the energy estimates for the leaf labels only.

It is beneficial to clarify the derivatives for a single forward iteration:
\begin{equation}
\begin{aligned}
    \frac{\delta \tilde{E} (x_1,L_i)}{\delta \psi_{L_k}(x_2)} & = \left\{ \begin{array}{ll} 1 & \text{ if } L_i = L_k \mbox{, } L_k \in \mathcal{L} \mbox{ and } x_1=x_2 \\ 0 & \text{ else} \end{array} \right. \\
    \frac{\delta \tilde{E}(x_1,L_1)}{\delta \psi_{L_2}(e)} & = \left\{ \begin{array}{ll}
     \underset{L_3 \in L_2.C^*}{\sum}(2m_{[n,m_0]}(y,L_3)-1) & \mbox{ if } L_1 \in L_2.C^* \mbox{ and } e=(x,y) \\ 0 & \mbox{ else} \end{array} \right. \\
\end{aligned}\label{eq:ders_base}
\end{equation}
Noting that
\begin{equation}
\begin{aligned}
    \frac{\delta \tilde{E}_{[n+1,m_0]}(x_1,L_i)}{\delta m_{[n+1,m_0]}(x_2,L_k)} & 
    = \left\{ \begin{array}{ll} \sum_{P \in L_i.P^* \cap L_k.P^*}2\psi_P(e) & \text{ if } \exists e=(x_1,x_2) \\ 0 & \text{ else} \end{array} \right.
\end{aligned}
\end{equation}
and
\begin{equation}
\begin{aligned}
    \frac{\delta m_{[n+1,m_0]}(x,L_k)}{\delta \tilde{E}_{[n,m_0]}(x,L_j)} & = \left\{ \begin{array}{ll} m_{[n+1,m_0]}(x,L_j)\left(1-m_{[n+1,m_0]}(x,L_j)\right) & \text{ if } L_k=L_j \\ -m_{[n+1,m_0]}(x,L_j) m_{[n+1,m_0]}(x,L_k) & \text{ else }\end{array} \right.
\end{aligned}\label{eq:const_der}
\end{equation}
we can calculate the derivative across an entire iteration as:
\begin{equation*}
\begin{aligned}
& \frac{\delta \tilde{E}_{[n,m_0]}(x,L_i)}{\delta \tilde{E}_{[n-1,m_0]}(x',L_j)} = \sum_{L_k}\frac{\delta \tilde{E}_{[n,m_0]}(x,L_i)}{\delta m_{[n,m_0]}(x',L_k)} \frac{\delta m_{[n-1,m_0]}(x',L_k)}{\delta \tilde{E}_{[n-1,m_0]}(x',L_j)} \\
    & =\sum_{L_k} \left(\left\{\begin{array}{ll}\sum_{P \in L_i.P^* \cap L_k.P^*}2\psi_P(e) & \mbox{if }\exists e=(x,x') \\ 0 & \mbox{else}\end{array} \right. \right) \left(\left\{ \begin{array}{ll} m_{[n-1,m_0]}(x',L_j)(1-m_{[n-1,m_0]}(x',L_j)) & \text{if }L_j=L_k \\ - m_{[n-1,m_0]}(x',L_j)m_{[n-1,m_0]}(x',L_k) & \text{else}\end{array}\right.\right) \\
    & = 2 m_{[n-1,m_0]}(x',L_j)\sum_{L_k}\left(\sum_{P \in L_i.P^* \cap L_k.P^*}\psi_P(e)\right)\left\{ \begin{array}{ll}(1-m_{[n-1,m_0]}(x',L_k)) & \text{ if }L_j=L_k \\ -m_{[n-1,m_0]}(x',L_k) & \text{ if }L_j\neq L_k \end{array}\right. \\
    & = 2 m_{[n-1,m_0]}(x',L_j)\left(\left(\sum_{P \in L_i.P^* \cap L_j.P^*}\psi_P(e)\right) - \sum_{L_k}\left( m_{[n-1,m_0]}(x',L_k) \sum_{P \in L_i.P^* \cap L_k.P^*}\psi_P(e) \right) \right) \\
\end{aligned}
\end{equation*}

We can now find a bound on $\Psi$ that is equivalent to that on the $p_1$ induced matrix norm as:
\begin{equation}\label{eq:mfi_eq}
\begin{aligned}
& \sum_{x',L_j}\left|\frac{\delta \tilde{E}_{[n,m_0]}(x,L_i)}{\delta \tilde{E}_{[n-1,m_0]}(x',L_j)}\right| \\
    & = \sum_{e=(x,x')}\sum_{L_j} 2m_{[n-1,m_0]}(x',L_j)\left| \left(\sum_{P \in L_i.P^* \cap L_j.P^*}\psi_P(e)\right) - \sum_{L_k}\left( m_{[n-1,m_0]}(x',L_k) \sum_{P \in L_i.P^* \cap L_k.P^*}\psi_P(e) \right) \right| \\
    & \leq \sum_{e=(x,x')} \left(\sum_{P \in L_i.P^*}\psi_P(e) - \underset{L_j}{\min}\left(\sum_{P \in L_i.P^* \cap L_j.P^*}\psi_P(e)\right) \right) \mbox{\hspace{10mm}\tiny See Appendix \ref{app:mfi_eq} for derivation.} \\
    & = \sum_{e=(x,x')} \underset{L_j}{\max}\left(\sum_{P \in L_i.P^* / L_j.P^*}\psi_P(e)\right)
\end{aligned}
\end{equation}
In the case where the label ordering forms a tree with no redundant labels, then this formula can be simplified as 
$$\underset{L_j}{\max}\left(\sum_{P \in L_i.P^* / L_j.P^*}\psi_P(e)\right) = \sum_{P \in L_i.P^*}\psi_P(e)$$
since a label can always be found that shares no parents (other than the root label) in common. This covers the standard binary and Potts models.

Alternatively, we can find a bound equivalent to Theorem \ref{the:main_conv} using the $p_\infty$ induced matrix norm as:
\begin{equation}\label{eq:mfi_eq2}
\begin{aligned}
& \sum_{x',L_j}\left|\frac{\delta \tilde{E}_{[n,m_0]}(x',L_j)}{\delta \tilde{E}_{[n-1,m_0]}(x,L_i)}\right| \\
    & = \sum_{e=(x,x')}\sum_{L_j}2m_{[n-1,m_0]}(x,L_i)\left| \left(\sum_{P \in L_i.P^* \cap L_j.P^*}\psi_P(e)\right) - \sum_{L_k}\left( m_{[n-1,m_0]}(x,L_k) \sum_{P \in L_j.P^* \cap L_k.P^*}\psi_P(e) \right) \right| \\
    & = 2\sum_{e=(x,x')} m_{[n-1,m_0]}(x,L_i)(1-m_{[n-1,m_0]}(x,L_i)) \sum_{L_j}\left| \left(\sum_{P \in L_i.P^* \cap L_j.P^*}\psi_P(e)\right) - \sum_{L_k \neq L_i} m'(x,L_k) \left(\sum_{P \in L_j.P^* \cap L_k.P^*}\psi_P(e)\right) \right| \\
    &\mbox{\hspace{65mm}where } m'(x,L_k) = \frac{m_{[n-1,m_0]}(x,L_k)}{1-m_{[n-1,m_0]}(x,L_i)} > 0 \\
    &\mbox{\hspace{65mm}and noting } \sum_{L_k \neq L_i} m'(x,L_k) = 1 \\
    & \leq \frac{1}{2} \sum_{e=(x,x')}  \; \underset{\substack{\lambda_L \geq 0\\\underset{L\neq L_i}{\sum} \lambda_L = 1}}{\max} \; \sum_{L_j}\left| \left(\sum_{P \in L_i.P^* \cap L_j.P^*}\psi_P(e)\right) - \sum_{L_k \neq L_i} \lambda_{L_k} \left(\sum_{P \in L_j.P^* \cap L_k.P^*}\psi_P(e)\right) \right| \\
    &\mbox{\hspace{65mm}by renaming } m'(x,L) \mbox{ to } \lambda_L \mbox{ and noting that maximising $\lambda_L$ can be done} \\
    &\mbox{\hspace{65mm}independently from } m_{[n-1,m_0]}(x,L_i) \mbox{ which has an obvious maximum of }\nicefrac{1}{2}\\
    & = \frac{1}{2} \sum_{e=(x,x')}  \; \underset{L_k\neq L_i}{\max} \; \sum_{L_j}\left| \left(\sum_{P \in L_i.P^* \cap L_j.P^*}\psi_P(e)\right) - \left(\sum_{P \in L_j.P^* \cap L_k.P^*}\psi_P(e)\right) \right| \\
    &\mbox{\hspace{65mm}given } \sum_{L_k \neq L_i} \lambda_{L_k} \left(\sum_{P \in L_j.P^* \cap L_k.P^*}\psi_P(e)\right) \mbox{ forms a bounded convex polytope} \\
    &\mbox{\hspace{65mm}and thus such a maximum distance must occur at a vertex thereof} 
\end{aligned}
\end{equation}
This equation may be more unwieldy that the previous, but it is not impossible to simply for certain label ordering types. Thus if we constrain $\Psi$ to the space where either of the above equations is strictly less than 1, then we fulfill all the necessary requirements for Theorem \ref{the:main_conv}.

In certain cases, one bound might subsume the other. For example, Theorem \ref{the:main_conv} using the $p_1$ induced matrix norm for a binary models with labels $L_0$ and $L_1$ comes down to: $\sum_e \psi_{L_0}(e) < 1$ and $\sum_e \psi_{L_1}(e) < 1$ which is subsumed by that of the $p_\infty$ induced matrix norm which is $\sum_e \left( \psi_{L_0}(e)+\psi_{L_1}(e)\right) < 2$. For Potts' models, Theorem \ref{the:main_conv} using the $p_\infty$ induced matrix norm (translated as $\sum_e \max_{L_i\neq L_j}(\psi_{L_i}(e) + \psi_{L_j}(e))) < 2$) also always subsumes that using the $p_1$ induced matrix norm (translated as $\forall L_i \sum_e \psi_{L_i}(e) < 1$). For other more complex models, this might not be the case. 

The benefit of investigating mean-field iteration is that it provides us with information about when approaches from the literature may fail. Specifically, the approach taken by Zhang \textit{et al.} \cite{zheng2015conditional} does not explicitly constrain the regulation terms in the model which could lead to the gradient diverging if more than a few iterations are used. This means that very different results may be seen if different numbers of iterations are used, even though the same general model and configuration are applied.

\subsubsection{Tightness of the bound}\label{sec:tight}
In order to show the tightness of the above bound at least for some configurations, consider a binary segmentation of two vertices ($x_1$, $x_2$) connected by a single edge. The minimal $\psi$ ensuring at least one constraint is held with equality would be:
$$\psi_{L_0}\left((x_1,x_2)\right) + \psi_{L_1}\left((x_1,x_2)\right) = 2$$
For the other parameters, let us consider:
$$\psi_{L_0}(x_1)=\psi_{L_0}(x_2)=\psi_{L_1}(x_1)=\psi_{L_1}(x_2)=0$$
Noting that this configuration just equals the bound for both labels using the one existing edge which just violates the strict inequality. This yields the forward pass iteration as
$$\left[\begin{array}{c} \tilde{E}_{[n,m_0]}(x_1,L_0)\\\tilde{E}_{[n,m_0]}(x_2,L_0)\\\tilde{E}_{[n,m_0]}(x_1,L_1)\\\tilde{E}_{[n,m_0]}(x_2,L_1)\end{array}\right]=\left[\begin{array}{c} 
\psi_{L_0}(x_1)+\psi_{L_0}((x_1,x_2))\left(2\frac{\exp(\tilde{E}_{[n,m_0]}(x_2,L_0))}{\exp(\tilde{E}_{[n,m_0]}(x_2,L_0))+\exp(\tilde{E}_{[n,m_0]}(x_2,L_1))}-1\right) \\
\psi_{L_0}(x_2)+\psi_{L_0}((x_1,x_2))\left(2\frac{\exp(\tilde{E}_{[n,m_0]}(x_2,L_0))}{\exp(\tilde{E}_{[n,m_0]}(x_2,L_0))+\exp(\tilde{E}_{[n,m_0]}(x_2,L_1))}-1\right) \\
\psi_{L_1}(x_1)+\psi_{L_1}((x_1,x_2))\left(2\frac{\exp(\tilde{E}_{[n,m_0]}(x_2,L_1))}{\exp(\tilde{E}_{[n,m_0]}(x_2,L_0))+\exp(\tilde{E}_{[n,m_0]}(x_2,L_1))}-1\right) \\
\psi_{L_1}(x_2)+\psi_{L_1}((x_1,x_2))\left(2\frac{\exp(\tilde{E}_{[n,m_0]}(x_2,L_1))}{\exp(\tilde{E}_{[n,m_0]}(x_2,L_0))+\exp(\tilde{E}_{[n,m_0]}(x_2,L_1))}-1\right) \end{array}\right]$$
which has a fixed point at $m_{[n,m_0]}(x_1,L_0)=m_{[n,m_0]}(x_2,L_0)=m_{[n,m_0]}(x_1,L_1)=m_{[n,m_0]}(x_2,L_1)=0.5$. (Importantly, the convergence around this point is sub-linear, meaning that it does not fulfill the requirements for Theorem \ref{the:main_diff}.) For the sake of simplicity, assume that we initialise at this point, so $m_{[\infty,\psi]}(x_1)=m_{[\infty,\psi]}(x_1)=0.5$.

Even with this assumption, the derivatives do not converge. To see this, consider writing the backwards pass in matrix form as in the previous sections:
$$A_{[\infty]} = \left[ \begin{array}{cccccc} 1 & 0 & 0 & 0 & 0 & 0 \\ 0 & 1 & 0 & 0 & 0 & 0 \\ 0 & 0 & 1 & 0 & 0 & 0 \\ 0 & 0 & 0 & 1 & 0 & 0 \end{array} \right] \mbox{\hspace{5mm}and\hspace{5mm}} B_{[\infty]} = \left[ \begin{array}{cccc} 
0 & \nicefrac{1}{2}\psi_{L_0}(e) & 0 & -\nicefrac{1}{2}\psi_{L_0}(e) \\
\nicefrac{1}{2}\psi_{L_0}(e) & 0 & -\nicefrac{1}{2}\psi_{L_0}(e) & 0 \\
0 & -\nicefrac{1}{2}\psi_{L_1}(e) & 0 & \nicefrac{1}{2}\psi_{L_1}(e) \\
-\nicefrac{1}{2}\psi_{L_1}(e) & 0 & \nicefrac{1}{2}\psi_{L_1}(e) & 0 \\ \end{array} \right]$$
Thus, the characteristic polynomial of $B_{[\infty]}$ can be computed as:
\begin{equation*}
-\frac{1}{4}\lambda^2\left(\psi_{L_0}(e)+\psi_{L_1}(e)-2\lambda\right)\left(\psi_{L_0}(e)+\psi_{L_1}(e)+2\lambda\right)
\end{equation*}
which gives us one eigenvalue of
$2\lambda = \psi_{L_0}(e)+\psi_{L_1}(e)$ or $\lambda=1$.
Note that because $B_{[\infty]}$ has at least one eigenvalue of modulus 1, the formula $C = A_{[\infty]} + B_{[\infty]} A_{[\infty]} + B^2_{[\infty]} A_{[\infty]} + ... B_\infty^nA_{[\infty]} + ... $ diverges meaning that our gradients with respect to the parameters, $\psi$, are not defined.

This illustrates a key potential issue with the use of iterative mean-field estimation as it only takes two adjacent pixels with similar enough bias terms and a high enough regularisation term between them to create a point-source of instability in gradient computation even if the forward pass does converge or produce acceptable results. If a very small number of automatically-differentiated iterations are used, this may not be noticeable as they are insufficient to allow the gradients to converge at all. 

%% file: 3_exp_marg.tex
\section{Marginal probability estimation quality}

When investigating marginal probability estimators, a crucial aspect is to measure how well these estimators match their reference values given some gold-standard, if much more expensive, calculation method.

\subsection{Methods - Two-dimensional random Ising models}
In order to evaluate the relatively efficacy of our solution algorithms, we created a number of two-dimensional Ising models with vertices aligned in a 4-connected lattice of size $N \times N$. The bias term for each variable was chosen using independent unit normal distributions. The regularisation term was independently selected from a uniform distribution between 0 and $\beta\ln 2$ where $\alpha \in [0,1]$ is a parameter to control how close the regularisation terms could get to the constraint proposed in Equation \ref{eq:plbp_const_infty} for the central vertices in the lattice. This experiment was repeated with three values of $N \in \{5,10,20\}$ and four values of $\alpha \in \{0.25,0.5,0.75,1\}$.

We measured the accuracy of the estimated marginal distributions with respect to the reference beta distribution for variable $x$, $\beta_x$, by calculating:
\begin{itemize}
        \item the root mean squared error between the estimated marginal probability and the mean of the reference distributions, $$RMSE = \sqrt{\frac{\sum_{x\in\Omega} \left|m(x)-\mathbf{E}_{\beta_x}[u(x)]\right|^2}{|\Omega|}}$$
    \item the expected Kullback-Liebler divergence of the marginal probability estimate with the rate taken from the reference beta distribution
    $$\mathbf{E}[D_{KL}]= \sum_{x\in\Omega}\int_0^ 1 \beta_x[p]\left(p\ln \frac{p}{m(x)}+(1-p)\ln \frac{1-p}{1-m(x)}\right)dp $$
\end{itemize}
both of which can be solved analytically given the parameters of the reference beta distributions. These beta distributions are constructed using Gibbs sampling. The maximum probability distribution is used rather than a burn-in period and only 1 in every $N$ samples is saved in order to minimise the effect of correlation between consecutive samples. The samples are then used to estimate the marginal distributions for each variable using a Bayesian approach, treating them as samples from an unknown Poisson distribution and thus give us the parameters of a beta distribution to represent the probability density of the Poisson distribution's sole parameter. The number of samples to be taken from the Gibbs sampler was determined as $N^2/(2e) - 2$ where $e$ is the desired approximate error bound. (This bound is the approximate value of $\mathbf{E}[D_{KL}]$ for a value of $m(x)$ equal to the expected value of the reference beta distribution assuming a large number of samples.) We used a value of $e=0.005$ in all experiments.

\subsection{Methods - Two-dimensional random models with hierarchical labels}
Binary models do not capture the flexibility of these operators. In order to demonstrate their greater flexibility, we have repeated this marginal probability estimation experiment with another grid-like model. Unlike the Ising model discussed above, this model assigns one of five labels to each vertex, with certain labels having a preference for neighbouring labels that can be organised in a hierarchical structure, shown in Figure \ref{fig:syn_labelordering}. That is, labels $L_0$ and $L_1$ are more likely to be adjacent than $L_0$ and $L_2$ due to being closer in the tree.

\begin{figure}[h]
    \centering
    \includegraphics[width=0.275\linewidth]{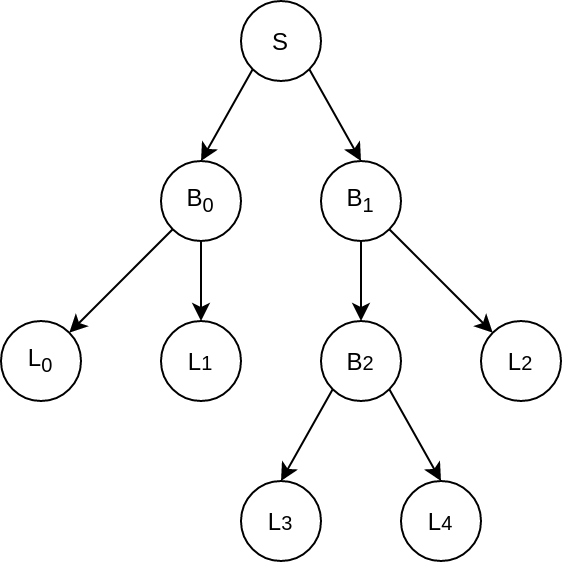}
    \caption{Label ordering for random hierarchical model experiment containing 5 leaf labels and 3 branches (i.e. ways of grouping leaf labels) along with the source, S, representing all labels.}
    \label{fig:syn_labelordering}
\end{figure}

As this model contains multiple labels, the reference distribution for the marginal estimates is a Dirichlet distribution, rather than a beta distribution. The same root-mean-squared error and expected Kullback-Liebler divergence can be computed. We have however increased the number of samples to $\frac{5}{2}(N^2/(2e) - 2)$ to account for the additional labels.

\subsection{Results}

\begin{figure}
     \centering
     \begin{subfigure}[b]{0.475\textwidth}
         \centering
         \includegraphics[width=\textwidth]{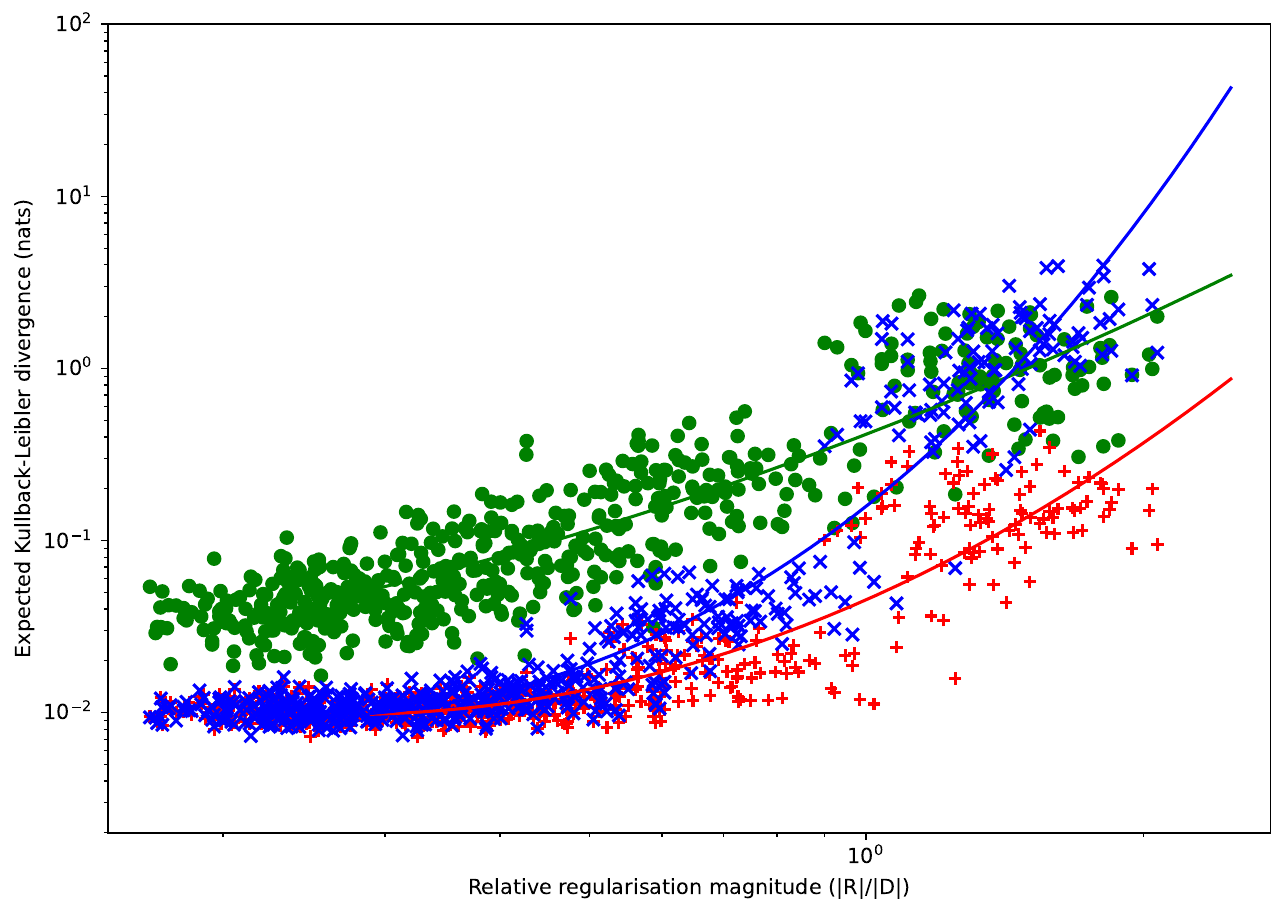}
         \caption{$\mathbf{E}[D_{KL}]$ for 5x5 Ising marginals}
     \end{subfigure}
     \begin{subfigure}[b]{0.475\textwidth}
         \centering
         \includegraphics[width=\textwidth]{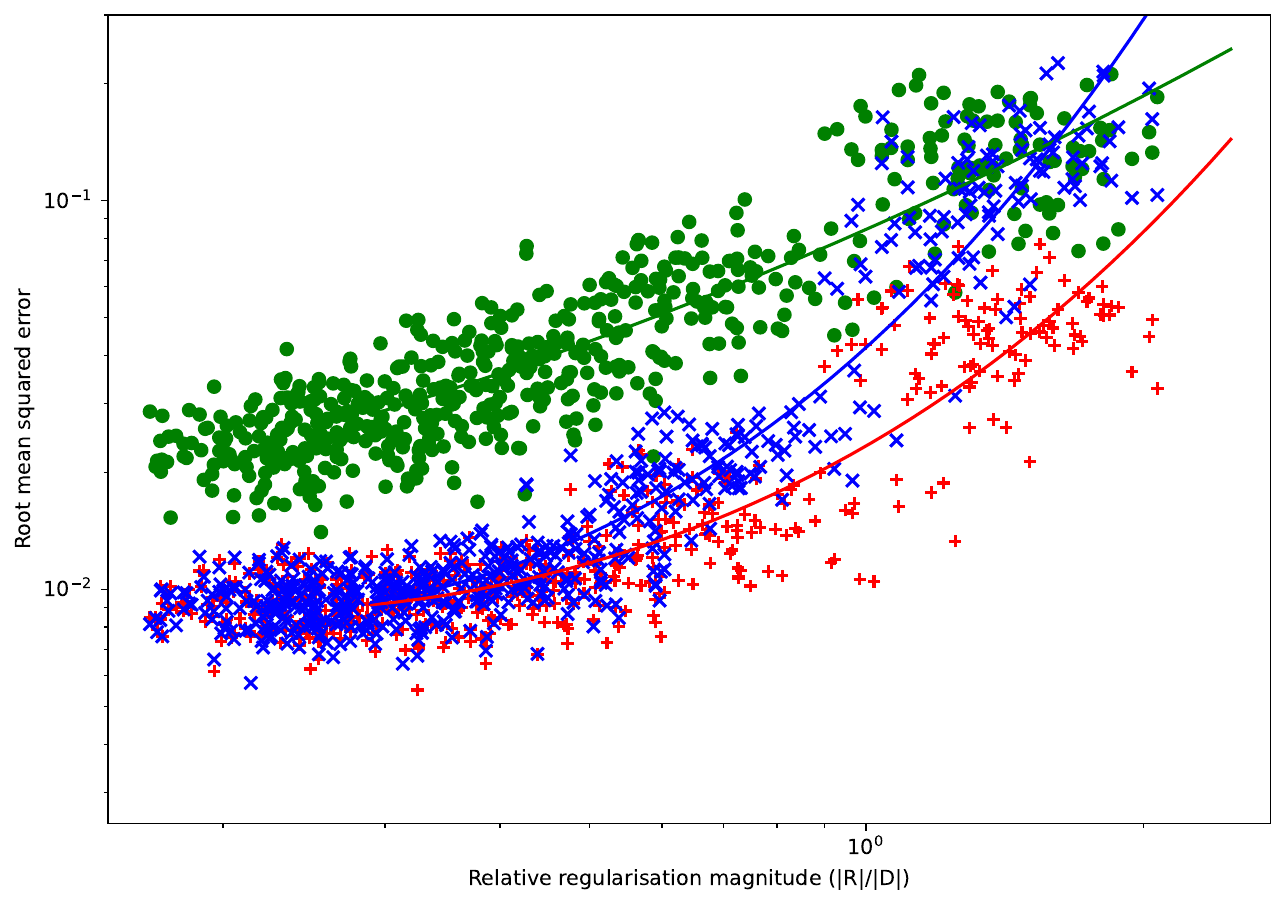}
         \caption{RMSE for 5x5 Ising marginals}
     \end{subfigure}
     \begin{subfigure}[b]{0.475\textwidth}
         \centering
         \includegraphics[width=\textwidth]{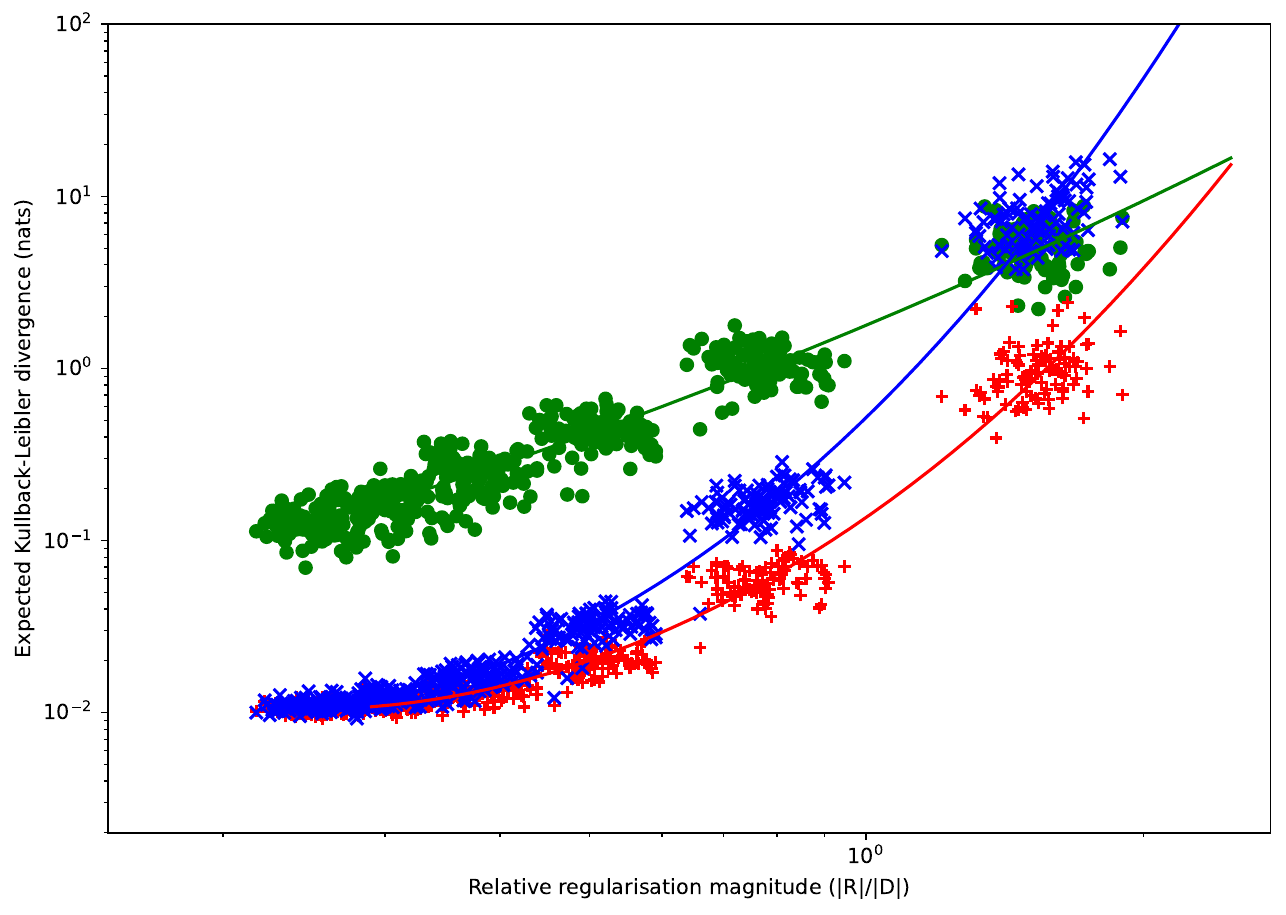}
         \caption{$\mathbf{E}[D_{KL}]$ for 10x10 Ising marginals}
     \end{subfigure}
     \begin{subfigure}[b]{0.475\textwidth}
         \centering
         \includegraphics[width=\textwidth]{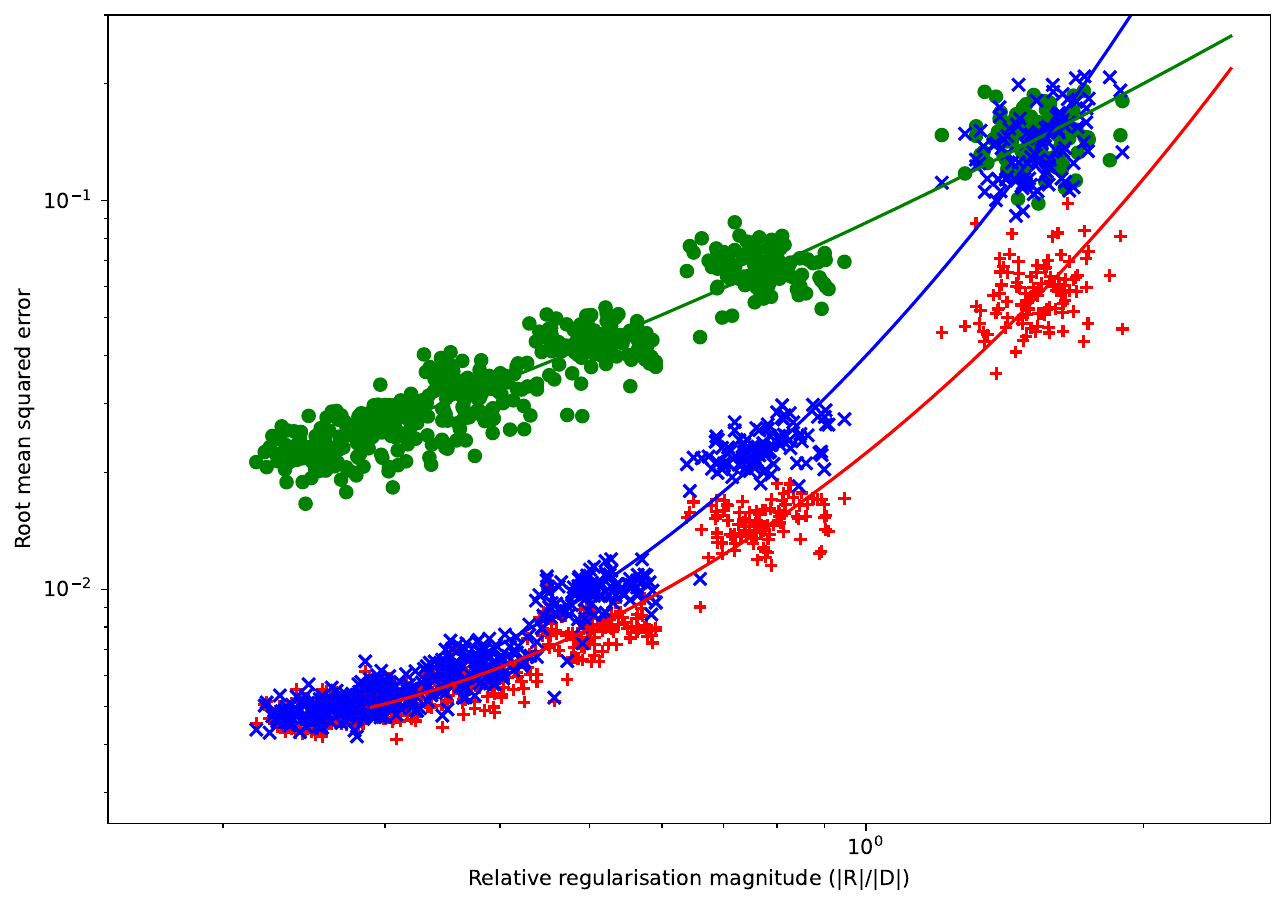}
         \caption{RMSE for 10x10 Ising marginals}
     \end{subfigure}
     \begin{subfigure}[b]{0.475\textwidth}
         \centering
         \includegraphics[width=\textwidth]{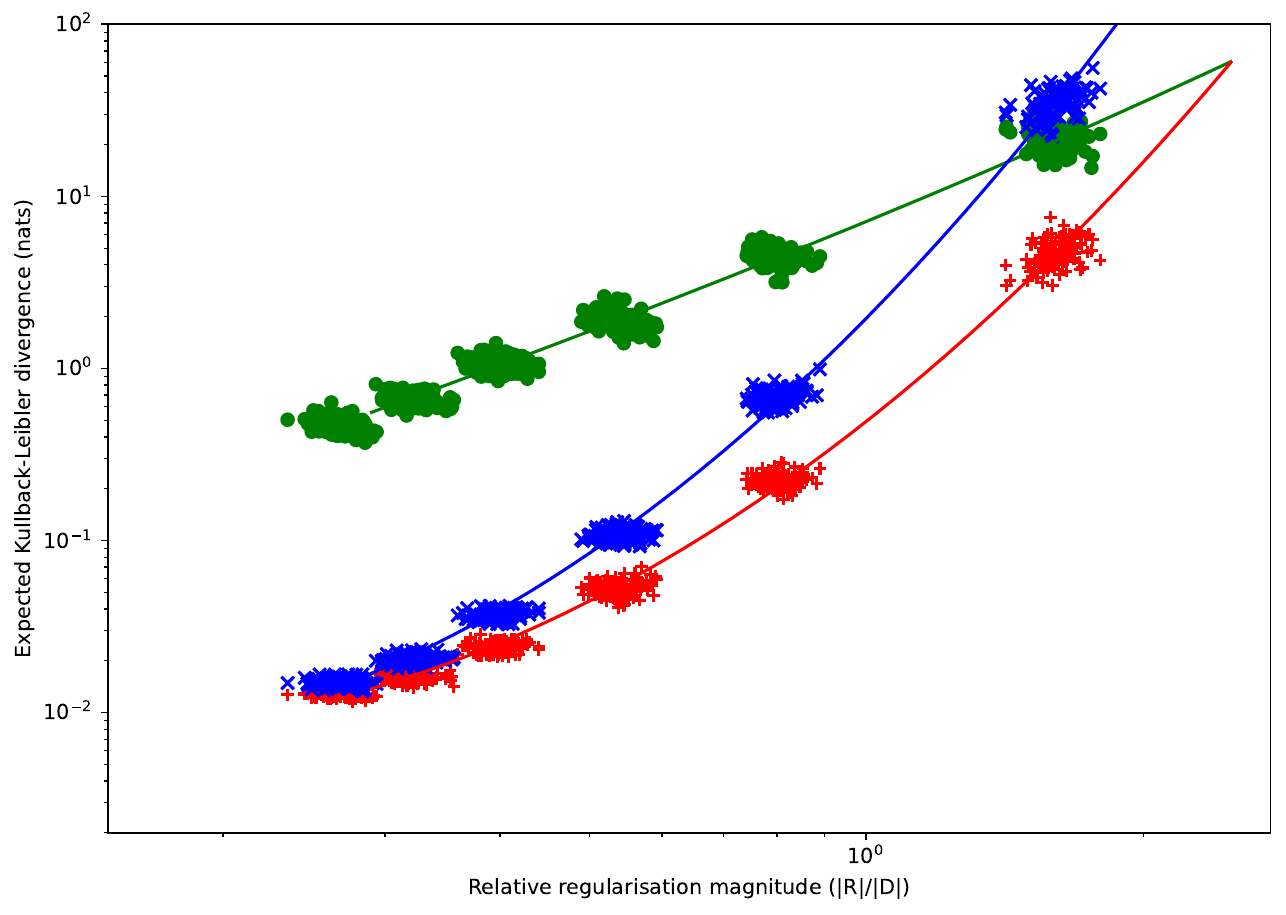}
         \caption{$\mathbf{E}[D_{KL}]$ for 20x20 Ising marginals}
     \end{subfigure}
     \begin{subfigure}[b]{0.475\textwidth}
         \centering
         \includegraphics[width=\textwidth]{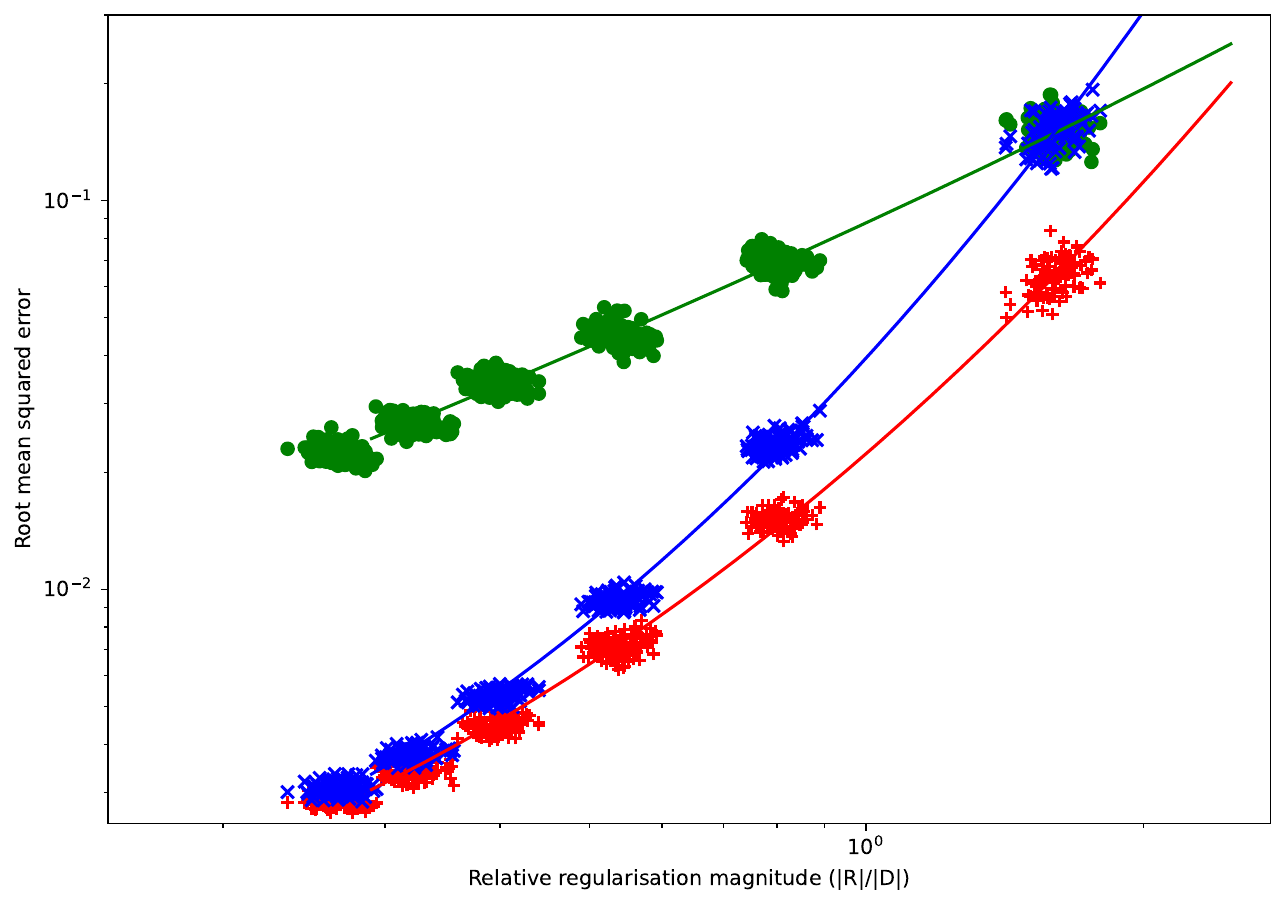}
         \caption{RMSE for 20x20 Ising marginals}
     \end{subfigure}
        \caption{Comparision of marginal distributions of variables between solution types ({\color{red}red:} Parallel loopy belief propagation, {\color{blue}blue:} Iterative mean field estimation, and {\color{ao(english)}green:} Bias term only with no regularisation) in terms of Root mean squared error (RMSE) and  expected KL Divergence ($\mathbf{E}[D_{KL}]$). Each point cloud includes 100 simulations.}
        \label{fig:stats_results}
\end{figure}

\begin{figure}
     \centering
     \begin{subfigure}[b]{0.475\textwidth}
         \centering
         \includegraphics[width=\textwidth]{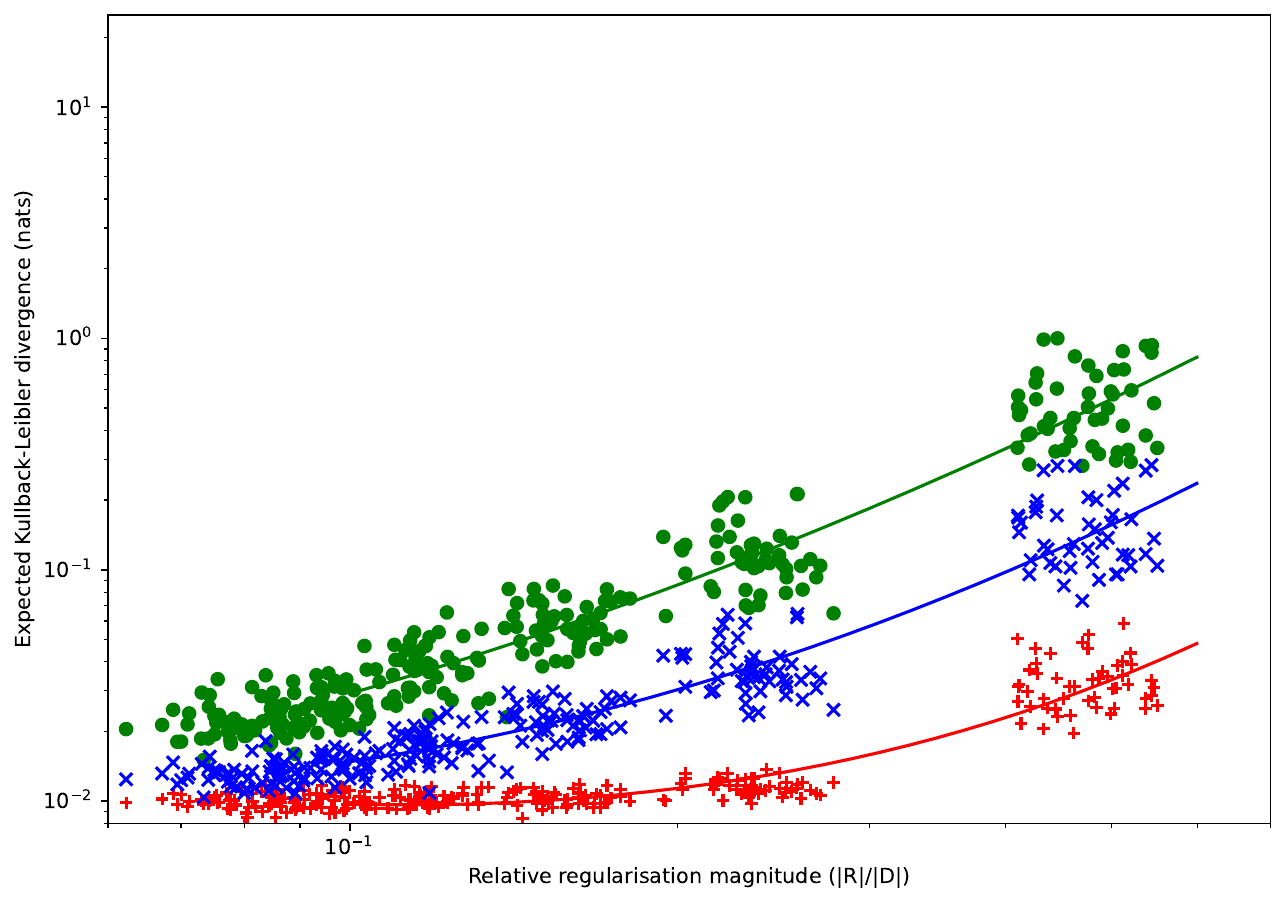}
         \caption{$\mathbf{E}[D_{KL}]$ for 5x5 hierarchical label marginals}
     \end{subfigure}
     \begin{subfigure}[b]{0.475\textwidth}
         \centering
         \includegraphics[width=\textwidth]{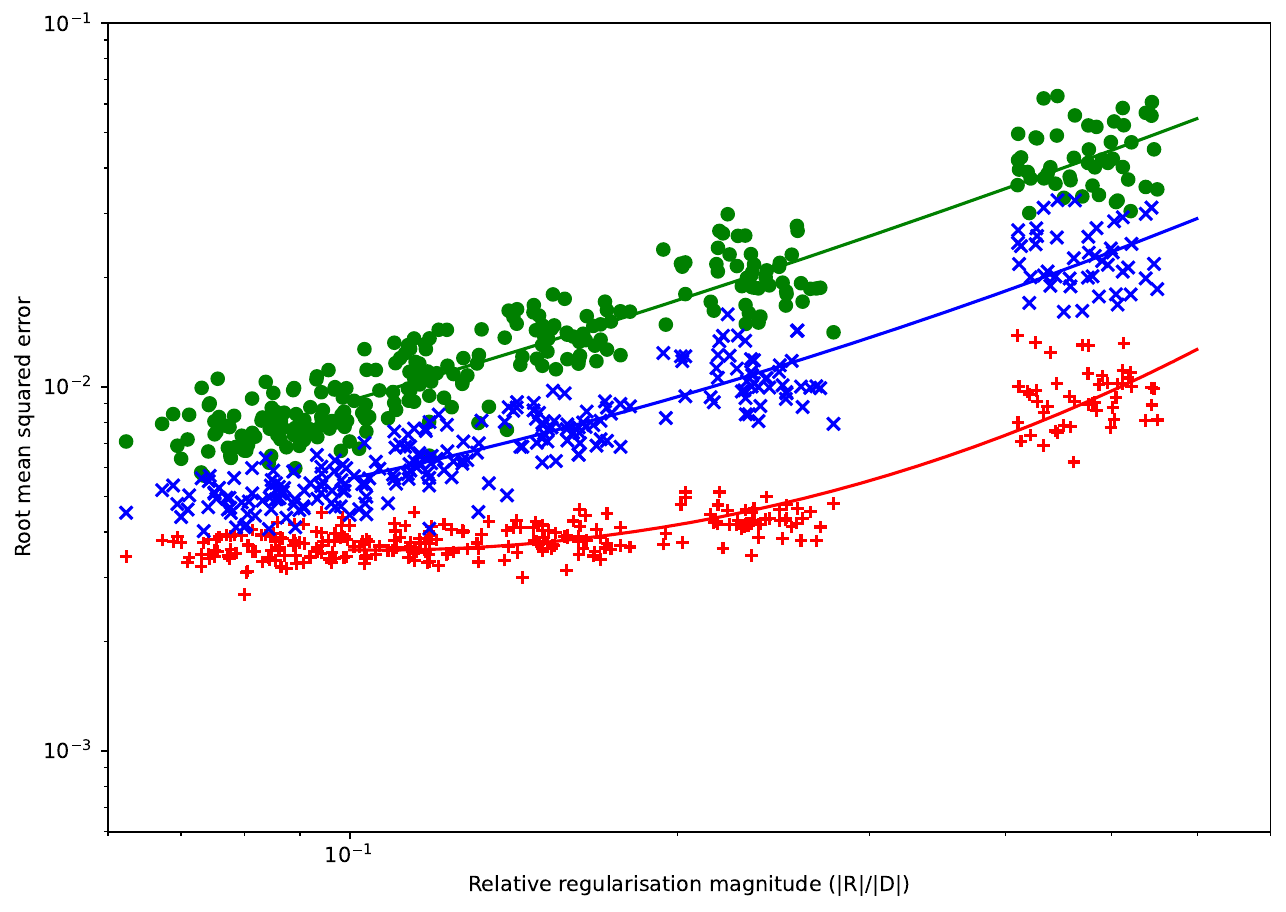}
         \caption{RMSE for 5x5 hierarchical label marginals}
     \end{subfigure}
     \begin{subfigure}[b]{0.475\textwidth}
         \centering
         \includegraphics[width=\textwidth]{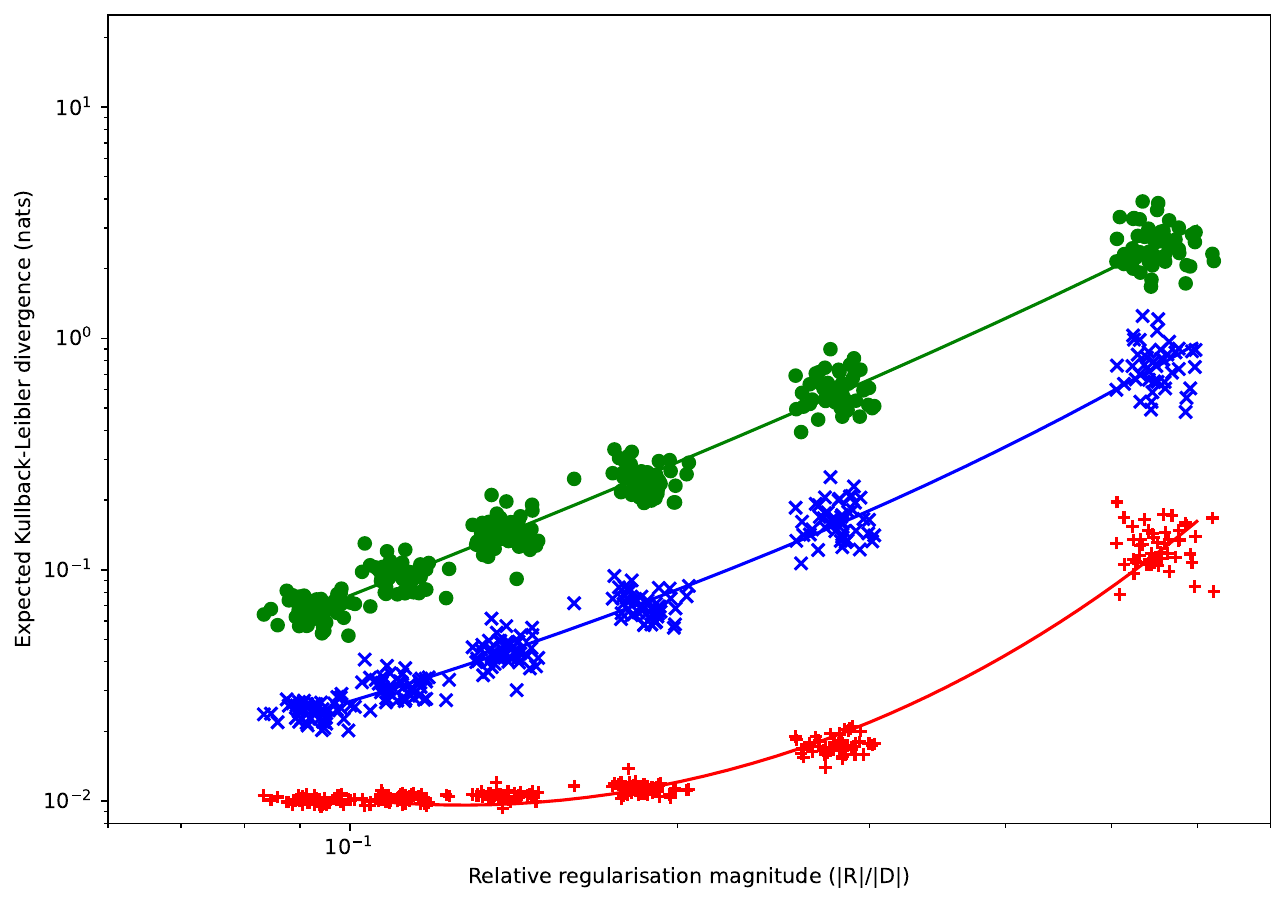}
         \caption{$\mathbf{E}[D_{KL}]$ for 10x10 hierarchical label marginals}
     \end{subfigure}
     \begin{subfigure}[b]{0.475\textwidth}
         \centering
         \includegraphics[width=\textwidth]{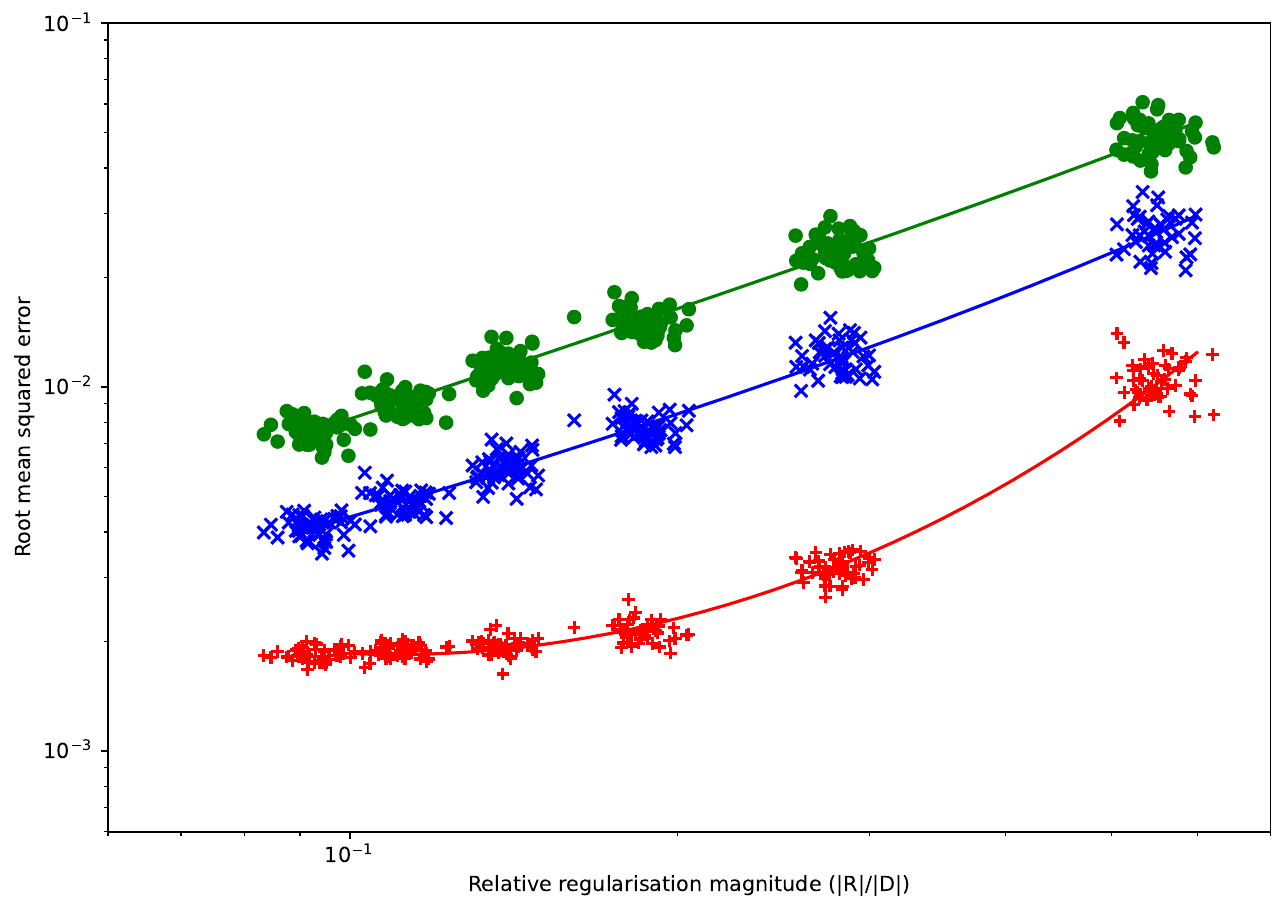}
         \caption{RMSE for 10x10 hierarchical label marginals}
     \end{subfigure}
     \begin{subfigure}[b]{0.475\textwidth}
         \centering
         \includegraphics[width=\textwidth]{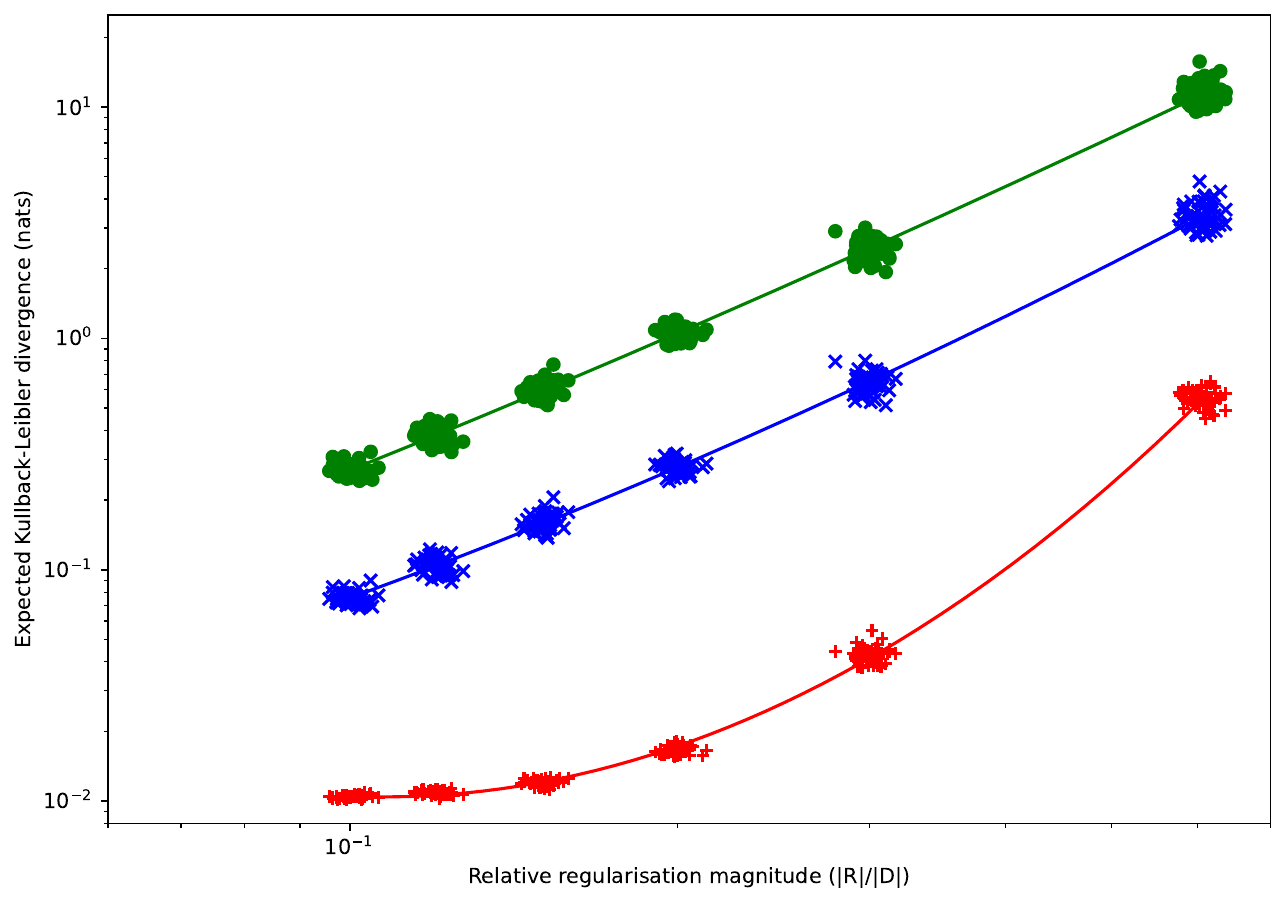}
         \caption{$\mathbf{E}[D_{KL}]$ for 20x20 hierarchical label marginals}
     \end{subfigure}
     \begin{subfigure}[b]{0.475\textwidth}
         \centering
         \includegraphics[width=\textwidth]{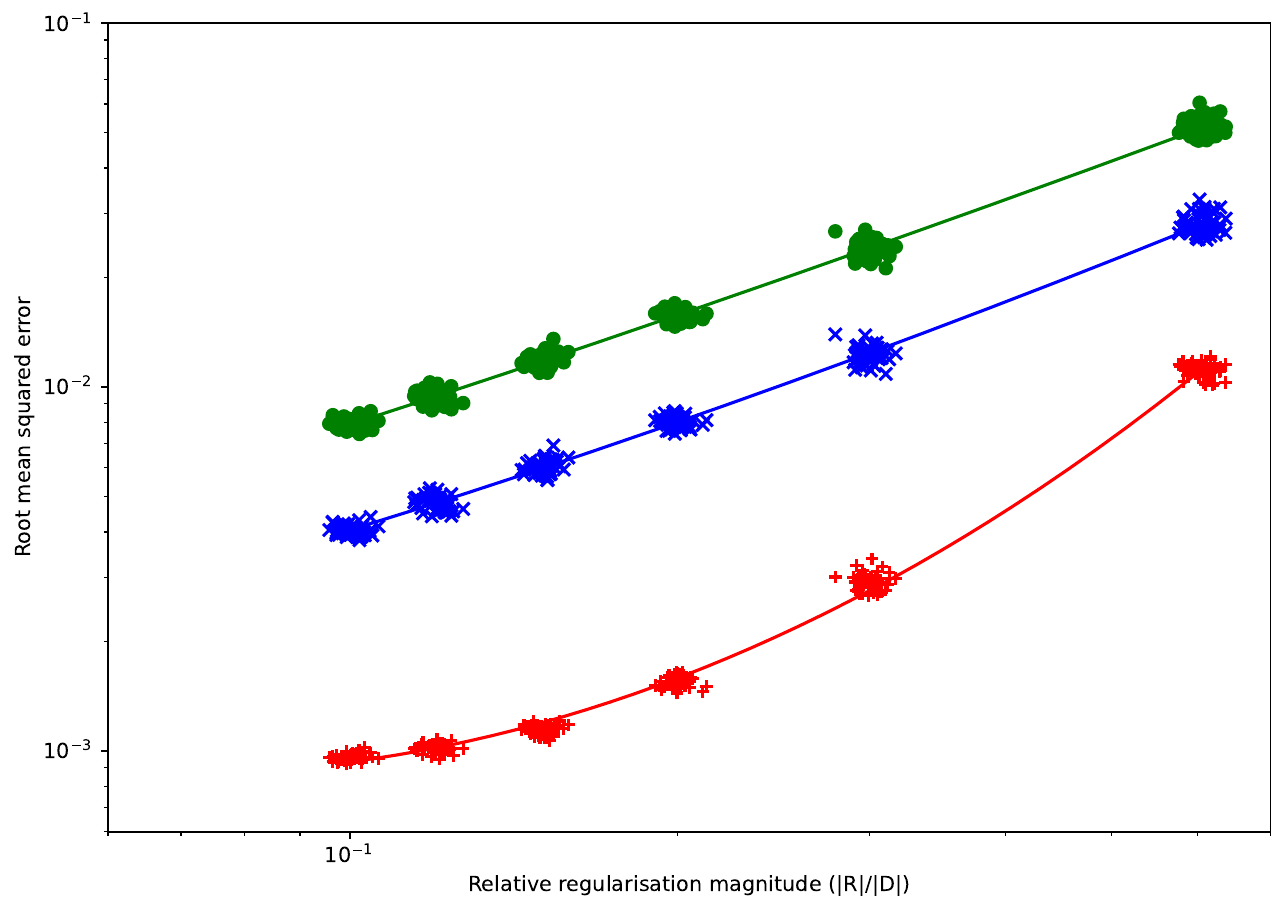}
         \caption{RMSE for 20x20 hierarchical label marginals}
     \end{subfigure}
        \caption{Comparision of marginal distributions of variables between solution types ({\color{red}red:} Parallel loopy belief propagation, {\color{blue}blue:} Iterative mean field estimation, and {\color{ao(english)}green:} Bias term only with no regularisation) in terms of Root mean squared error (RMSE) and  expected KL Divergence ($\mathbf{E}[D_{KL}]$). Each point cloud includes 50 simulations.}
        \label{fig:stats_results_h}
\end{figure}

The results for the Ising model and the hierarchical model are shown in Figures \ref{fig:stats_results} and \ref{fig:stats_results_h} respectively. The spread of the values is consistent with the fact that both metrics (as well as the relative regularisation values on the x-axis) involve a degree of averaging and larger lattices have a higher number of variables to average over, hence a smaller spread. In general, the pLBP consistently outperforms the other methods at higher regularisations, likely as they still fall within the region of convergence for pLBP but not for MFI. This was less pronounced at lower regularisation values where all three methods converge towards the exact marginal distribution. This generally confirms our intuition that pLBP is more stable than MFI, which allows it to have a larger region of convergence outside of which, the instability can lead to higher quantitative errors as the estimated marginals tend to converge to extreme values close to 0 and 1 rather than the true marginal.

The errors appear to be fairly well modelled via a polylogarithmic model in the log-log space for the two developed operators, while being largely linear for data-terms only for both the binary and hierarchical label model. The primary difference appears to be related to how close the models are to the point of divergence, with the boundaries tested in the hierchical model experiment being further, due to the more conservative nature of the bound developed in the earlier chapter.

%% file: 4_exp_learning.tex
\section{Synthetic binary image segmentation}

The previous section confirmed that pLBP, especially at higher regularisation values, more accurately models the underlying distribution encoded in the CRF. However, in machine learning practice, one is often less interested in the distribution itself than whether or not the maximum \textit{a posteriori} probability configuration of said distribution produces accurate predictions on a given task. In order to investigate this in a controlled manner, we have created a synthetic image segmentation experiment designed to include some of the more difficult elements of medical image segmentation.

\subsection{Loss Functions}
One of the benefits of using a probabilistic approach to understanding deep learning is that it gives us an immediate definition of a particular loss function, the \textit{negative log-likelihood} (NLL) loss. Specifically, given a database of $N$ parameter-configuration pairs, $\psi^{(i)},y^{(i)}, i \in [1..N]$, the NLL loss is defined as:
\begin{equation*}
    NLL =  - \frac{1}{N}\sum_{i=1}^N \log P[y^{(i)}|\psi^{(i)}] = \frac{1}{N} \sum_{i=1}^N \left(\log Z[\psi^{(i)}] - E_{[\psi^{(i)},y^{(i)}]}\right)
\end{equation*}
where $P[u|\psi]$ is the probability framework defined by the PGM. Certain well-known loses, cross-entropy and binary-cross-entropy for example, can be understood as NLL losses for simple PGM's with no regularisation terms.

Interestingly, for deep learning, the loss function is not as necessary in comparison to the gradient of said loss function with respect to the parameters of the model. For models with linear energy terms such as PGMs, these can be simplified as:
\begin{equation*}
\begin{aligned}
    \frac{\delta NLL}{\delta\psi^{(i)}_L(x)} = & \frac{1}{N} \frac{\delta\log Z[\psi^{(i)}]}{\delta \psi^{(i)}_L(x)} - \frac{1}{N}\frac{\delta E_{[\psi^{(i)},y^{(i)}]}}{\delta \psi^{(i)}_L(x)} \\
    = & \frac{1}{N} \frac{1}{Z[\psi^{(i)}]} \sum_{u\in\mathbb{U}} \frac{\delta E_{[\psi^{(i)},u]}}{\delta \psi^{(i)}_L(x)} - \frac{1}{N}\frac{\delta E_{[\psi^{(i)},y^{(i)}]}}{\delta \psi^{(i)}_L(x)} \\
    = & \frac{1}{N} \sum_{u\in\mathbb{U}} \frac{e^{E_{[\psi^{(i)},u]}}}{Z[\psi^{(i)}]} u_L(x) - \frac{1}{N} u^{(i)}_L(x) \\
    = & \frac{1}{N} \left( m_L(x) - u^{(i)}_L(x) \right) \\
\end{aligned}
\end{equation*}
where $m_L(x)$ is the marginal probability calculated for said bias term. (This is equivalent to the cross entropy loss for models without regularisation.) The same approach can also be applied for the regularisation terms. This means that even if one does not have a framework for directly estimating probabilities using a particular PGM, the only necessary thing for calculating the derivative of the NLL loss depending on it are the marginal distributions of the individual variables and the joint marginal distributions of adjacent variables.

The other loss function used in our experiment is the popular Dice coefficient loss \cite{sudre2017generalised}:
$$DCL = \frac{1}{N}\sum_{i=1}^N\sum_L \frac{2\sum_x m_L(x) y^{(i)}_L(x)}{\sum_x m_L(x) +\sum_xy^{(i)}_L(x)+\epsilon}$$
where $\epsilon$ is a very small positive constant.
These approaches allow our method to be more readily seen in the context of the state-of-the-art methods in integrating CRF's into deep learning, such as CRF-as-RNN \cite{zheng2015conditional}. The additional of the edge map as a subject for a Dice loss should also encourage lower regularisation values at edges in the ground-truth segmentation while increasing them elsewhere.

\subsection{Synthetic Data Generation}
In order to minimise questions of annotation uncertainty or data availability, our experiment uses synthetic data. Each image consists of a circle with a random radius of between 0.15 and 0.25 of the image size with a gradual central hypointensity. The image is then polluted with both low frequency and high frequency noise patterns as to mimic noise and distracting structures. This data has been designed to be visually more difficult to segment than most medical images with a purposefully low contrast-to-noise ratio of $\nicefrac{1}{2}$. Example image-segmentation pairs are given in Figure \ref{fig:sim_images}.

\begin{figure}
     \centering
     \begin{subfigure}[b]{0.225\textwidth}
         \centering
         \includegraphics[width=\textwidth]{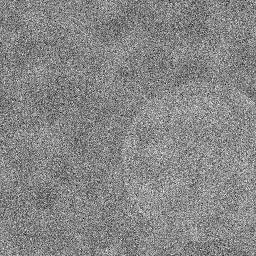}
     \end{subfigure}
     \begin{subfigure}[b]{0.225\textwidth}
         \centering
         \includegraphics[width=\textwidth]{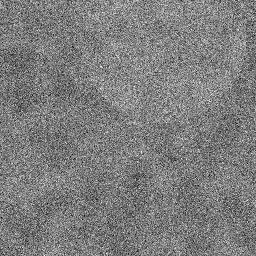}
     \end{subfigure}
     \begin{subfigure}[b]{0.225\textwidth}
         \centering
         \includegraphics[width=\textwidth]{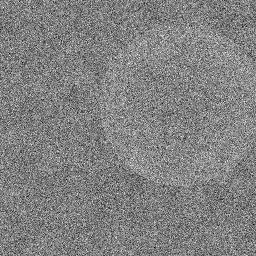}
     \end{subfigure}
     \begin{subfigure}[b]{0.225\textwidth}
         \centering
         \includegraphics[width=\textwidth]{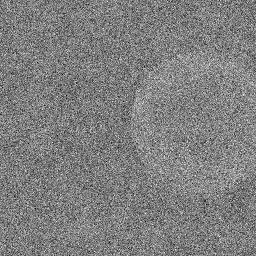}
     \end{subfigure}

    \vspace{1mm}
     
     \begin{subfigure}[b]{0.225\textwidth}
         \centering
         \includegraphics[width=\textwidth]{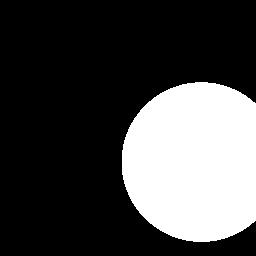}
     \end{subfigure}
     \begin{subfigure}[b]{0.225\textwidth}
         \centering
         \includegraphics[width=\textwidth]{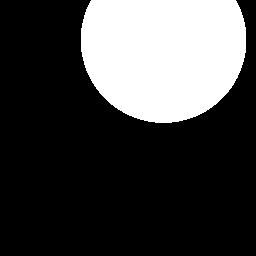}
     \end{subfigure}
     \begin{subfigure}[b]{0.225\textwidth}
         \centering
         \includegraphics[width=\textwidth]{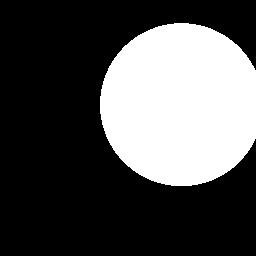}
     \end{subfigure}
     \begin{subfigure}[b]{0.225\textwidth}
         \centering
         \includegraphics[width=\textwidth]{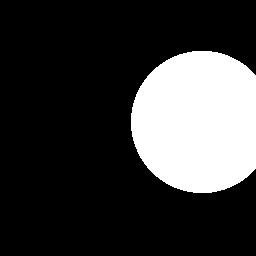}
     \end{subfigure}
     \caption{Example image (first row) and segmentation (second row) pairs of synthetic images for binary segmentation.}
     \label{fig:sim_images}
\end{figure}

\subsection{Networks}
In order to emphasise the effect of the models, a series of relatively simple networks (shown in Figure \ref{fig:sim_network_structure}) were used consisting of a number of convolutional layers of width 64 followed by rectified linear units. The final activation function is replaced with one of three options:
\begin{enumerate}
    \item \underline{A sigmoid function} representing the current paradigm without regularisation;
    \item \underline{A CRF marginal probability estimator using iterative mean-field estimation} which takes in two additional inputs (the regularisation term in the horizontal and vertical directions) and produces three outputs: the marginal probability distribution for each vertex similar to the sigmoid function above and two marginal probability distributions giving the probability of an edge existing between two adjacent vertices in either the horizontal or vertical direction; and
    \item \underline{A CRF marginal probability estimator using parallel loopy belief propagation} which has the same inputs/outputs as the MFI one described above.
\end{enumerate}
For the two CRF-based activation functions, additional regularisation terms are necessary. In order to minimise the number of additional parameters in the CRF models, a single ``edge map'' (between 0 and 1) and the separate horizontal and vertical terms are set to be the mean value of this map for two adjacent vertices, multiplied by a constant to keep the algorithms provably convergent ($ln(2)$ for pLBP and $0.5$ for MFE). This means that the last layer has contains one convolution with 1 output channel for the bias term and a second convolution with 1 output channel for the edge map to be used by the CRF-based solution algorithms. This limits the added number of parameters to 65, which is small compared to the remainder of the network. All networks were given the same randomly initialised weights for all shared layers and presented with the same data in the same order, controlling for these sources of randomness and allowing for paired comparisons.

\begin{figure}[h]
    \centering
    \includegraphics[width=0.575\linewidth]{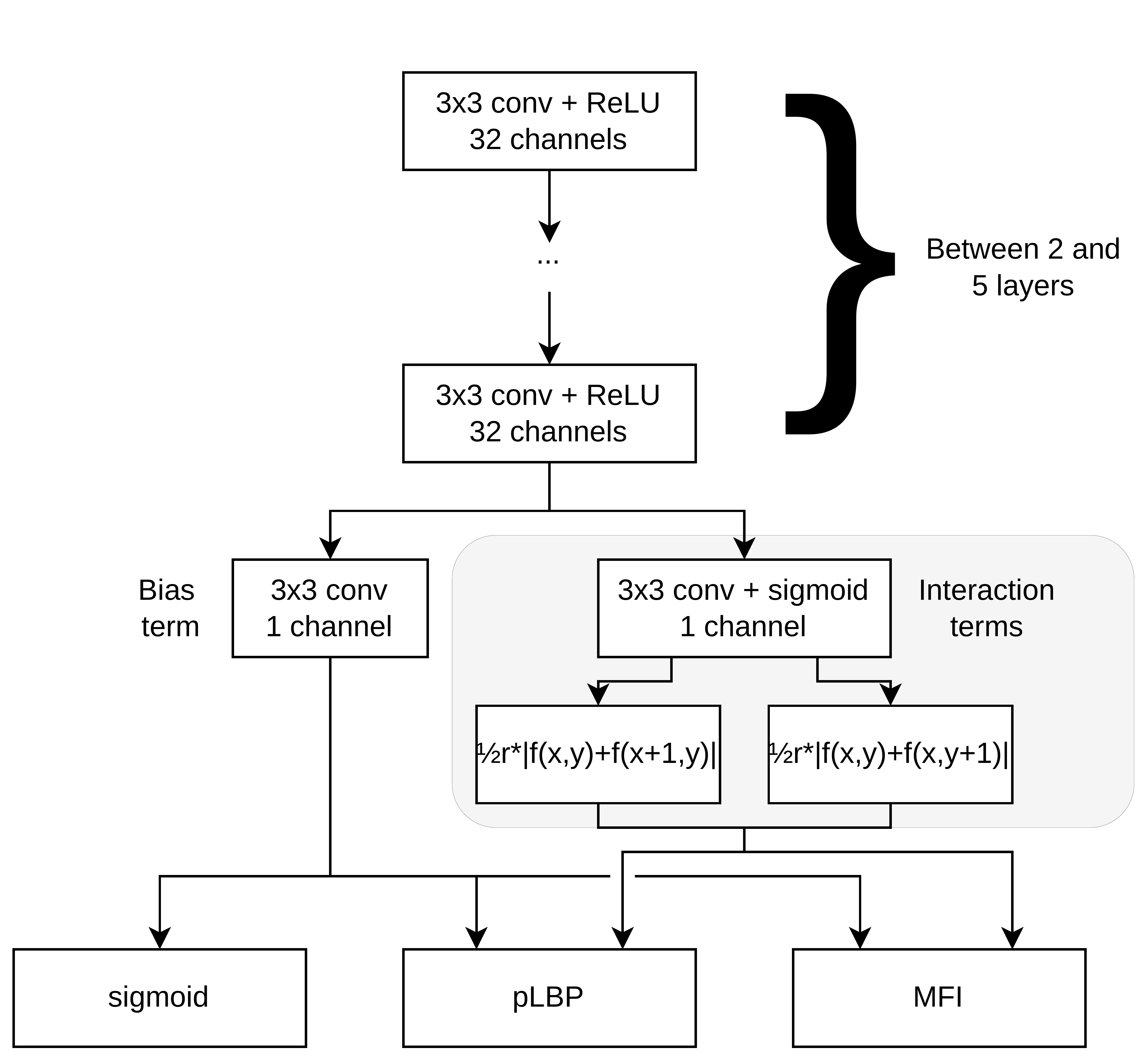}
    \caption{Network structure for simulated learning experiment}
    \label{fig:sim_network_structure}
\end{figure}

The experiment was repeated 10 times for each network configuration with different random initialisations. This repetition allows us to gauge the robustness of any performance differences.

\subsection{Results}

\begin{figure}[p]
    \centering
    \includegraphics[width=0.85\linewidth]{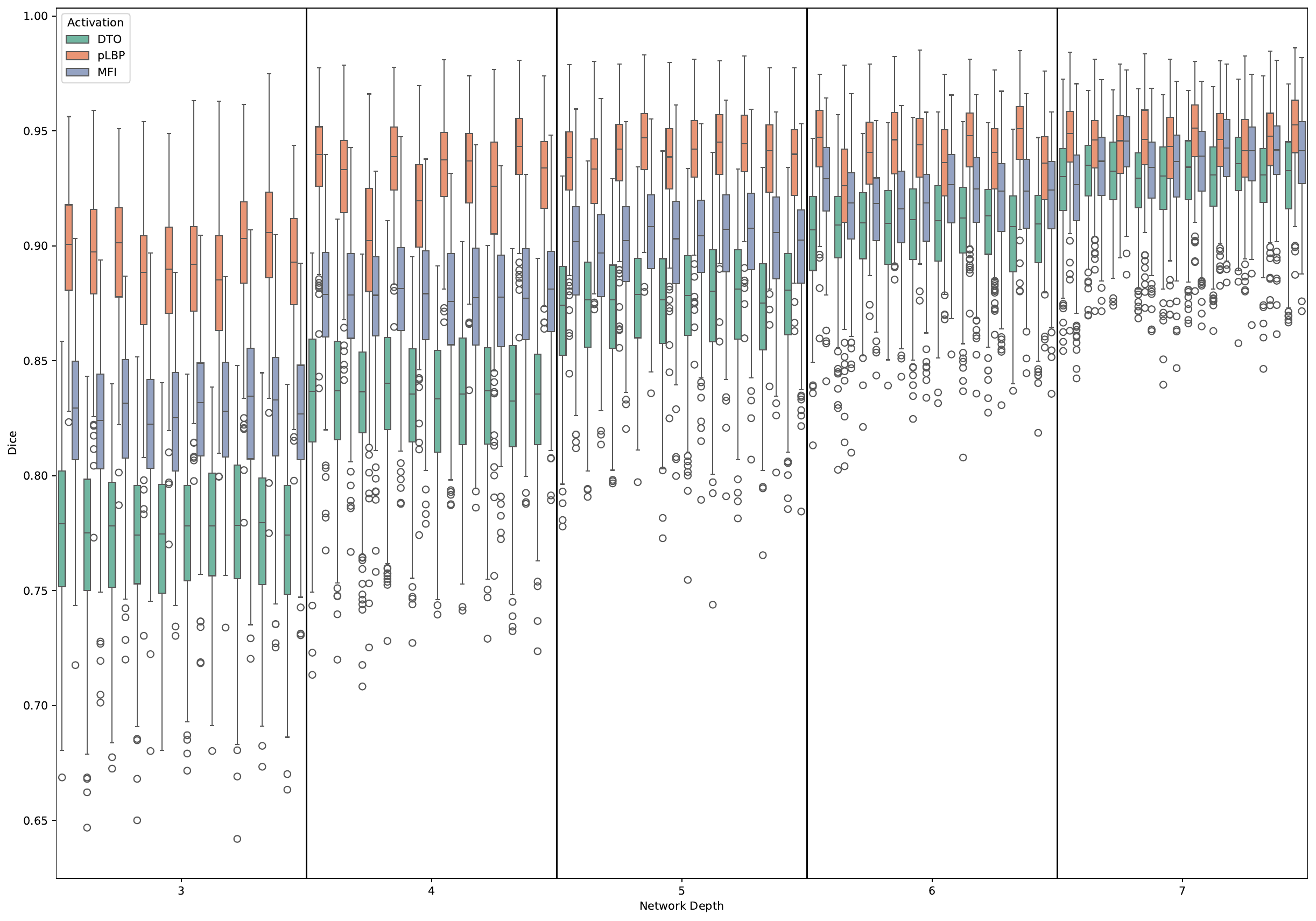}
    \caption{Dice scores for synthetic image segmentation across 10 replications for each network depth using the negative log-likelihood loss function. }
    \label{fig:synth_seg_res_nll}
\end{figure}

\begin{figure}[p]
    \centering
    \includegraphics[width=0.85\linewidth]{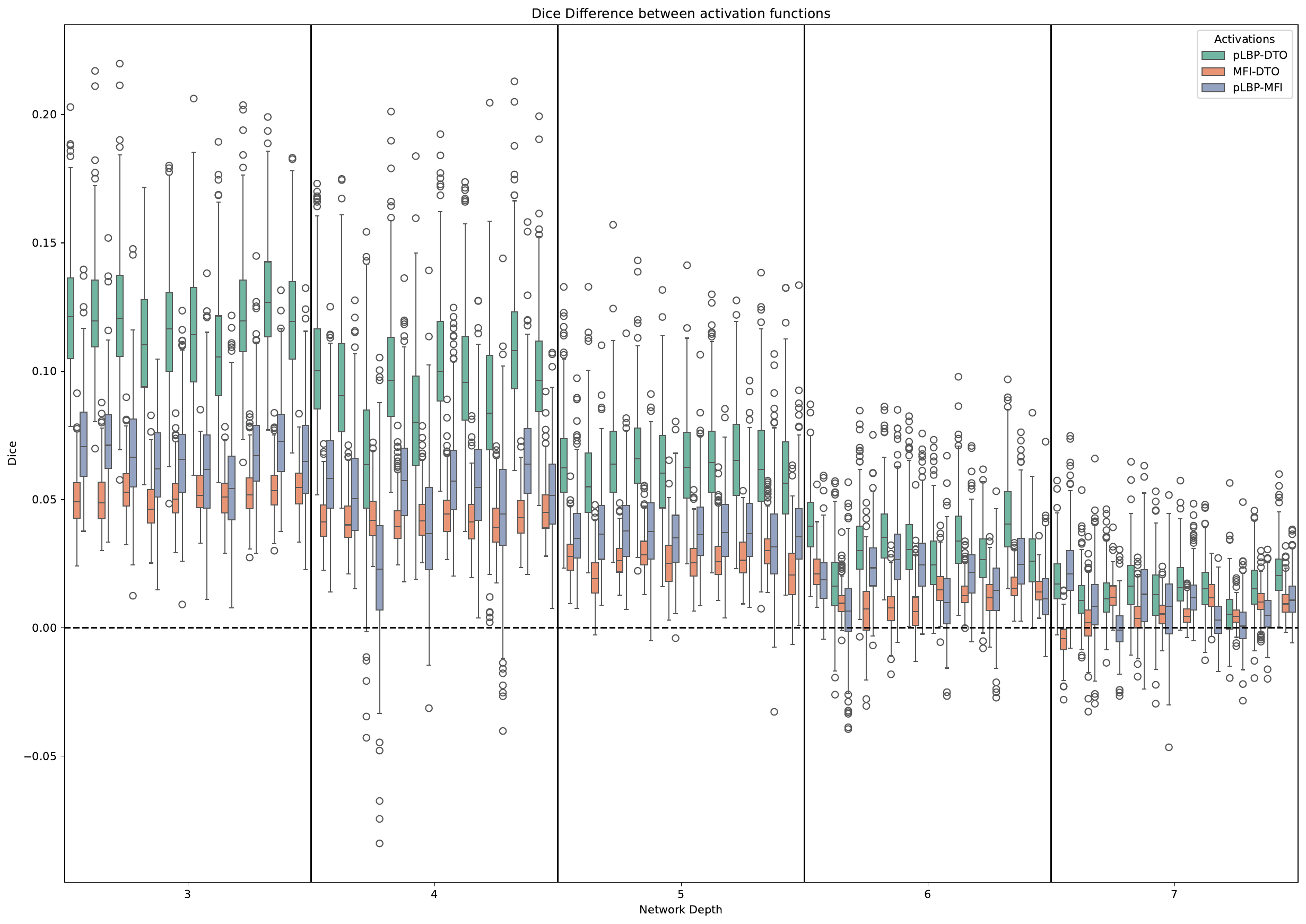}
    \caption{Distribution of paired differences in Dice scores for synthetic image segmentation across 10 replications for each network depth using the negative log-likelihood loss function. }
    \label{fig:synth_seg_diff_nll}
\end{figure}

\begin{figure}[p]
    \centering
    \includegraphics[width=0.8675\linewidth]{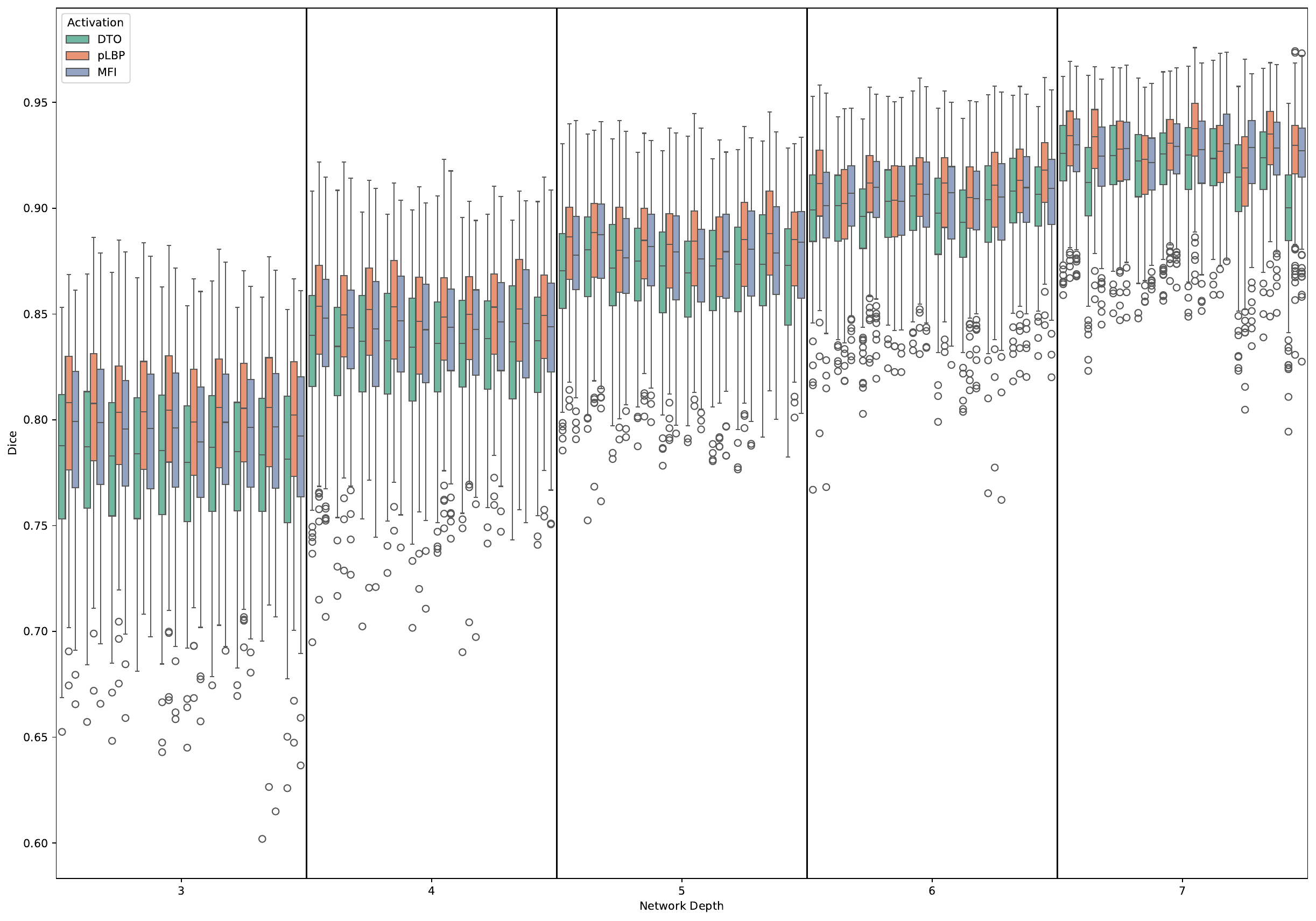}
    \caption{Dice scores for synthetic image segmentation across 10 replications for each network depth using the Dice loss function. }
    \label{fig:synth_seg_res_dice}
\end{figure}

\begin{figure}[p]
    \centering
    \includegraphics[width=0.8675\linewidth]{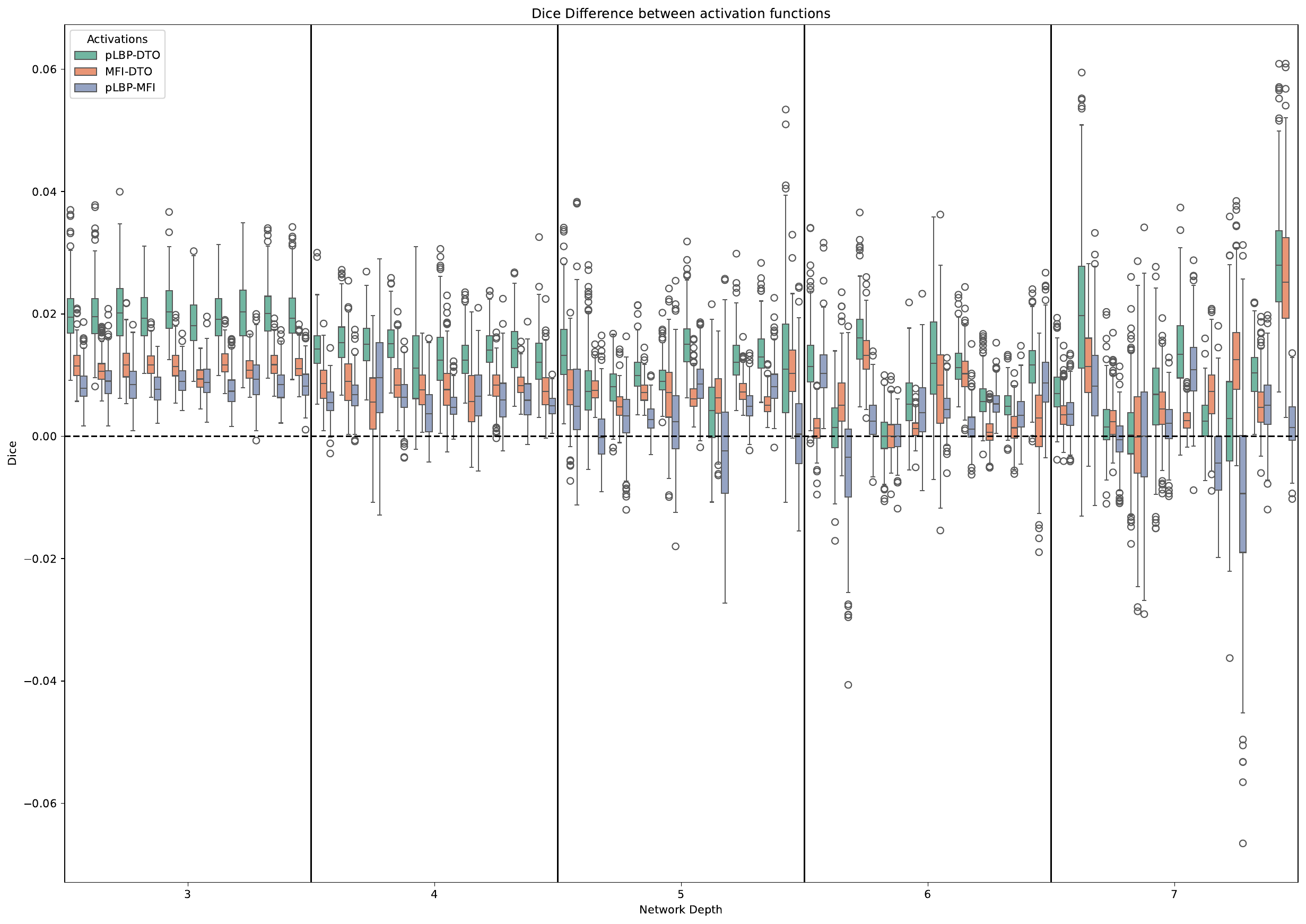}
    \caption{Distribution of paired differences in Dice scores for synthetic image segmentation across 10 replications for each network depth using the Dice loss function. }
    \label{fig:synth_seg_diff_dice}
\end{figure}

The raw Dice results and paired differences for networks trained using the NLL loss are given in Figures \ref{fig:synth_seg_res_nll} and \ref{fig:synth_seg_diff_nll}, respectively. The same for the Dice loss are given in Figures \ref{fig:synth_seg_res_dice} and \ref{fig:synth_seg_diff_dice}. In general, the difference between the regularised models and the unregularised sigmoid was more pronounced for smaller models and when using the NLL loss function rather than the Dice loss function. For the former, it makes sense that smaller models would benefit more from the regularisation as it could help to overcome the relatively small receptive fields otherwise had by the final convolution layers. This would give them more of an ability to average over wider spatial areas to overcome noise.

In terms of comparing between loss functions, it is worth noting that the non-regularised network seems to perform better with the Dice loss function compared to the NLL (i.e. binary cross entropy) loss function, replicating much of the literature on the subject. However, the opposite was seen with the regularised models, whoses performances were very similar to the non-regularised network. This can be explained by the fact that the NLL's gradient has components that target the bias and regularisation terms separately. This separation means that it is much more capable of training the regularisation terms at earlier intervals before high-quality bias terms can be found, whereas for the Dice loss, there is a fundamental issue where if we have two adjacent pixels that should have the same label but one of which is mislabelled, the regularisation term will receive conflicting gradients. That is, the regularisation term will be encouraged to increase in order to ``move'' probability from the correctly labelled variable to the incorrectly labelled one, while also being encouraged to decrease in order to prevent the movement of probability from the incorrectly labelled variable to the correctly labelled one. In terms of comparison with the literature, the Dice improvements are similar to the 2\% increase previously reported \cite{arnab2016higher} although this is principally seen with pLBP and not MFE. In addition, with the Dice loss, the amount of increase becomes much more unstable as the complexity of the network increases, possibly because of the increased complexity of the loss landscape.

Interestingly, with the NLL loss function, the performance for pLBP seems to flatten out rather early with a 4-layer network, seeing minimal improvements, largely suggesting that once a certain minimum receptive field is reached, the regularisation is enough to reach a maximal level of performance. With the Dice loss, the depressed regularisation terms discussed above would imply a more similar growth for both the regularised and non-regularised models.

\section{Synthetic structured multi-region image segmentation}

We repeated this synthetic experiment for multi-region segmentation as well to test the efficacy of learning more complex multi-region segmentation problems with heterogenous requirements between regions. This experiment was structured in a very similar manner to the previously described binary segmentation simulation.

\subsection{Synthetic Data Generation}
The simulated images were created using a nested foreground object with three nested regions of different intensities, placed on a two-part background. This leads to the same hierarchical structure as we evaluated previously (Figure \ref{fig:syn_labelordering}. The simulation used a mix of low- and high-frequency noise with contrast-to-noise ratio of $10$ to reflect the intermediate contrasts between similar-intensity objects. An example image is given in Figure \ref{fig:sim_images_hier}.

\begin{figure}[h]
     \centering
     \begin{subfigure}[b]{0.32\textwidth}
        \includegraphics[width=0.95\textwidth]{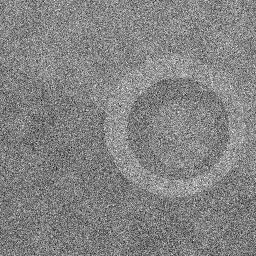}
     \end{subfigure}
     \begin{subfigure}[b]{0.32\textwidth}
        \includegraphics[width=0.95\textwidth]{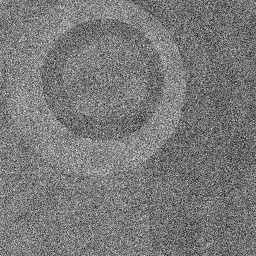}
     \end{subfigure}
     \begin{subfigure}[b]{0.32\textwidth}
        \includegraphics[width=0.95\textwidth]{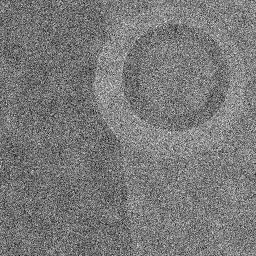}
     \end{subfigure}

    \vspace{2mm}
     
     \begin{subfigure}[b]{0.32\textwidth}
        \includegraphics[width=0.95\textwidth]{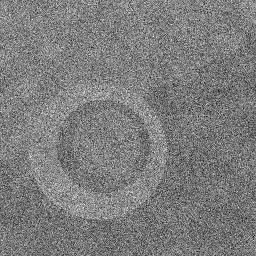}
     \end{subfigure}
     \begin{subfigure}[b]{0.32\textwidth}
        \includegraphics[width=0.95\textwidth]{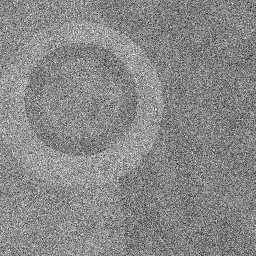}
     \end{subfigure}
     \begin{subfigure}[b]{0.32\textwidth}
        \includegraphics[width=0.95\textwidth]{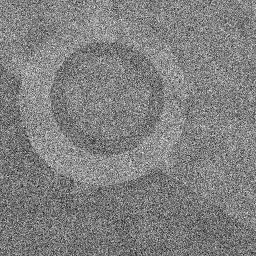}
     \end{subfigure}
     \caption{Example images with two-region background and three-nested-region foreground object.}
     \label{fig:sim_images_hier}
\end{figure}

\subsection{Networks and Loss Functions}
The networks used have the same structure as the previous experiment, shown in Figure \ref{fig:sim_network_structure}, although only with the first three depths. As with the previous simulation, the experiment was repeated 10 times for each network configuration with different random initialisations. This repetition allows us to gauge the robustness of any performance differences. The loss functions used were either the negative log-likelihood loss corresponding to the label ordering structure or the mean Dice loss function.

\begin{figure}[p]
    \centering
    \begin{subfigure}[b]{0.475\textwidth}
    \includegraphics[width=0.995\linewidth]{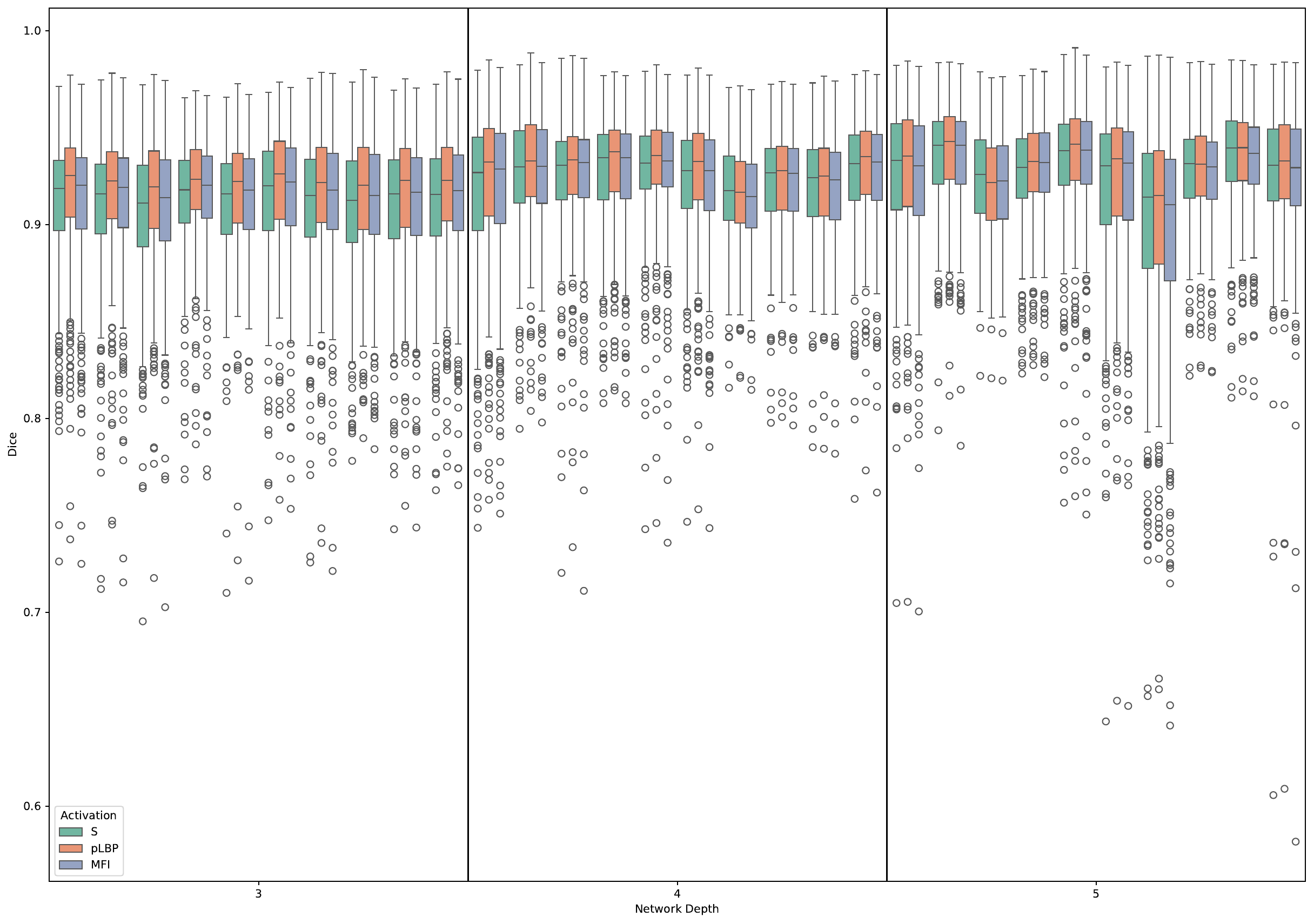}
    \end{subfigure}%
    \begin{subfigure}[b]{0.475\textwidth}
    \includegraphics[width=0.995\linewidth]{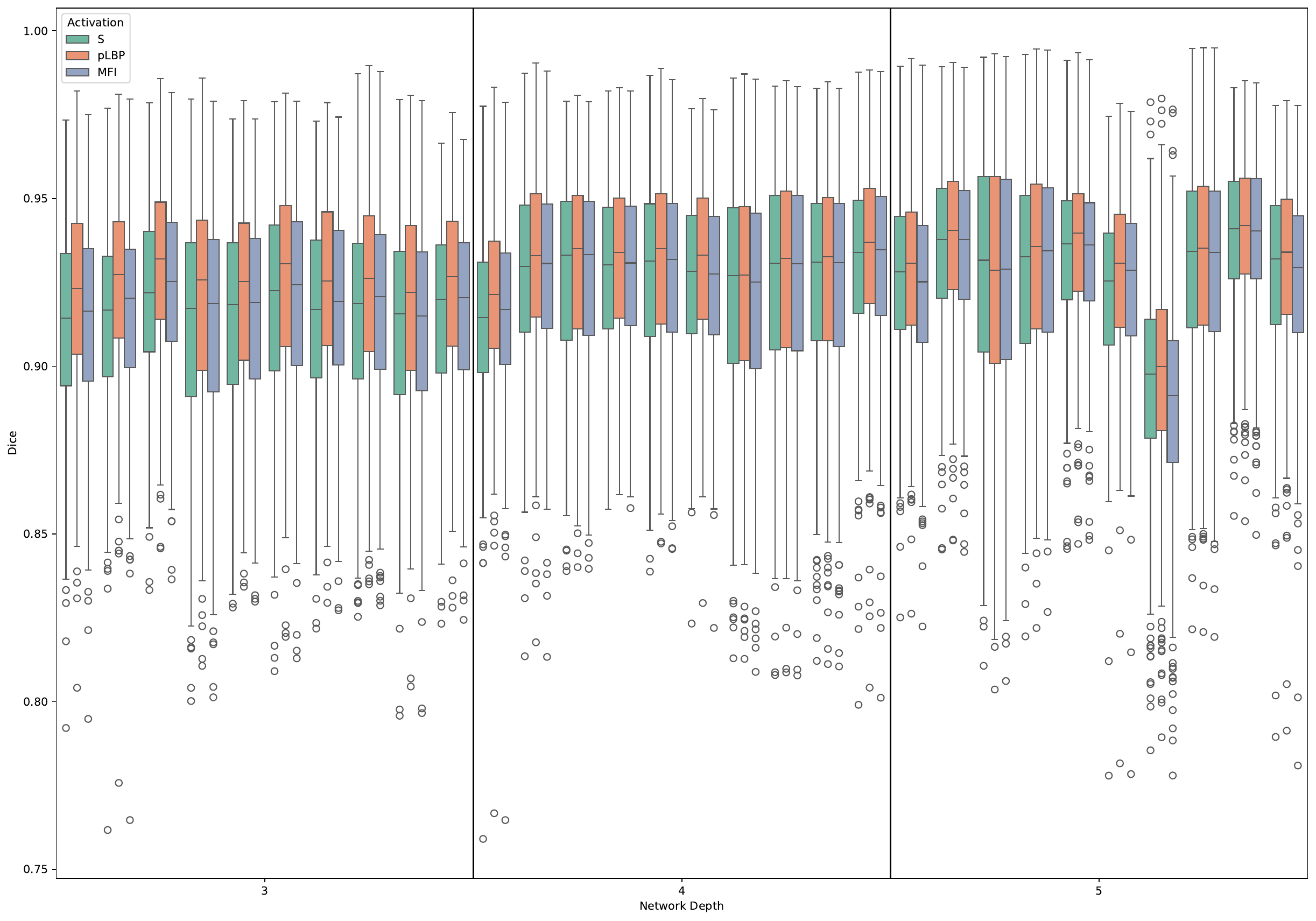}
    \end{subfigure}
    
    \begin{subfigure}[b]{0.475\textwidth}
    \includegraphics[width=0.995\linewidth]{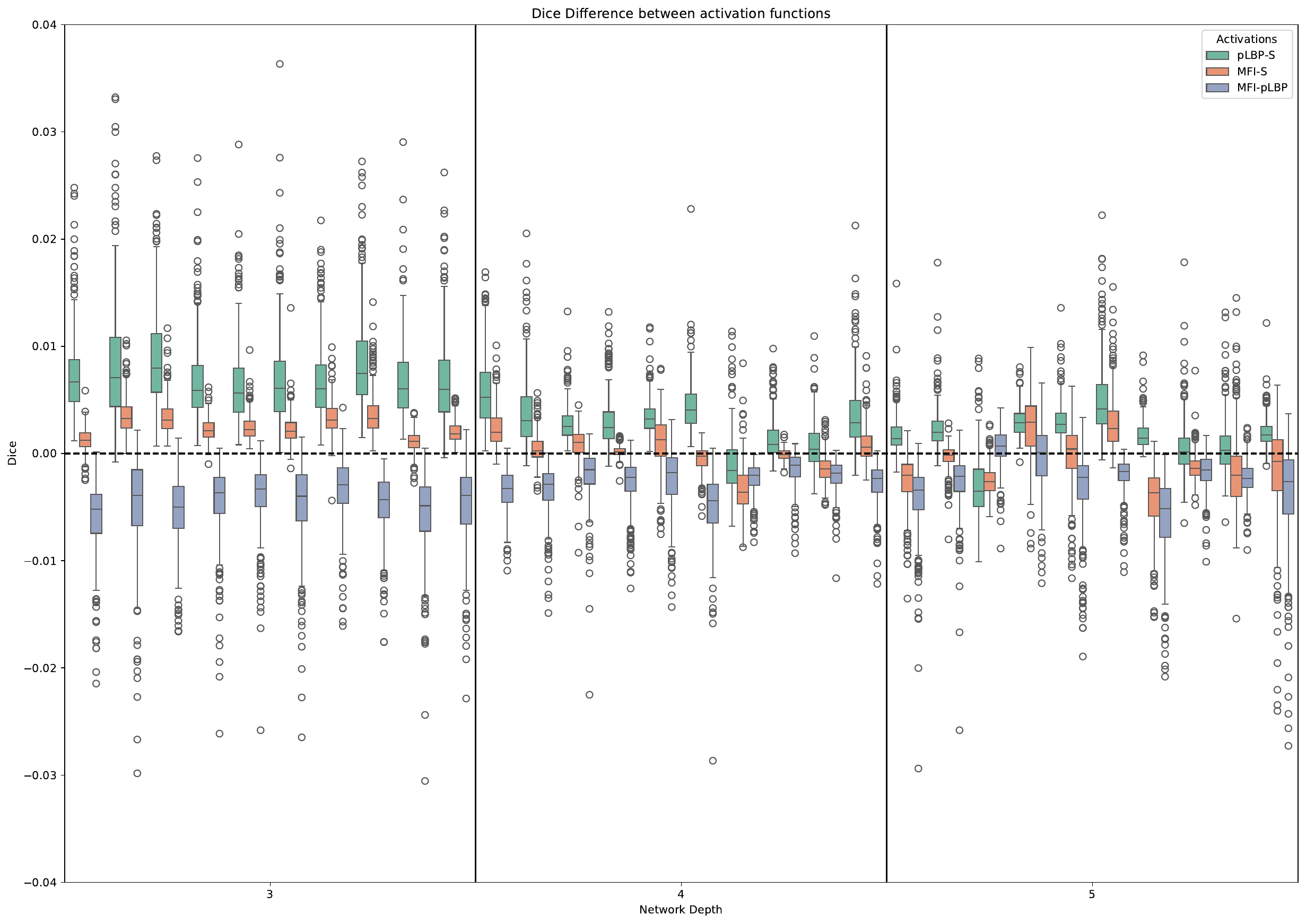}
    \end{subfigure}%
    \begin{subfigure}[b]{0.475\textwidth}
    \includegraphics[width=0.995\linewidth]{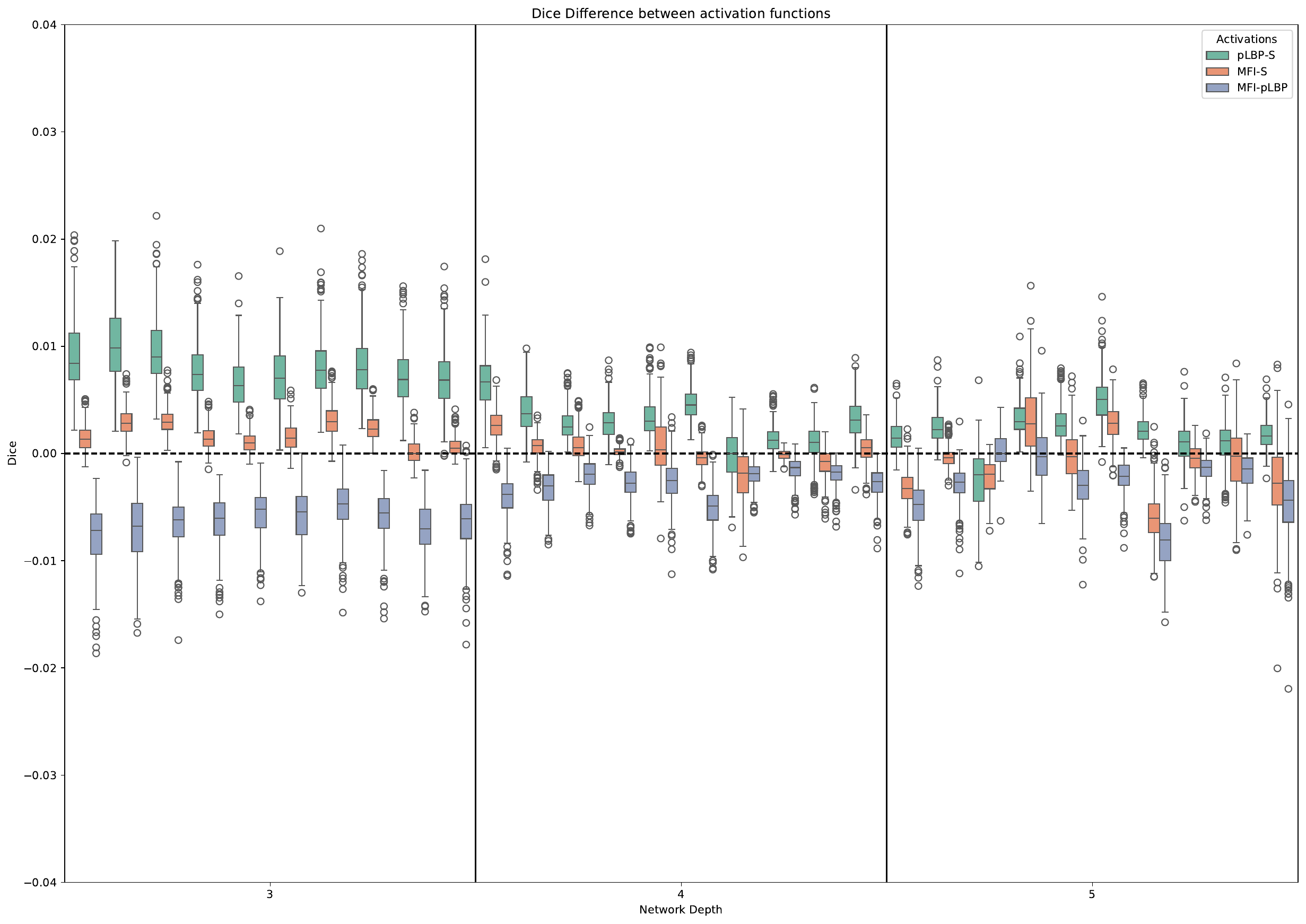}
    \end{subfigure}
    \caption{Dice scores (above) and differences (below) for the brighter (left) and darker (right) background component in the synthetic image segmentation across 10 replications for each network depth using the negative log-likelihood loss function. }
    \label{fig:synth_seg_res_mr_nll}
    
    \begin{subfigure}[b]{0.475\textwidth}
    \includegraphics[width=0.995\linewidth]{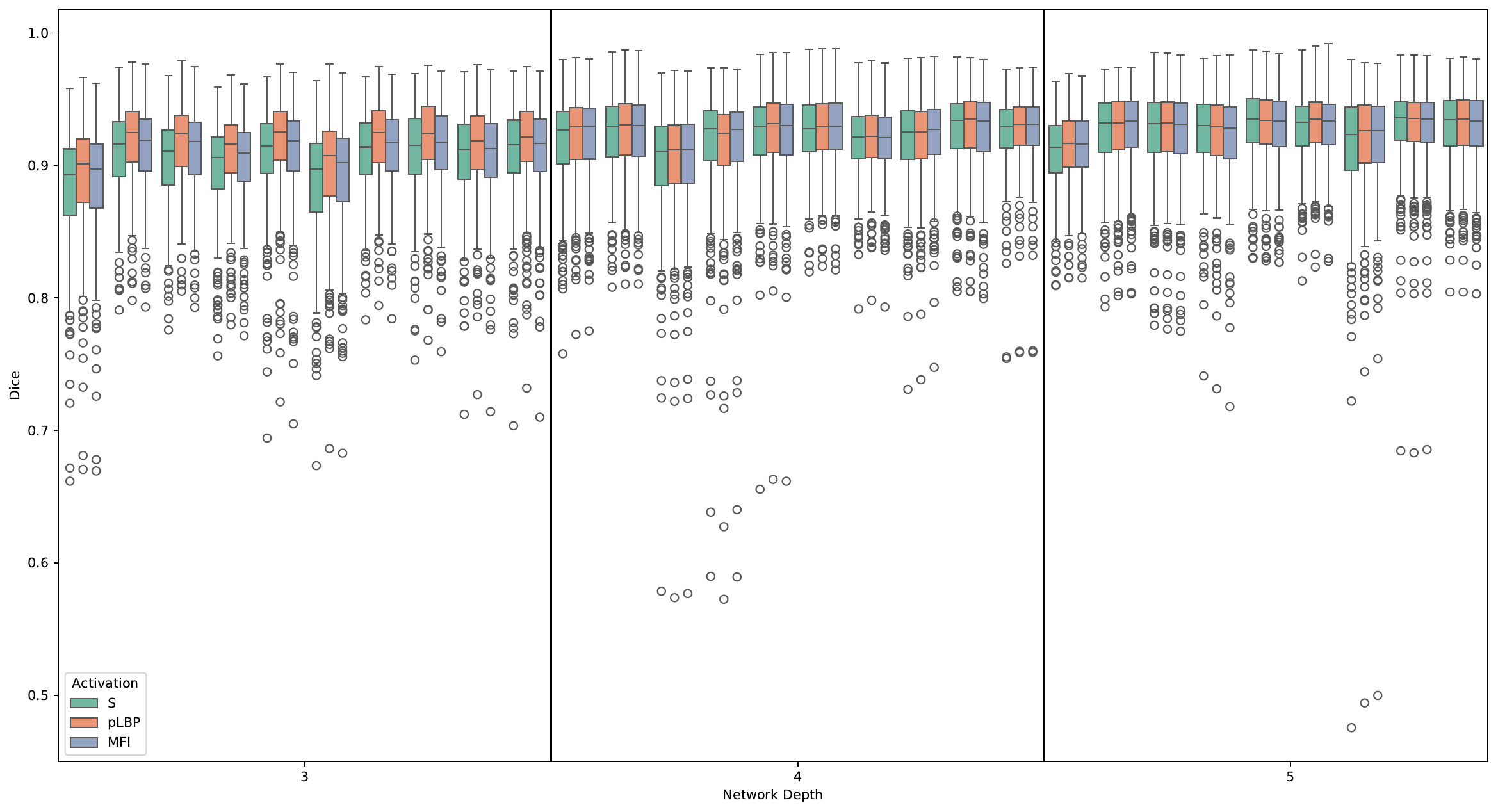}
    \end{subfigure}%
    \begin{subfigure}[b]{0.475\textwidth}
    \includegraphics[width=0.995\linewidth]{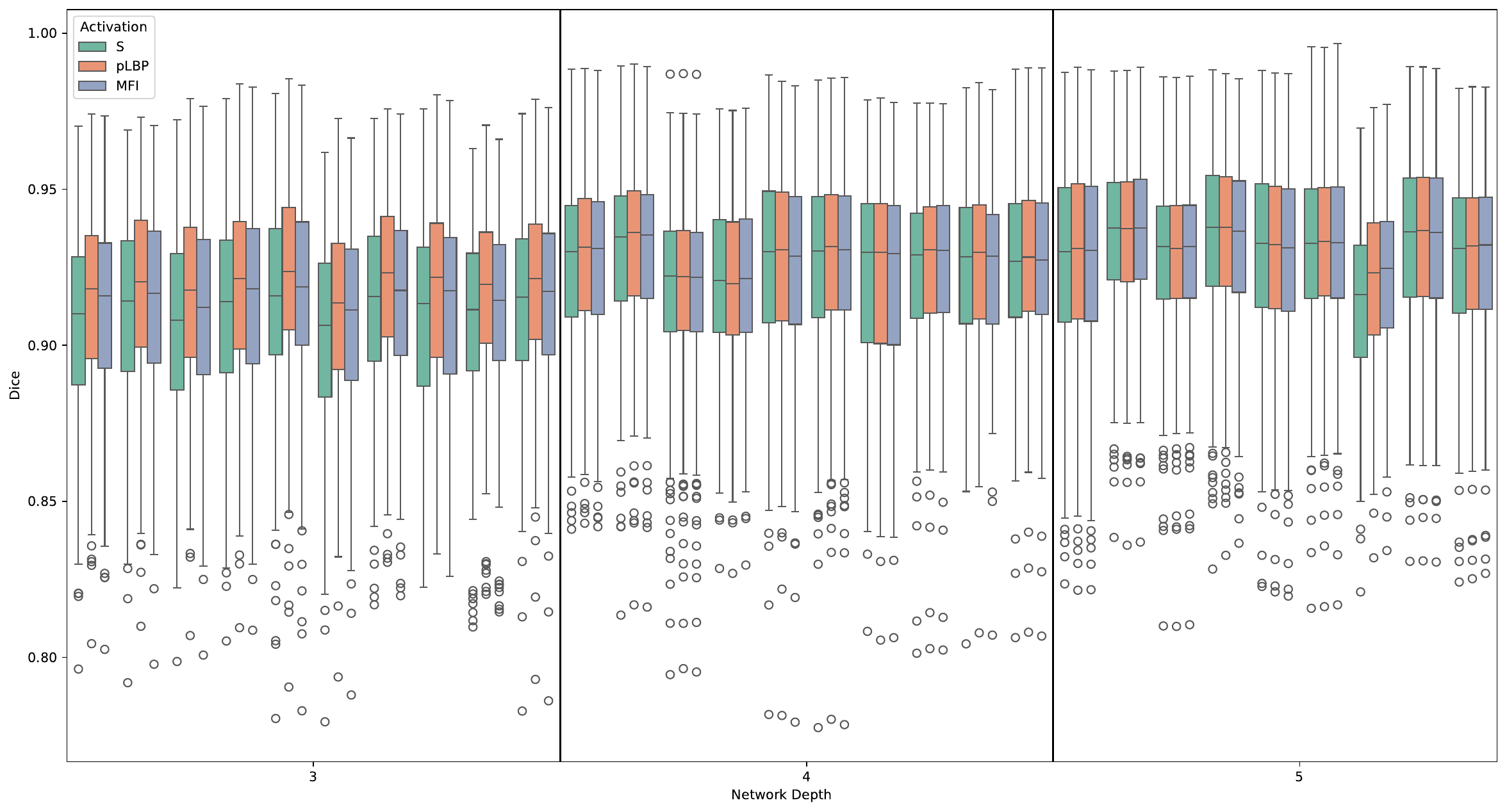}
    \end{subfigure}
    
    \begin{subfigure}[b]{0.475\textwidth}
    \includegraphics[width=0.995\linewidth]{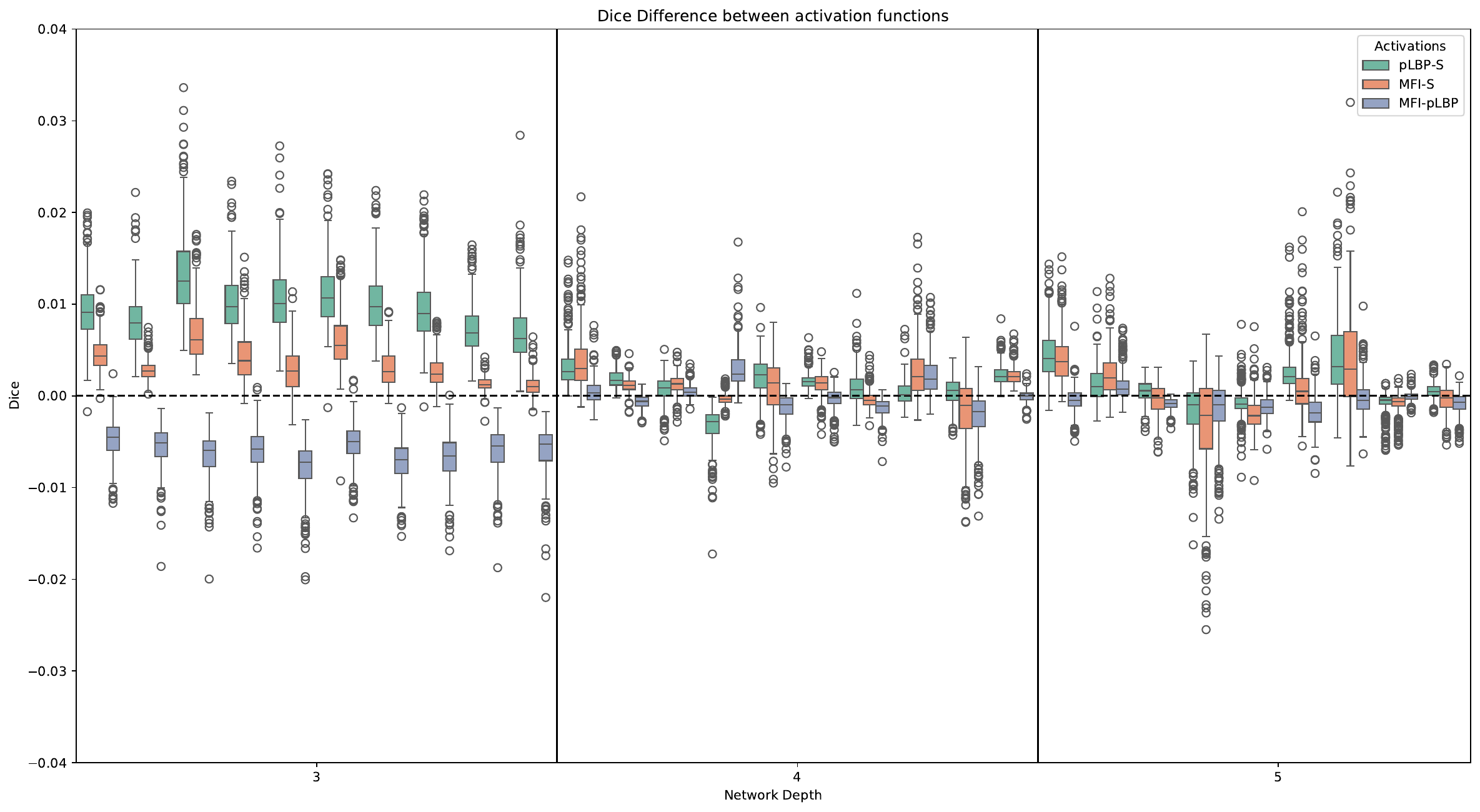}
    \end{subfigure}%
    \begin{subfigure}[b]{0.475\textwidth}
    \includegraphics[width=0.995\linewidth]{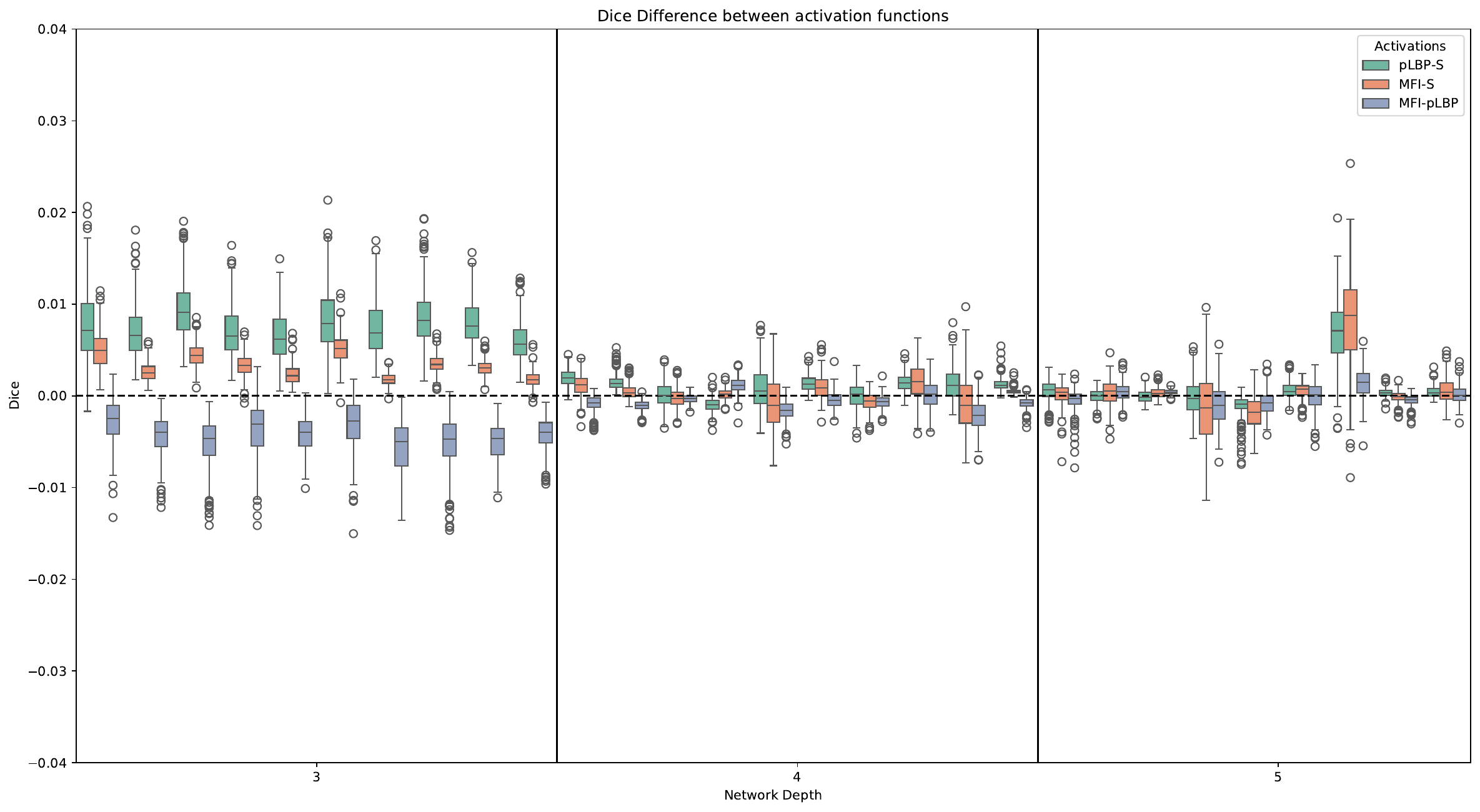}
    \end{subfigure}
    \caption{Dice scores (above) and differences (below) for the brighter (left) and darker (right) background component in the synthetic image segmentation across 10 replications for each network depth using the mean Dice loss function. }
    \label{fig:synth_seg_res_mr_dice}
\end{figure}

\begin{figure}[p]
    \centering
    \begin{subfigure}[b]{0.475\textwidth}
    \includegraphics[width=0.995\linewidth]{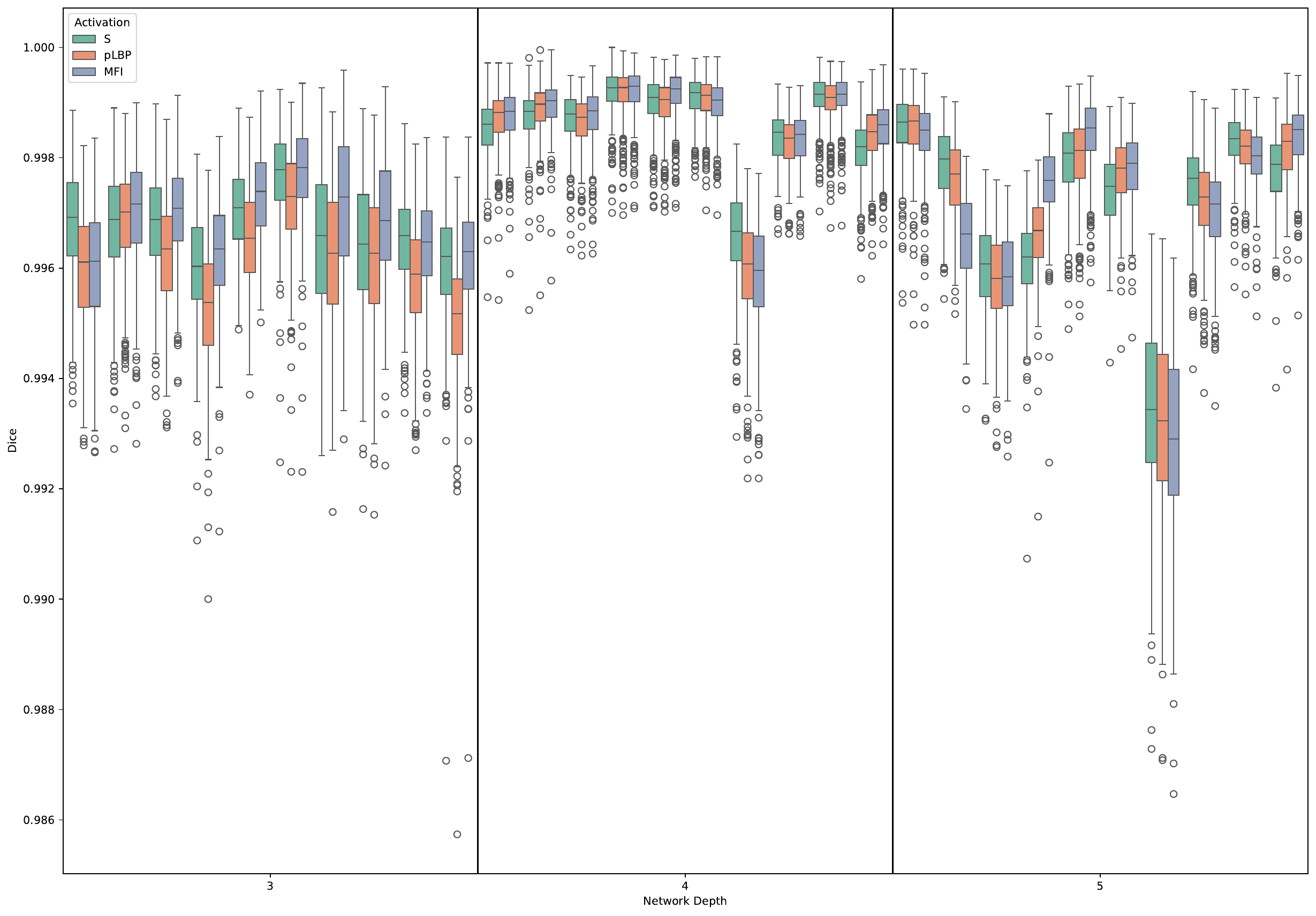}
    \end{subfigure}%
    \begin{subfigure}[b]{0.475\textwidth}
    \includegraphics[width=0.995\linewidth]{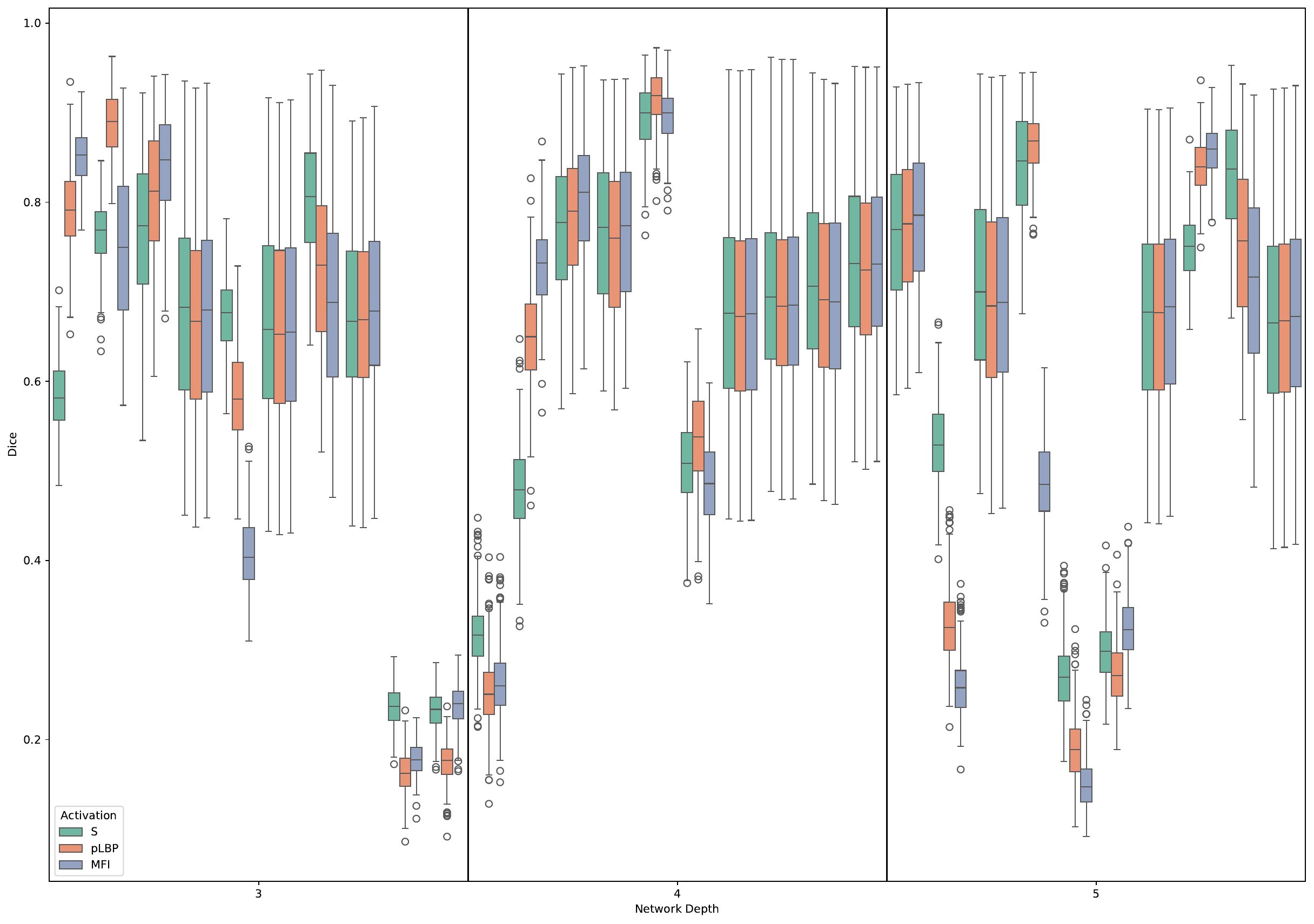}
    \end{subfigure}
    
    \begin{subfigure}[b]{0.475\textwidth}
    \includegraphics[width=0.995\linewidth]{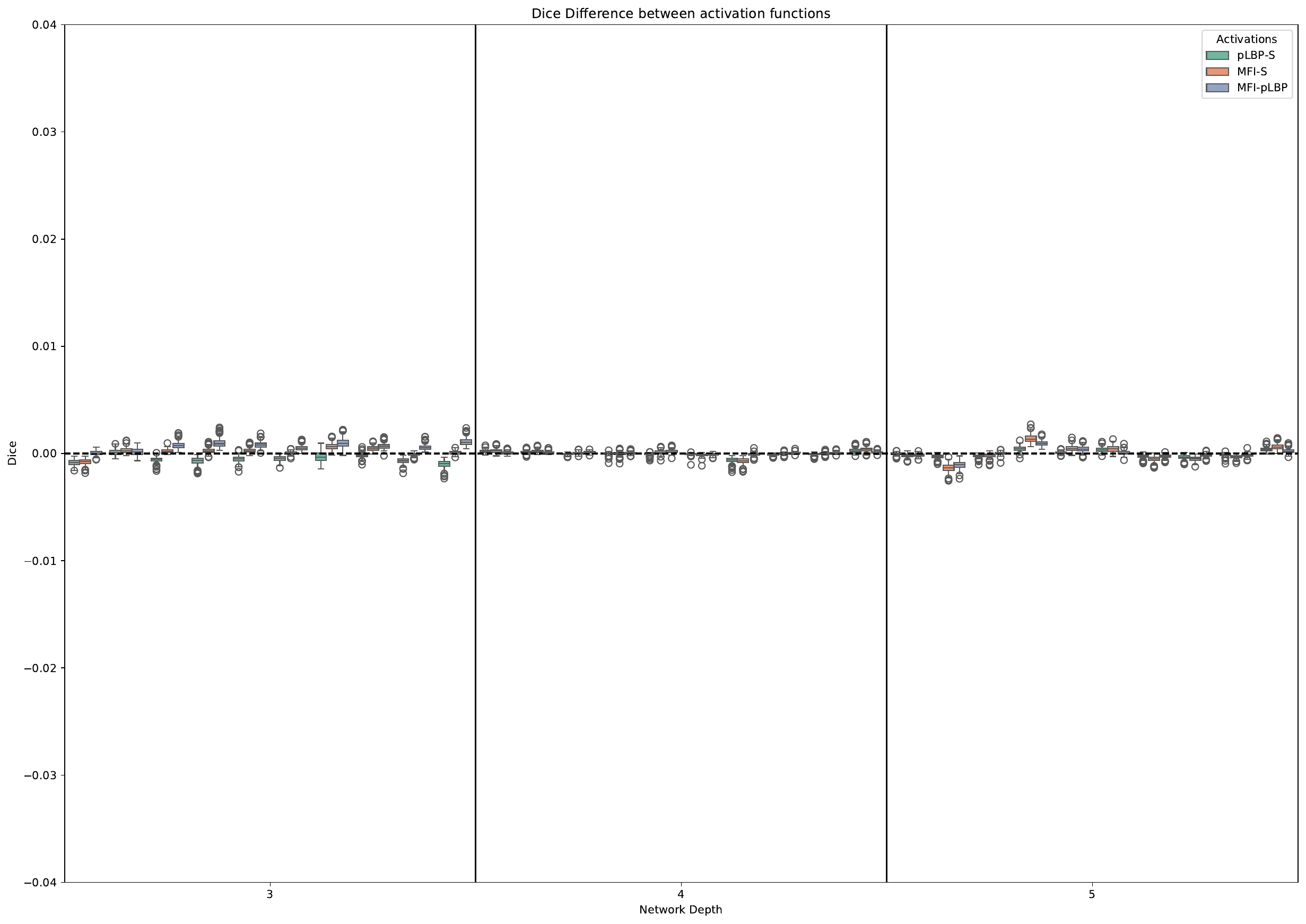}
    \end{subfigure}%
    \begin{subfigure}[b]{0.475\textwidth}
    \includegraphics[width=0.995\linewidth]{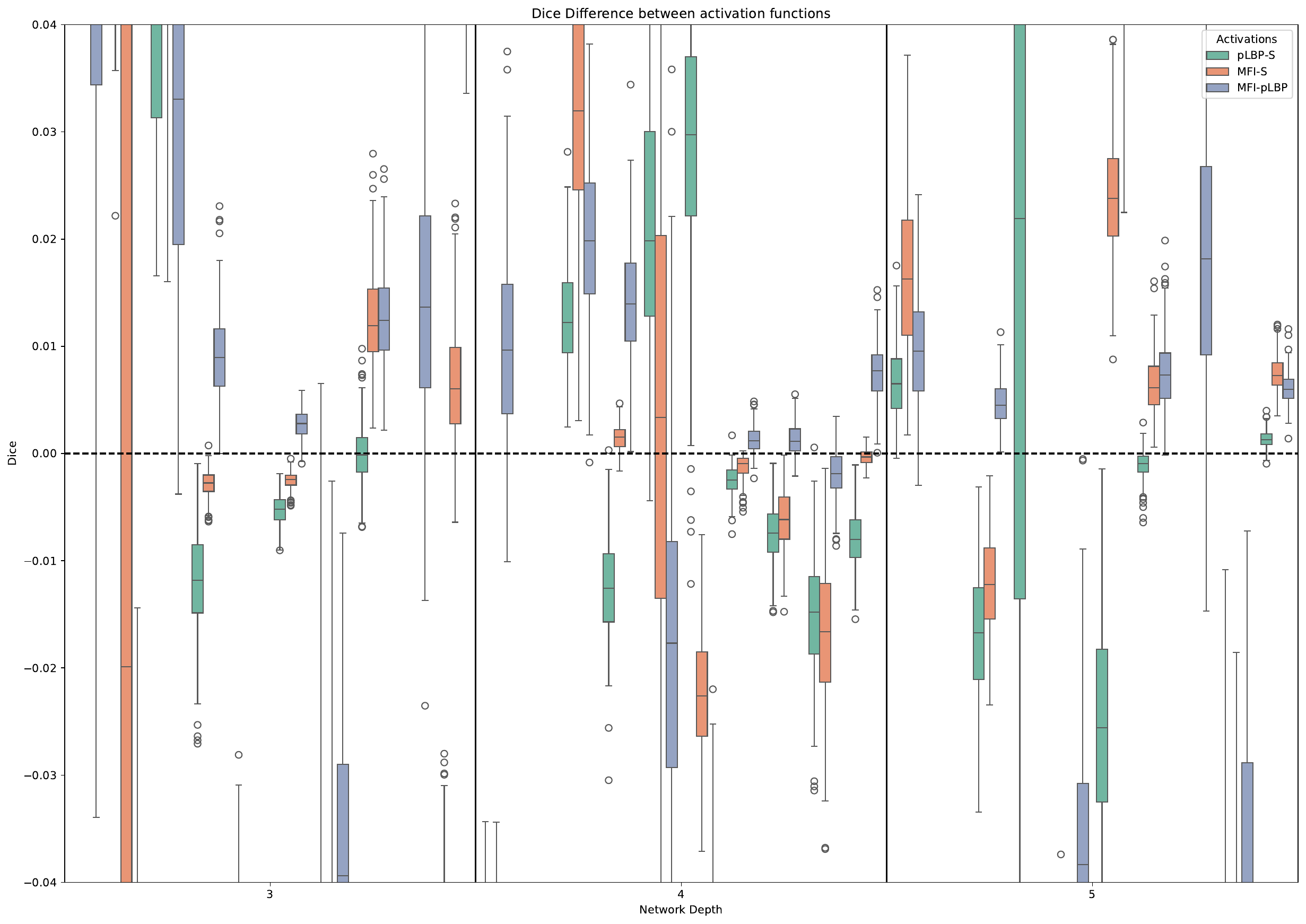}
    \end{subfigure}
    \caption{Dice scores (above) and differences (below) for the brightest and widest foreground component  (left) and for the darker and narrower foreground component  (right) in the synthetic image segmentation across 10 replications for each network depth using the negative log-likelihood loss function. }
    \label{fig:synth_seg_resf_mr_nll}
    
    \begin{subfigure}[b]{0.475\textwidth}
    \includegraphics[width=0.995\linewidth]{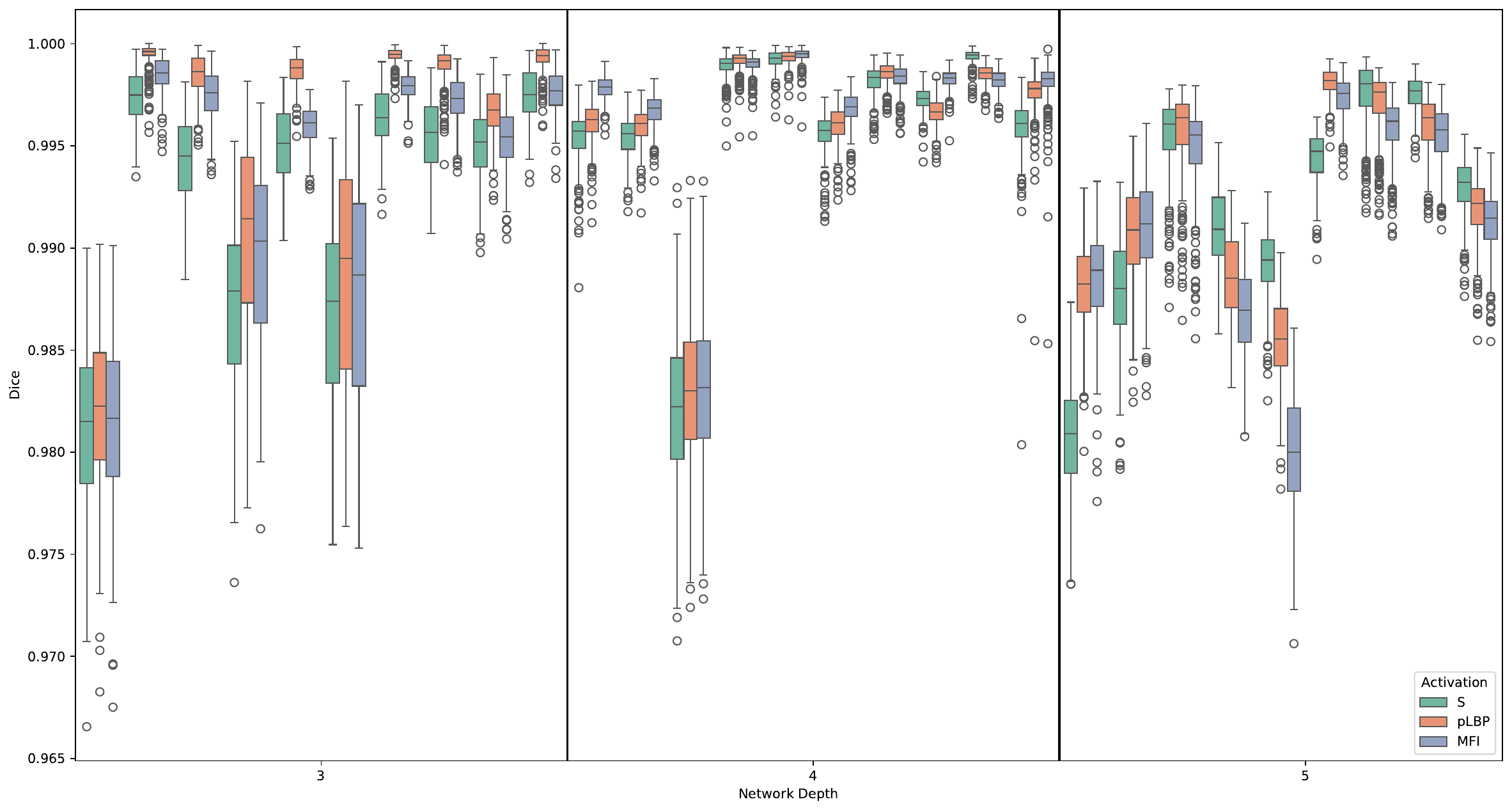}
    \end{subfigure}%
    \begin{subfigure}[b]{0.475\textwidth}
    \includegraphics[width=0.995\linewidth]{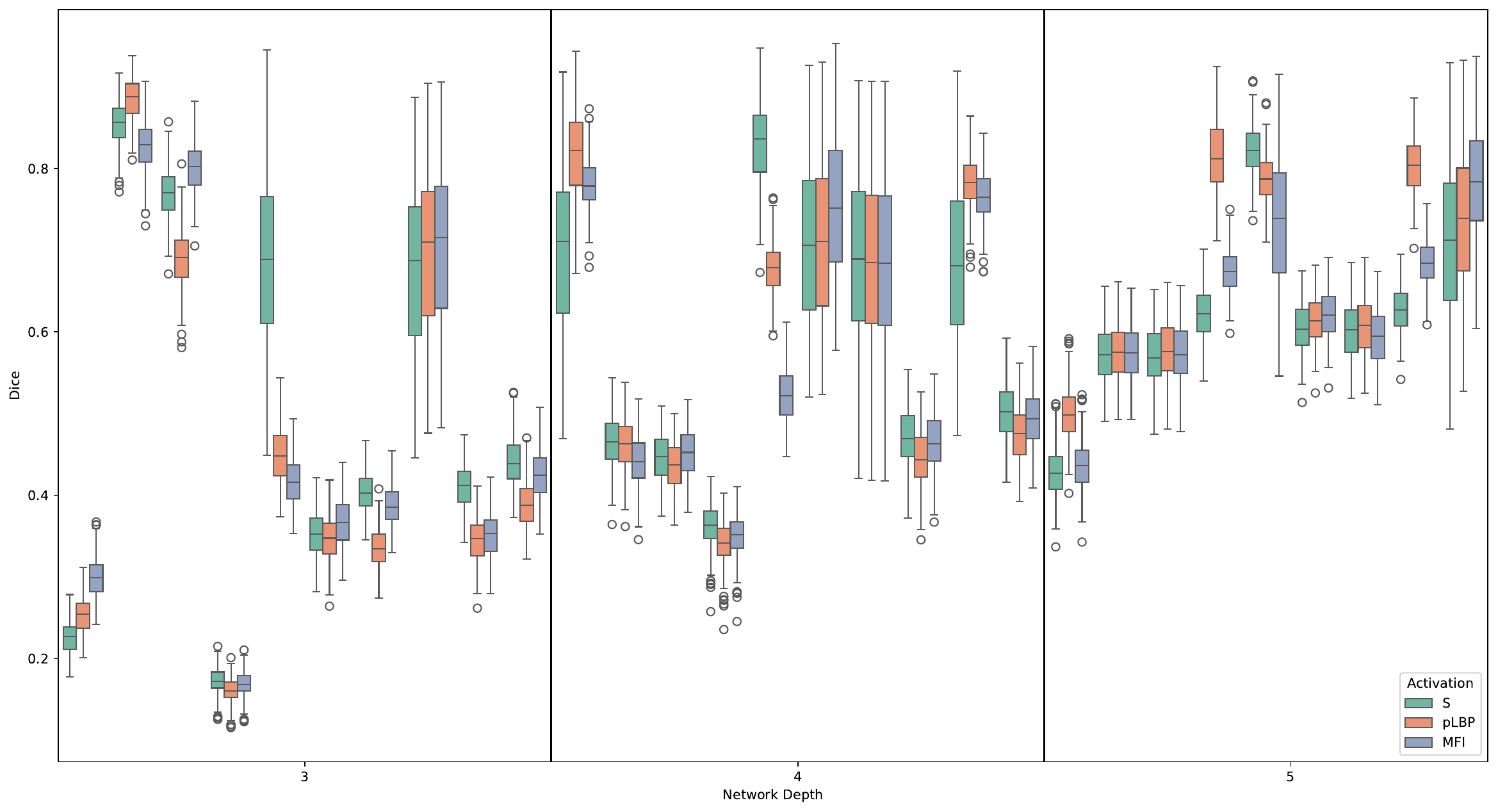}
    \end{subfigure}
    
    \begin{subfigure}[b]{0.475\textwidth}
    \includegraphics[width=0.995\linewidth]{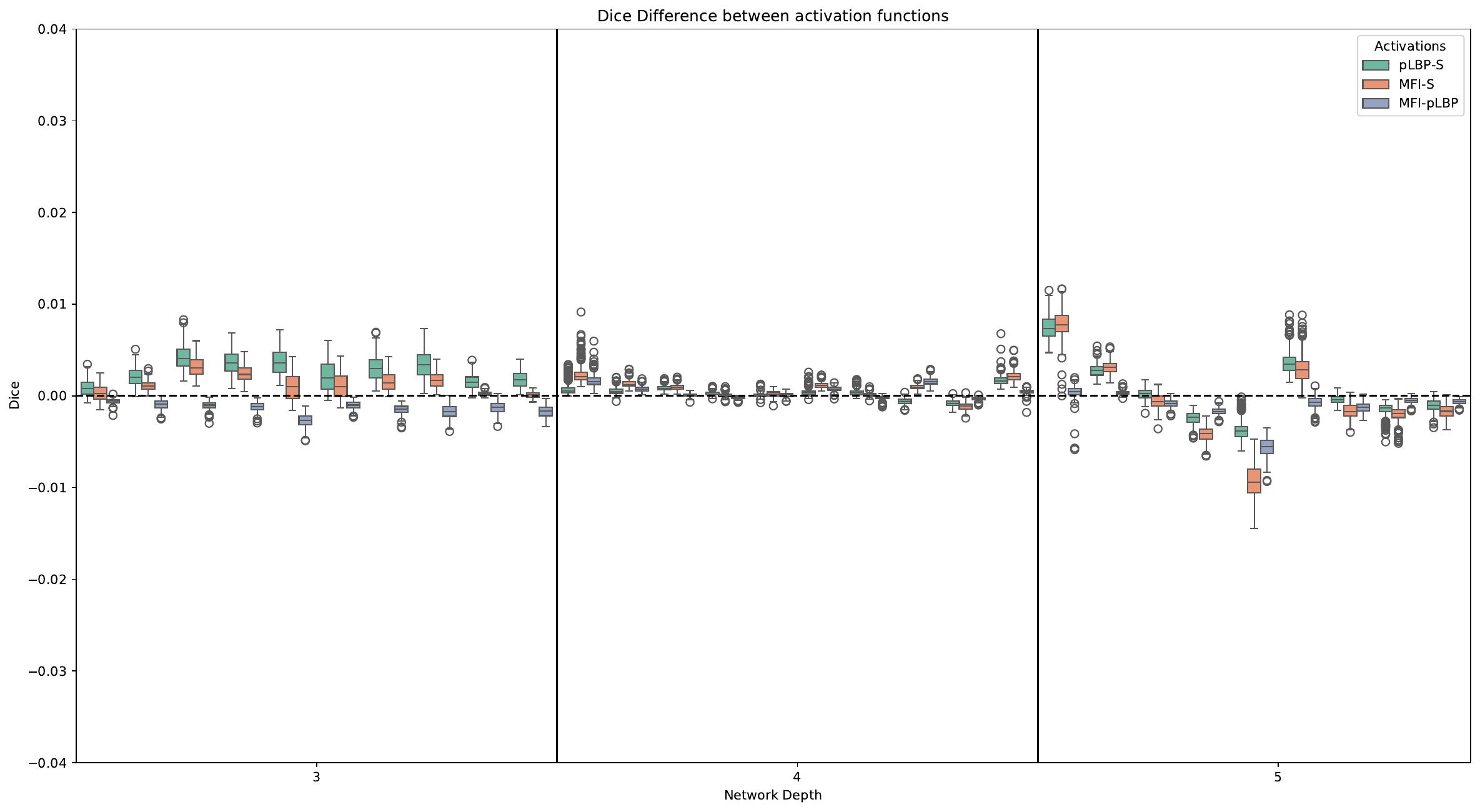}
    \end{subfigure}%
    \begin{subfigure}[b]{0.475\textwidth}
    \includegraphics[width=0.995\linewidth]{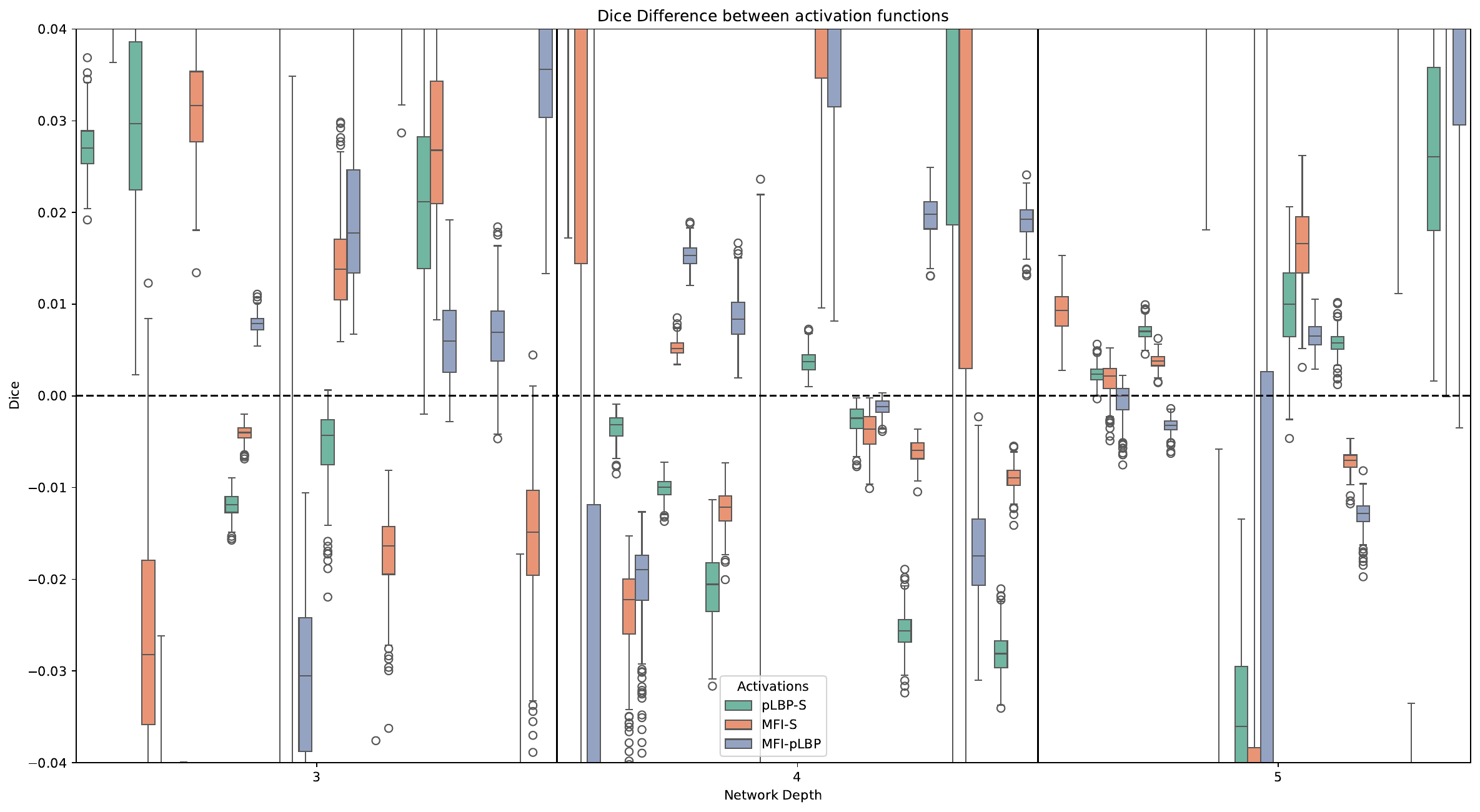}
    \end{subfigure}
    \caption{Dice scores (above) and differences (below) for the brightest and widest foreground component  (left) and for the darker and narrower foreground component  (right) in the synthetic image segmentation across 10 replications for each network depth using the mean Dice loss function. }
    \label{fig:synth_seg_resf_mr_dice}
\end{figure}

\subsection{Results}

The reduced noise meant that the overall performance started off significantly higher than in the binary case. This allowed for certain dynamics to play out earlier in terms of the network size, which was intended as to minimise the computational requirements of the simulation as a whole. It should also be noted that the regularisation limits are significantly more stringent for multi-label problems, making the performance generally more similar to the unregularised softmax/argmax-based comparator.

Due to the different structure sizes, it is unsurprising that different structures display different dynamics. The background structures, which have regularisation requirements closer to those of the binary segmentation experiment, showed very similar results with regularisation improving performance, particularly at lower network depths as shown in Figures \ref{fig:synth_seg_res_mr_nll} and \ref{fig:synth_seg_res_mr_dice}.

For the foreground components where regularisation requirements were more heterogeneous, the results were highly variable as well  as shown in Figures \ref{fig:synth_seg_resf_mr_nll} and \ref{fig:synth_seg_resf_mr_dice}. For the largest, brightest region, the data term was generally sufficient for high quality segmentation, although this was more unstable for the mean Dice loss function which was significantly more sensitive to smaller changes in the dark narrow strip in the foreground which could be missed almost entirely by all networks. Due to this variability, the effect of regularisation on these structures is difficult to tell, although it does suggest that it does not have a large effect, if any, on training stability.

%% file: 5_discussion.tex
\section{Discussion}
This article answers a question posed by previous researchers investigating the integration of pairwise Markov random fields and deep learning: the parameters under which a model is guaranteed to converge \cite{arnab2016higher}. Being able to quickly and easily identify this space is particularly important for deep learning as a model would need to be automatically constrained to said space during training time to prevent instability in order to prevent a single training step (such as generated NaN values) from infecting the entire model. 

Although previous research has been largely focused on mean-field iteration as the mechanism for estimating marginal probabilities in these models \cite{zheng2015conditional,arnab2016higher}, we have found that the more mathematically rigorous loopy belief propagation has strictly better convergence properties that are also bourne out in synthetic experiments on marginal probability accuracy and overall model learning.

We have also found that the use of these models with the appropriate negative log-likelihood loss function allows for smaller models to reach the same level of performance as larger ones, indicating that this regularisation scheme may be most useful when training data sizes are limited. This may however depend on the quality of the annotations (which is not an issue for synthetic data) as the gradient of the negative log-likelihood loss with respect to the regularisation terms is agnostic to where ``erroneous edges'' are placed rather than giving a smoother landscape in which edges closer to the ground truth ones are penalised less. Just as significant research has been performed on more nuanced loss functions for the interior pixels (such as the Dice loss investigated in our simulation), more adapted loss functions that take into account predicted and ground-truth edges may improve this behaviour for particular scenarios \cite{jadon2020survey}.

\subsection{Limitations}
Although the use of end-to-end deeply learned CRF models can significantly improve performance in low-memory and low-data settings, it does so at the expense of time. Although the number of iterations performed is variable, each is undoubtedly more complex than a single instantiation of an element-wise non-linear activation such as softmax or sigmoid. This gap between time and performance can be bridged through making the network more complex, as shown in our synthetic experiments, although this generally still leaves traditional activation functions with an advantage in terms of time, provided it has sufficient memory resources to handle the increased number of layers. Increasing network depth, however, implies a much larger number of weights, which in turn require more data, making regularised activation functions more appropriate in data-limited domains such as medical image segmentation \cite{salpea2022medical}.

\subsection{Future work}
One of the benefits of CRF regularisation is that there is a large amount of literature in the graph-cuts community which can furnish maximum \textit{a posteriori} solvers that can be paired with differentiable marginal estimators. In particular, pLBP allows for certain types of constraints, notably geometrical ones such as star-convexity \cite{veksler2008star,baxter2016shape}, to be incorporated which can be an important source of high-level knowledge that can be difficult for machine learning algorithms to learn, let alone recognise as a hard constraint. The current issue with such constraints is ensuring the convergence of the forward pass and existence of the derivative as the current proof structures rely on the $p_1$ and $p_\infty$ induced matrix norms being sub-unit which is trivially not the case in the presence of hard constraints which allow for ones to exist as elements in these matrices. Proofs based solely on this local information are necessarily going to fail as such a perspective cannot distinguish these constraints forming a cycle versus being valid star-convexity priors which cannot. Therefore, more complex proofs involving spectral graph theory, which can take a more global view of the constraints, are likely needed to put bounds on the regularisation terms implying convergence for models equipped with star-convexity priors.

\section{Conclusions}
This article presents some updated trivially parallelisable linear-time computable bounds for the convergence of parallel algorithms for estimating the marginal distribution as well as computing the derivative of said estimation in a family of pairwise Markov random fields that represent regularised node labelling problems in undirected graphs, that is, where two nodes being connected generally increases their probability of having the same label. These models represent a common, simple regularisation framework for semantic segmentation, especially in medical images where their can represent basic anatomical knowledge. The goal of having bounds that can be verified in constant-time using parallel computing  is to allow for them to be very quickly confirmed in the context of deep learning where such algorithms may need to be run a very large number of times for network training on large graphs (such as ones created from images under adjacency relationships) and thus efficiency is crucial.

%% file: A1_plbp_1.tex
\section{Supplement for Equation \ref{eq:lbp_e1} (Constraint using the $p_1$ induced Matrix Norm for Parallel Loopy Belief Propagation)}\label{app:lbp_e1}
The main part of Eq. \ref{eq:lbp_e1} on the fact that:
$$\underset{r_{[n,r_0]}(y_1\to e_1,L_k)}{
\mbox{max}}\sum_{L_j} r_{[n,r_0]}(y_1\to e_1,L_j)\left|\frac{
    p(L_1,L_j,e_1) - \sum_{L_k} r_{[n,r_0]}(y_1\to e_1,L_k) p(L_1,L_k,e_1)}{\sum_{L_k} r_{[n,r_0]}(y_1\to e_1,L_k) p(L_1,L_k,e_1)} \right|$$
takes on value
\begin{equation*}
        \frac{2(1-\sqrt{p_m})^2}{1-p_m} \mbox{ where } p_m = \underset{L_k}{\mbox{min}}\;p(L_1,L_k,e_1)
\end{equation*}
For the sake of notation, let us define:
\begin{equation*}
\begin{aligned}
    p_i &= p(L_1,L',e_1) \mbox{ ordering $L'$ such that } p_1 \geq p_2 \geq ... \geq p_m \\
    r_i &= r_{[n,r_0]}(y_1\to e_1,L') \mbox{ in the same order} \\
    \bar{p} &= \sum_i r_i p_i
\end{aligned}
\end{equation*}
and note that $p_i > 0$ and $p_1 = 1$ (and therefore $0<\bar{p}\leq1$) by the set-up of our model, and note that $r_i \geq 0$, and $\sum_i r_i = 1$ by the normalisation performed in the forward pass.

Without loss of generality, assume $p_1 \neq p_m$. For any given $r$, we can partition the our labels into two non-empty sets, specifically between those where $r_i \geq \bar{p}$ and those where $r_i < \bar{p}$, at a particular index $N$.
\begin{equation*}
\begin{aligned}
    E = & \sum_{i \leq N} \frac{r_i \left( 
    p_i - \bar{p} \right)}{\bar{p}} + \sum_{i > N} \frac{r_i \left(
    \bar{p} - p_i \right)}{\bar{p}} \\
    = & \frac{1}{\bar{p}}\left(\sum_{i \leq N} \left(
    r_i p_i - \sum_{j \leq N} r_i r_j p_j - \sum_{j > N} r_i r_j p_j \right) + \sum_{i > N} \left(
    \sum_{j \leq N} r_i r_j p_j + \sum_{j > N} r_i r_j p_j - r_i p_i \right)\right) \\
\end{aligned}
\end{equation*}
By differentiating this with respect to $r_i$ while holding $\bar{p}$ constant, we get two classes of derivatives:
\begin{equation*}
\begin{aligned}
\frac{\delta E}{\delta r_i} = & \left\{ \begin{array}{ll}
     \frac{\left((1-2r_i)p_i - \underset{j \leq N, j\neq i}{\sum} r_j(p_i+p_j) + \underset{j > N}{\sum} r_j(p_i-p_j)\right)\bar{p} - p_i\bar{p}E}{\bar{p}^2} & \mbox{ if } i \leq N \\
     \frac{\left((2r_i-1)p_i + \underset{j > N, j\neq i}{\sum} r_j(p_i+p_j) - \underset{j \leq N}{\sum} r_j(p_i-p_j)\right)\bar{p}-p_i\bar{p}E}{\bar{p}^2} & \mbox{ if } i > N
\end{array}\right. \\
\bar{p} \frac{\delta E}{\delta r_i} = & \left\{ \begin{array}{ll}
     p_i(1-E) - \underset{j \leq N}{\sum} r_j(p_i+p_j) + \underset{j > N}{\sum} r_j(p_i-p_j) & \mbox{ if } i \leq N \\
     p_i(-1-E) + \underset{j > N}{\sum} r_j(p_i+p_j) - \underset{j \leq N}{\sum} r_j(p_i-p_j) & \mbox{ if } i > N
\end{array}\right.
\end{aligned}
\end{equation*}
Considering the difference between $\frac{\delta E}{\delta r_i}$ where $1 < i \leq N$ and $\frac{\delta E}{\delta r_1}$ noting that they are of the same class.
\begin{equation*}
\begin{aligned}
    \bar{p}\left(\frac{\delta E}{\delta r_1} - \frac{\delta E}{\delta r_i} \right) = & 
     \left(p_1 - p_i\right)(1-E) - \underset{j \leq N}{\sum} r_j(p_1+p_j-p_i-p_j) + \underset{j > N}{\sum} r_j(p_1-p_j-p_i+p_j) \\
     = &
     \left(p_1 - p_i\right)(1-E) - \underset{j \leq N}{\sum} r_j(p_1-p_i) + \underset{j > N}{\sum} r_j(p_1-p_i) \\ 
     = &
     (p_1 - p_i) \left(1 - E - \underset{j \leq N}{\sum} r_j + \underset{j > N}{\sum}r_j\right) \\
     = &
     (p_1 - p_i) \left(1 - \underset{j \leq N}{\sum} r_j\left(1+\frac{p_i-\bar{p}}{\bar{p}}\right) + \underset{j > N}{\sum}r_j\left(1-\frac{\bar{p}-p_i}{\bar{p}} \right) \right) \\
     = &
     (p_1 - p_i) \left(1 - \underset{j \leq N}{\sum}\frac{r_i p_i}{\bar{p}} + \underset{j > N}{\sum}\frac{r_j p_i}{\bar{p}} \right) \geq 0\\
\end{aligned}
\end{equation*}
as $p_1 \geq p_i$, $\underset{j \leq N}{\sum}\frac{r_i p_i}{\bar{p}} \leq \underset{j}{\sum}\frac{r_i p_i}{\bar{p}} = 1$ and $\underset{j > N}{\sum}\frac{r_j p_i}{\bar{p}} \geq 0$.
Thus, any distribution of weight $r_i, i \leq N$ towards $r_1$ results in equal or higher values of $E$ regardless of $N$. A similar calculation can be done for distributing $r_i, i > N$ towards $r_m$. Therefore, the maximum must occur when all the weight is centralised in these two labels, i.e. $r_1+r_m = 1$ and $r_i = 0$ else. This simplifies our formula to:
\begin{equation*}
\begin{aligned}
    E = & \frac{r_1 \left(
    p_1 - r_1 p_1 - (1-r_1)p_m \right) + (1-r_1) \left(
    r_1 p_1 + (1-r_1)p_m - p_m \right)}{r_1 p_1 + (1-r_1)p_m}  \\
    = & \frac{r_1 \left(
    1 - r_1 - (1-r_1)p_m \right) + (1-r_1) \left(
    r_1 + (1-r_1)p_m - p_m \right)}{r_1 + (1-r_1)p_m}  \\
    =  & \frac{2 r_1 (1-r_1) (1-p_m)}{r_1+(1-r_1)p_m} \\
\end{aligned}
\end{equation*}
differentiating this with respect to $r_1$ yields:
\begin{equation*}
\begin{aligned}
\frac{dE}{dr_1} = & \frac{2(1-p_m)(1-2r_1)(r_1+(1-r_1)p_m) - 2r_1(1-r_1)(1-p_m)^2}{(r_1+(1-r_1)p_m)^2} \\
0 = & 2(1-p_m)(1-2r_1)(r_1+(1-r_1)p_m) - 2r_1(1-r_1)(1-p_m)^2 \\
0 = & (1-2r_1)(r_1+(1-r_1)p_m) - r_1(1-r_1)(1-p_m) \\
0 = & (-2(1-p_m)+1-p_m) r_1^2 + (1-p_m-2p_m-1+p_m)r_1 + p_m \\
0 = & (1-p_m) r_1^2 + 2p_m r_1 - p_m \\
r_1 = & \frac{-2p_m + \sqrt{4p_m^2+4(1-p_m)p_m} }{2(1-p_m)} \mbox{ is the only non-negative root}\\
r_1 = & \frac{\sqrt{p_m} - p_m }{1-p_m} \;\;\;\left(\mbox{and therefore } r_m = \frac{1-\sqrt{p_m}}{1-p_m} \right)\\
\end{aligned}
\end{equation*}
This gives us the desired distribution given $p_1 \neq \min p_m$. In the case where $\max_i p_i = \min_i p_i$, note that all values of $p_i$ are equal to $\bar{p}$ and thus $E=0$ regardless of $r$ so the desired distribution can still be considered maximal. 

Substituting these values back into the above formula allows us to compute the maximum value of $E$ as:
\begin{equation*}
\begin{aligned}
    E = & \frac{2 r_1 (1-r_1) (1-p_m)}{r_1+(1-r_1)p_m} \\
     = & \frac{2 \frac{\sqrt{p_m}-p_m}{1-p_m} \frac{1-\sqrt{p_m}}{1-p_m} (1-p_m)}{\frac{\sqrt{p_m}-p_m}{1-p_m}+\frac{1-\sqrt{p_m}}{1-p_m}p_m} \\
     = & \frac{2 (\sqrt{p_m}-p_m)(1-\sqrt{p_m})}{\sqrt{p_m}-p_m+(1-\sqrt{p_m})p_m} \\
     = & \frac{2 (1-\sqrt{p_m})^2}{1-p_m} \\
\end{aligned}
\end{equation*}

\hfill $\blacksquare$

%% file: A2_plbp_2.tex
\section{Supplement for Equation \ref{eq:lbp_e2} for Binary Models (Theorem \ref{the:main_conv} the $p_\infty$ induced Matrix Norm for Parallel Loopy Belief Propagation in Binary Models)}\label{app:lbp_e2}
Similar to the previous appendix, the key to Equation \ref{eq:lbp_e2} is finding the maximum  of:
\begin{equation*}
\begin{aligned}
    E = r_1 \sum_i \left|
    \frac{p_{1,i} - \sum_j r_j p_{i,j}}{\sum_j r_j p_{i,j}} \right|
\end{aligned}
\end{equation*}
where $p_{i,i}=1$, $p_{i,j} = p_{j,i}$ and $0 \leq p_{i,j} \leq 1$ subject to the constraints that $r_i \geq 0$ and $\sum_ir_i = 1$. For a binary model, that simplifies as:
\begin{equation*}
\begin{aligned}
    E & = r_1 \sum_i \left|\frac{p_{1,i} - \sum_j r_j p_{i,j}}{\sum_j r_j p_{i,j}} \right| \\
    & = r_1 \left( \left|\frac{(1-r_1)p_{1,1} - (1-r_1) p_{1,2}}{r_1 p_{1,1} + (1-r_1)p_{1,2}} \right| + \left|\frac{(1-r_1)p_{1,2} - (1-r_1) p_{2,2}}{r_1 p_{1,2} + (1-r_1)p_{2,2}} \right| \right) \\
    & = r_1 \left( \frac{(1-r_1)(1 - p_{1,2})}{r_1 + (1-r_1)p_{1,2}} + \frac{(1-r_1)(1-p_{1,2})}{r_1 p_{1,2} + (1-r_1)} \right) \\
    & = \frac{r_1 (1-r_1)(1-p_{1,2})(1+p_{1,2})}{(r_1^2+(1-r)^2)p_{1,2} + (1-r_1)r_1(p_{1,2}^2+1)} \\
\end{aligned}
\end{equation*}
Differentiating this with respect to $r_1$ gives us:

\begin{equation*}
\begin{aligned}
    \frac{\delta E}{\delta r_1} & = \frac{p_{1,2}(p_{1,2}^2 - 1) (2r_1 - 1)}{\left(p_{1,2}^2 (1-r_1) r_1 - p_{1,2} (-2 r_1^2 + 2 r_1 - 1) - (r_1 - 1) r_1\right)^2}\\
    0 & = p_{1,2}(p_{1,2}^2 - 1) (2r_1 - 1) \\
    r_1 & = 0.5 \mbox{ and  therefore } 1-r_1 = 0.5
\end{aligned}
\end{equation*}
Substituting back into the original equation yields:
\begin{equation*}
    E \leq \frac{0.25 (1-p_{1,2})(1+p_{1,2})}{0.5p_{1,2} + 0.25(p_{1,2}^2+1)} = \frac{1-p_{1,2}}{1 + p_{1,2}}
\end{equation*}
\hfill $\blacksquare$

%% file: A3_mfi_1.tex
\section{Mean-field estimation derivation}\label{app:mf_deriv}

As stated earlier, our overall equation of interest is:
\begin{equation*}
    E_{[\psi,u]} \hspace{-0.5mm}=\hspace{-0.5mm} \sum_L \hspace{-1mm}\left(\sum_x \hspace{-0.6mm}\psi_L(x) u_L(x) - \hspace{-3mm}\sum_{e=(x,y)}\hspace{-2.6mm} \psi_L(e)|u_L(x)-u_L(y)| \right)
\end{equation*}
with our goal being to create an approximate equation $\tilde{E}$ which contains only unary terms via the assumption of a static \textit{mean-field} which we can update. The common way to do this is to consider such a field $m$ which has similar properties to $u$ but relaxed:
\begin{equation*}
    m_L(x) \in [0,1] \mbox{, } u_P(x) = \mbox{\hspace{-3mm}}\sum_{L\in P.C^*\cap\mathcal{L}}\mbox{\hspace{-3mm}}m_L(x) \text{, and } m_S(x)=1
\end{equation*}
with the goal of $m$ being to approximate $u$ outside of an error term $d = u-m$ which is considered free to optimise/vary as needed. The first steps are to change the summation over the edges to a summation over variables (and take into account double-counting), change the summation over all labels to one over only leaf-labels, and then to transform the interaction term into an equivalent one that only only contains bilinear terms with respect to each $u$ variable. This gives us the equivalent equation:
\begin{equation*}
    E_{[\psi,u]} \hspace{-0.5mm}= \sum_{L\in\mathcal{L}} \sum_x\hspace{-1mm}\left( \hspace{-0.6mm}\psi_L(x) u_L(x) - \frac{1}{2}\sum_{y\in\mathcal{N}(x)}\sum_{L'\in\mathcal{L}/\{L\}}\left(\sum_{P \in L.P^* \triangle L'.P^*}\psi_P((x,y))\right)u_L(x)u_{L'}(y) \right)
\end{equation*}
where $\triangle$ is the disjoint union operator. (Note the double counting with respect to variable, $x$ and $y$ is taken into account but there can be no double counting with respect to the labels $L$ and $L'$ as each variable can only be assigned one leaf label, hence the $\nicefrac{1}{2}$ term for the interaction.)

We can now reformulate this equation with respect to $d$ as:
\begin{equation*}
\begin{aligned}
    E_{[\psi,u]} \hspace{-0.5mm}= & \underset{ \underset{L\in\mathcal{L}}{x}}{\sum}\hspace{-1mm}\left( \hspace{-0.6mm}\psi_L(x) (d_L(x)+m_L(x)) - \frac{1}{2}\sum_{y\in\mathcal{N}(x)}\sum_{L'\in\mathcal{L}/\{L\}}\left(\sum_{P \in L.P^* \triangle L'.P^*}\psi_P((x,y))\right)(d_L(x)+m_L(x))(d_{L'}(y)+m_{L'}(y)) \right) \\
    = & \underset{ \underset{L\in\mathcal{L}}{x}}{\sum}\hspace{-1mm}\left( \hspace{-0.6mm}\psi_L(x) d_L(x) - \frac{1}{2}\underset{ \underset{L'\in\mathcal{L}/\{L\}}{y\in\mathcal{N}(x)}}{\sum}\left(\sum_{P \in L.P^* \triangle L'.P^*}\psi_P((x,y))\right)(d_L(x)m_{L'}(y)+d_{L'}(y)m_L(x))+d_L(x)d_{L'}(y)) \right) + c
\end{aligned}
\end{equation*}
where $c$ is a constant with respect to $d$. The next step in mean-field iteration theory is to discard the interaction between error terms as approximately zero (when averaged over the entire field of label-variable pairs). This yields the following approximation:
\begin{equation*}
\begin{aligned}
    E_{[\psi,u]} \hspace{-0.5mm}\approx & \underset{ \underset{L\in\mathcal{L}}{x}}{\sum}\hspace{-1mm}\left( \hspace{-0.6mm}\psi_L(x) d_L(x) - \frac{1}{2}\underset{ \underset{L'\in\mathcal{L}/\{L\}}{y\in\mathcal{N}(x)}}{\sum}\left(\sum_{P \in L.P^* \triangle L'.P^*}\psi_P((x,y))\right)(d_L(x)m_{L'}(y)+d_{L'}(y)m_L(x))) \right) + c \\
    = & \underset{ \underset{L\in\mathcal{L}}{x}}{\sum}\hspace{-1mm}\left( \hspace{-0.6mm}\psi_L(x) d_L(x) - \underset{ \underset{L'\in\mathcal{L}/\{L\}}{y\in\mathcal{N}(x)}}{\sum}\left(\sum_{P \in L.P^* \triangle L'.P^*}\psi_P((x,y))\right)d_L(x)m_{L'}(y) \right) + c \\
    = & \underset{ \underset{L\in\mathcal{L}}{x}}{\sum}\hspace{-1mm}\left( \hspace{-0.6mm}\psi_L(x) - \underset{ \underset{L'\in\mathcal{L}/\{L\}}{y\in\mathcal{N}(x)}}{\sum}\left(\sum_{P \in L.P^* \triangle L'.P^*}\psi_P((x,y))\right)m_{L'}(y) \right)d_L(x) + c
\end{aligned}
\end{equation*}
As this estimate now contains only unary terms, we can replace $d$ with $u$ again, and determine the energy associated with assigning a particular label $L$ to a particular location $x$ as
\begin{equation*}
\begin{aligned}
    \tilde{E}(L,x) = & \left(\psi_L(x) - \underset{ \underset{L'\in\mathcal{L}/\{L\}}{y\in\mathcal{N}(x)}}{\sum}\left(\sum_{P \in L.P^* \triangle L'.P^*}\psi_P((x,y))\right)m_{L'}(y) \right)(1-m_L(x)) \\
    &- \sum_{L''\in\mathcal{L}/\{L\}} \left(\psi_{L''}(x) - \underset{ \underset{L'\in\mathcal{L}/\{L''\}}{y\in\mathcal{N}(x)}}{\sum}\left(\sum_{P \in L''.P^* \triangle L'.P^*}\psi_P((x,y))\right)m_{L'}(y) \right)m_{L''}(x) \\
    = & \psi_L(x) - \underset{ \underset{L'\in\mathcal{L}/\{L\}}{y\in\mathcal{N}(x)}}{\sum}\left(\sum_{P \in L.P^* \triangle L'.P^*}\psi_P((x,y))\right)m_{L'}(y) \\
    &- \sum_{L''\in\mathcal{L}} \left(\psi_{L''}(x) - \underset{ \underset{L'\in\mathcal{L}/\{L''\}}{y\in\mathcal{N}(x)}}{\sum}\left(\sum_{P \in L''.P^* \triangle L'.P^*}\psi_P((x,y))\right)m_{L'}(y) \right)m_{L''}(x) \\
    = & \psi_L(x) - \underset{ \underset{L'\in\mathcal{L}/\{L\}}{y\in\mathcal{N}(x)}}{\sum}\left(\sum_{P \in L.P^* \triangle L'.P^*}\psi_P((x,y))\right)m_{L'}(y)  + c\\
\end{aligned}
\end{equation*}
where $c$ is a constant with respect to $L$ and can thus be discarded since it does not affect this approximation of the probability distribution of $u_L(x)$. This can be further simplified as:
\begin{equation*}
\begin{aligned}
    \tilde{E}(L,x) = & \psi_L(x) - \underset{ \underset{L'\in\mathcal{L}}{y\in\mathcal{N}(x)}}{\sum}\left(\sum_{P \in L.P^* \triangle L'.P^*}\psi_P((x,y))\right)m_{L'}(y) \mbox{ noting $A\triangle A=\emptyset$} \\
    = & \psi_L(x) - \underset{y\in\mathcal{N}(x)}{\sum}\left(\underset{P\in L.P^* }{\sum}\psi_P((x,y))\underset{L'\not\in P.C^* }{\sum}m_{L'}(x) + \underset{P\not\in L.P^*}{\sum}\psi_P((x,y))\underset{L'\in P.C^* }{\sum}m_{L'}(x) \right) \\
    = & \psi_L(x) - \underset{y\in\mathcal{N}(x)}{\sum}\left(\underset{P\in L.P^* }{\sum}\psi_P((x,y))(1-m_P) + \underset{P\not\in L.P^* }{\sum}\psi_P((x,y))m_P\right) \\
    = & \psi_L(x) - \underset{y\in\mathcal{N}(x)}{\sum}\left(\underset{P\in L.P^* }{\sum}\psi_P((x,y))(1-2m_P) + \underset{P}{\sum}\psi_P((x,y))m_P\right)\\
    = & \psi_L(x) + \underset{y\in\mathcal{N}(x)}{\sum}\underset{P\in L.P^* }{\sum}\psi_P((x,y))(2 m_P(x)-1) + c
\end{aligned}
\end{equation*}
where $c$ is again a constant with respect to $L$, noting this gives us the energy estimate from Equation \ref{eq:meanenergy} barring differences in indexing.

The question then what should the mean-field, $m$, be. The obvious answer would be the marginal of the current estimate of the probability distribution, being easily computed from $\tilde{E}$ and updated iteratively, leading to the mean-field iteration estimator.

To estimate joint marginal probabilities between a specific pair of adjacent nodes, we can do the same mean-field estimation, although in practice, it would be more computationally efficient to use the vertex-wise mean field estimates to estimate both the vertex-wise marginals and the edge-wise joint marginals simultaneously at a final step.

%% file: A4_mfi_2.tex
\section{Supplement for Equation \ref{eq:mfi_eq}}\label{app:mfi_eq}
The main part of Eq. \ref{eq:mfi_eq} relies on the fact that:
$$\underset{m_{[n-1,m_0]}(x',L_l)}{
\mbox{argmax}}\sum_{L_j}m_{[n-1,m_0]}(x',L_j)\left| \left(\sum_{P \in L_1.P^* \cap L_j.P^*}\psi_P(e)\right) - \sum_{L_k}\left( m_{[n-1,m_0]}(x',L_k) \sum_{P \in L_1.P^* \cap L_k.P^*}\psi_P(e) \right) \right|$$
takes on the configuration
\begin{equation*}
    \begin{aligned}
        m_{[n-1,m_0]}(x',L_1) = & \;\nicefrac{1}{2} \mbox{ noting } L_1 = \underset{L_l}{\mbox{argmax}}\;\left(\underset{P\in L_1.P^*\cap L_l.P^*}{\sum}\psi_P((x,x'))\right)\\
        m_{[n-1,m_0]}(x',L_m) = &\; \nicefrac{1}{2} \mbox{ where } L_m = \underset{L_l}{\mbox{argmin}}\;\left(\underset{P\in L_1.P^*\cap L_l.P^*}{\sum}\psi_P((x,x'))\right) \\
        m_{[n-1,m_0]}(x',L_l) = &\;  0 \mbox{ else}
    \end{aligned}
\end{equation*}
For the sake of notation, let us define:
\begin{equation*}
\begin{aligned}
    p_i &= \left(\underset{P\in L_1.P^*\cap L_l.P^*}{\sum}\psi_P((x,x'))\right) \mbox{ ordering $L_l$ such that } p_1 \geq p_2 \geq ... \geq p_m \\
    r_i &= m_{[n-1,m_0]}(x',L_l) \mbox{ in the same order} \\
    \bar{p} &= \sum_i r_i p_i
\end{aligned}
\end{equation*}
and note that $p_i > 0$ by the set-up of our model, and note that $r_i \geq 0$, and $\sum_i r_i = 1$ by the normalisation performed in the forward pass.

Without loss of generality, assume $p_1 \neq p_m$. For any given $r$, we can partition the our labels into two non-empty sets, specifically between those where $r_i \geq \bar{p}$ and those where $r_i < \bar{p}$, at a particular index $N$.
\begin{equation*}
\begin{aligned}
    E = & \sum_{i \leq N} r_i \left( 
    p_i - \bar{p} \right) + \sum_{i > N} r_i \left(
    \bar{p} - p_i \right) \\
    = & \left(\sum_{i \leq N} \left(
    r_i p_i - \sum_{j \leq N} r_i r_j p_j - \sum_{j > N} r_i r_j p_j \right) + \sum_{i > N} \left(
    \sum_{j \leq N} r_i r_j p_j + \sum_{j > N} r_i r_j p_j - r_i p_i \right)\right) \\
\end{aligned}
\end{equation*}
By differentiating this with respect to $r_i$, we get two classes of derivatives:
\begin{equation*}
\begin{aligned}
\frac{\delta E}{\delta r_i} = & \left\{ \begin{array}{ll}
     \left((1-2r_i)p_i - \underset{j \leq N, j\neq i}{\sum} r_j(p_i+p_j) + \underset{j > N}{\sum} r_j(p_i-p_j)\right) & \mbox{ if } i \leq N \\
     \left((2r_i-1)p_i + \underset{j > N, j\neq i}{\sum} r_j(p_i+p_j) - \underset{j \leq N}{\sum} r_j(p_i-p_j)\right) & \mbox{ if } i > N
\end{array}\right. \\
\frac{\delta E}{\delta r_i} = & \left\{ \begin{array}{ll}
     p_i - \underset{j \leq N}{\sum} r_j(p_i+p_j) + \underset{j > N}{\sum} r_j(p_i-p_j) & \mbox{ if } i \leq N \\
     p_i + \underset{j > N}{\sum} r_j(p_i+p_j) - \underset{j \leq N}{\sum} r_j(p_i-p_j) & \mbox{ if } i > N
\end{array}\right.
\end{aligned}
\end{equation*}
Considering the difference between $\frac{\delta E}{\delta r_i}$ where $1 < i \leq N$ and $\frac{\delta E}{\delta r_1}$ noting that they are of the same class.
\begin{equation*}
\begin{aligned}
    \frac{\delta E}{\delta r_1} - \frac{\delta E}{\delta r_i} = & 
     p_1 - p_i - \underset{j \leq N}{\sum} r_j(p_1+p_j-p_i-p_j) + \underset{j > N}{\sum} r_j(p_1-p_j-p_i+p_j) \\
     = &
     p_1 - p_i - \underset{j \leq N}{\sum} r_j(p_1-p_i) + \underset{j > N}{\sum} r_j(p_1-p_i) \\ 
     = &
     (p_1 - p_i) \left(1 - \underset{j \leq N}{\sum} r_j + \underset{j > N}{\sum}r_j\right) \\
     = &
     (p_1 - p_i) \left(2\underset{j > N}{\sum}r_j\right) \geq 0
\end{aligned}
\end{equation*}
Thus, any distribution of weight $r_i, i \leq N$ towards $r_1$ results in equal or higher values of $E$ regardless of $N$. A similar calculation can be done for distributing $r_i, i > N$ towards $r_n$. Therefore, the maximum must occur when all the weight is centralised in these two labels, i.e. $r_1+r_n = 1$ and $r_i = 0$ else. This simplifies our formula to:
\begin{equation*}
\begin{aligned}
    E = & r_1 \left(
    p_1 - r_1 p_1 - (1-r_1)p_m \right) + (1-r_1) \left(
    r_1 p_1 + (1-r_1)p_m - p_m \right)\\
    = & r_1 (1-r_1)(p_1-p_m) + (1-r_1)r_1(p_1-p_m)\\
    = & 2r_1 (1-r_1)(p_1-p_m)
\end{aligned}
\end{equation*}
which has an obvious maximum value at $r_1=\nicefrac{1}{2}$.
This gives us the desired distribution given $p_1 \neq \min p_m$. In the case where $\max_i p_i = \min_i p_i$, note that all values of $p_i$ are equal and equal to $\bar{p}$ and thus $E=0$ regardless of the value of $r$ so $r_1=r_m=\nicefrac{1}{2}$ can still be considered maximal. \hfill $\blacksquare$